\def\DD{\mathbf{\Delta}}
\def\bs{\boldsymbol}
\def\AA{\boldsymbol{A}}
\def\J{\mathcal{J}}
\def\L2L{L^2(\mathcal{J}) \rightarrow L^2(\mathcal{J})}
\newtheorem{theorem}{Theorem}[section]
\newtheorem{definition}[theorem]{Definition}
\newtheorem{proposition}[theorem]{Proposition}
\newtheorem{remark}[theorem]{Remark}
\newtheorem{lemma}[theorem]{Lemma}
\newtheorem{example}[theorem]{Example}
\newtheorem{approximation problem}[theorem]{Approximation problem}
\newtheorem{claim}[theorem]{Claim}
\title{Transferability of Graph Neural Networks: an Extended Graphon Approach}
\date{}
\newcommand*{\email}[1]{\href{mailto:#1}{\nolinkurl{#1}} } 
\author{Sohir Maskey\thanks{Department of Mathematics, LMU Munich, 80333 Munich, Germany
  (\email{maskey@math.lmu.de} , \email{kutyniok@math.lmu.de}).}
\and Ron Levie\thanks{Faculty of Mathematics, Technion - Israel Institute of Technology
  ( 
  \email{levieron@technion.ac.il}). }
\and Gitta Kutyniok\footnotemark[1] \thanks{Department of Physics and Technology, University of Tromsø, 9019 Tromsø, Norway}
  }
\begin{document}

\maketitle

\begin{abstract}
We study spectral graph convolutional neural networks (GCNNs), where filters are defined as continuous functions of the graph shift operator (GSO) through functional calculus. 
 A spectral GCNN is not tailored to one specific graph and can be transferred between different graphs.
It is hence important to study the \emph{GCNN transferability}:
the capacity of the network to have approximately the same repercussion on different graphs that represent the same phenomenon. Transferability ensures that GCNNs trained on certain graphs generalize if the graphs in the test set represent the same phenomena as the graphs in the training set. 

In this paper, we consider a model of transferability based on graphon analysis. Graphons are limit objects of graphs, and, in the graph paradigm, two graphs represent the same phenomenon if both approximate the same graphon.
Our main contributions can be summarized as follows: 1) we prove that any fixed GCNN with continuous filters is transferable under graphs that approximate the same graphon, 2) we prove transferability for graphs that approximate  unbounded graphon shift operators, which are defined in this paper,  and, 
3) we obtain non-asymptotic approximation results, proving linear stability of GCNNs. This extends current state-of-the-art results which show asymptotic transferability for polynomial filters under graphs that approximate bounded graphons.

\end{abstract}



\section{Introduction}
\label{intro}
CNNs~\cite{LecunCNNintro} have shown great performances in a variety of tasks on data defined on Euclidean domains. 
In recent years, there has been a growing interest in machine learning on non-Euclidean data, and especially,  data represented by graphs, leading to 
graph convolutional neural networks (GCNNs). 
%
Typically, GCNNs fall under two categories: spatial and spectral methods. Spatial-based approaches define graph convolutions by information propagation in the spatial domain. Spectral-based methods define convolutions in the frequency domain, taking inspiration from the classical Euclidean convolution theorem as follows.  First, a frequency domain is defined for graphs by generalizing the Fourier transform on them. Then,  graph convolutions are defined as element-wise multiplication in the graph frequency domain. In this paper, we study the \textit{functional calculus approach} \cite{defferrard2017convolutional, kipf2017semisupervised, levie2018cayleynets, armaFilters} for filtering, which bridges the gap between the spatial and spectral  methods.
 Even though the functional calculus GCNNs are defined as spectral methods, they can be implemented directly in the spatial domain by  applying filters on the graph shift operator (GSO).
 For a survey on spectral and spatial graph neural networks, we refer to~\cite{Bronstein_2017} and~\cite{Wu_2021}.

In the functional calculus approach, the filters of the GCNN are defined via scalar functions $h:\mathbb{R}\rightarrow\mathbb{R}$ applied on the GSO $\DD$ via functional calculus, namely, as $h(\DD)$ (see Subsection~\ref{subsection:GraphFilters} for more details). The functions $h$ are parameterized by some trainable parameters. For example, when $h$ are polynomials or rational functions, the trainable parameters are the coefficients of $h$~\cite{defferrard2017convolutional, levie2018cayleynets}.  Any GCNN is defined by its underlying filters $h$, which are not linked to a specific graph. This means that an individual GCNN can be used to process any signal on any graph by plugging different GSOs $\DD$, corresponding to different graphs, in the filters $h$ of the GCNN. 

 In \emph{multi-graph settings}, the datasets consist of a set of signals defined on different graphs.  
 Since the test set may consist of different graphs than the training set, the GCNN should generalize to graphs unseen during training. To guarantee a good generalization capability, it is desirable to show that the GCNN is \emph{transferable} between graphs. In~\cite{LevieTransfSpectralGraphFiltersLong}, the notion of GCNN transferability was described as follows: a GCNN is transferable, if, whenever two graphs represent the same phenomenon, the GCNN has approximately the same repercussion on both graphs.
 Transferability promotes generalization, since  we may encounter  a graph that realizes some phenomenon in the training set, and then encounter a different graph representing the same phenomenon in the test set. Having learned how to cope with the first realization of the phenomenon, the network handles its other versions well. For example, in a dataset of mesh graphs, surfaces are the underlying phenomena. If a filter is learned on one mesh-graph, discretizing a surface, it is desirable to show that the same filter has approximately the same effect on a different mesh-graph, discretizing the same surface.

 Transferability is analyzed formally using a modeling paradigm that was introduced in~\cite{LevieTransfSpectralGraphFiltersLong}. First, we require a mathematical model of ``two graphs representing the same phenomenon.'' Second, a notion that a fixed GCNN ``has approximately the same repercussion on two graphs'' is needed. Equipped with two such mathematical definitions, one can formulate rigorously and prove transferability.  
  Some examples of transferability frameworks are the stability approach~\cite{LevieTransfSpectralGraphFiltersShort, kenlayPolSpecFilters, kenlay2021interpretable, kenlay2021stability, Gama_2020}, the sampling approach~\cite{LevieTransfSpectralGraphFiltersLong, keriven2020convergence}, and the graphon approach~\cite{RuizI}. Each framework differs in the way it formalizes the transferability paradigm. In Subsection~\ref{subsec:survey} we survey transferability frameworks from the literature. In this work, we extend and generalize the graphon transferability approach.

\subsection{Comparison with Other Works}
\label{subsec:comparison}
The approach we consider in this paper for modelling graphs that represent the same phenomenon is called the \emph{graphon approach}. \emph{Graphons}~\cite{lovasz2012} are limit objects of sequences of weighted graphs. In some sense, a graphon can be thought of as an adjacency matrix, or GSO, of a graph with a continuous node index set. The continuous node index set is typically taken to be $[0,1]$, and a graphon is formally a function $W:[0,1]^2\rightarrow[0,1]$.  The value $W(x,y)$ is interpreted as the probability of having an edge between the nodes $x$ and $y$, or as the weight of the edge $(x,y)$. 
To model graphs that represent the same phenomenon, the graphon approach relies on the notion of homomorphism density. Given a simple graph $F$, informally, the \emph{homomorphism density} $t(F,G)$  is the probability that $F$ occurs as a substructure of the graph $G$ (see Definition~\ref{def:graphHomDensity}  for a precise formulation). Similarly, the homomorphism density $t(F,W)$ is also defined for graphons (see Definition~\ref{def:homdensityGraphon} and \cite{lovasz2012}). In the graphon approach, two graphs $G_1$ and $G_2$ represent the same phenomenon if for every simple graph $F$ we have $t(F,G_1)\approx t(F,G_2)$. The sequence of graphs $(G_n)_n$ converges to the graphon $W$ if  for every simple graph $F$, $t(F,G_n)\rightarrow t(F,W)$.  

In this subsection, we compare our transferability results with other related transferability results \cite{keriven2020convergence, 9362328, 9287695,RuizI, ruiz2021graph}, which also use the graphon approach. 
The analysis of transferability of GCNNs in the graphon approach in the papers \cite{keriven2020convergence, 9362328,9287695, RuizI, ruiz2021graph}  consists of two steps: 1) given a sequence of graphs $G_n$ and a graphon $W$,  first prove that  $G_n$ converges to $W$, and 2) given a filter $h$ and a sequence of graphs $G_n$ converging to a graphon $W$, prove that the graph filter $h(G_n)$ converges to the graphon filter $h(W)$ (see Definition  \ref{def:graphFilter} for more detail).  The first step is treated already in other works, e.g., \cite{Borgs2007graph,lovasz2012,  graphonsampling_klopp} show that sampling random graphs from a graphon with increasing number of nodes leads to convergence, and   \cite{ruiz2021graph} shows that graphs that are sampled in a regular grid from the graphon converge to the graphon.  Hence, in this paper we only focus on the second step of the analysis.


We have already seen that analyzing transferability in the graphon approach consists of performing an approximation analysis. For this, one can consider various types of assumptions on the object of study. We call \emph{topological assumptions} any construction that depends on the topology or metric of the underlying space. For example, assuming that a function $f:[0,1]^2 \to \mathbb{R}$ is Lipschitz means that we treat the space $[0,1]$ as a metric space. A \emph{measure theoretic assumption} treats the underlying space as a measure space. For example, assuming that a function $f:[0,1]^2\to \mathbb{R}$ is only measurable or square-integrable implies that we treat the space $[0,1]$ only as a measure space, not as a topological. 
Since there is a natural measure space associated with topological spaces, namely its Borel space, 
 assuming only measure theoretic assumption gives more general results than topological assumptions.

To analyze the transferability of GCNN filters in the graphon approach it is important to not assume any topological constraints on the graphon. By this, the graphon approach relies purely on measure-theoretical objects defined on the atom-free standard probability space $[0,1]$ equipped with the Lebesgue measure.  
A well-known fact is that every atom-free standard space is isomorphic up to a measure-preserving isomorphism.  Standard probability spaces cover a large class of possible spaces, e.g., $\mathbb{R}^n, \mathbb{C}^n$, every Lie group with countable components, every Polish space and standard Borel space with atom-free  probability measure. In particular, since there is no notion of dimension in measure theory, the space $[0,1]$ is equivalent as a probability space to a rich class of probability spaces over topological spaces of any dimension. Hence, if   no topological constraints are assumed for the graphon and the graphon signal, the transferability results  hold for all atom-free standard  probability space.  


The related works \cite{9362328, 9287695,RuizI} consider only measure-theoretical assumptions on the graphon but show transferability without giving rates of converges. The papers  \cite{keriven2020convergence, ruiz2021graph} show transferability with  rates of converges, but impose strong topological constraints on the graphon.
Our work is the first to derive transferability bounds with rate of convergence, while treating graphon purely in a measure theoretic setting, which makes our results very general.

In the following, we recall in more detail the related works \cite{9362328, 9287695,RuizI, keriven2020convergence, ruiz2020graphon, ruiz2021graph} and
  compare them  to our results.
The authors  in \cite{9362328,9287695, RuizI} consider simple graphs that are represented by their adjacency matrix. It is shown that polynomial filters are asymptotically transferable  for a sequence of graphs that converges to a graphon with weights in $[0,1]$. The assumptions on the considered graphons are purely measure-theoretical, and thus generalize to any atom-free standard probability space. However, the approach in \cite{9362328,9287695, RuizI} does not give any  rates of convergence for the transferability results. They are moreover restricted only to polynomial filters, while filters in practice are often non-polynomial  \cite{levie2018cayleynets, armaFilters}. We generalize these results by allowing graphs with arbitrary GSOs, having possibly negative weights, which includes  graph Laplacians. We also allow general unbounded graphons, which we define in this paper in Definition \ref{def:unboundedGSRO}, and any  continuous filter (not just polynomials). Furthermore, we obtain convergence rates while only considering a pure measure theoretic construction (no topological constraints).

Too derive a rate of convergence,  \cite{keriven2020convergence,ruiz2020graphon, ruiz2021graph}  consider    a framework in which a sequence of graphs is sampled from a template graphon. The authors show  that highly regular filters (polynomials or analytic functions) applied to such a sequence of graphs converge to the filter applied to  template graphon. This result   holds only under strong topological constraints on the limit graphon and the graphon signal (e.g., Lipschitz continuity). Thus, the   transferability results (that include rates of convergence) in \cite{keriven2020convergence,ruiz2020graphon, ruiz2021graph} do not generalize to all atom-free standard probability spaces and to general measurable graphon (that can for example be discontinuous everywhere). 

In contrast, we prove transferability with rates of convergence  while assuming no  topological constraints on the graphon and on the graphon signal. In particular, our theory then applies to underlying spaces of any dimension and  irregular graphons that need only be measurable  and can even be discontinuous everywhere.  While other papers use basic linear algebra to derive their results,  we generalize and improve the theory by using tools from graphon theory and functional analysis, in particular, functional calculus and  operator Lipschitz functions \cite{OLfcts}. 
  


\subsection{Survey of GCNN Transferability Approaches} 
\label{subsec:survey}
In this subsection, we survey different approaches for defining ``two graphs represent the same phenomenon'' and a fixed filter having ``approximately the same repercussion'' on two graphs.
\subsubsection*{Stability Under Small Graph Perturbations} 
This framework allows studying the transferability of graphs sharing the same node set. Here, two graphs represent the same phenomenon if their GSOs are small  perturbations  of each other. The notion of ``small perturbation'' is modeled differently in different works. Moreover, the change in repercussion of the filter due to the perturbation of the GSO is also defined in various ways.
 The first work that considered this framework was  by Levie et al. in~\cite{LevieTransfSpectralGraphFiltersShort}. There, the perturbation of the GSO is defined in the induced $l_2$-norm sense, between the GSO $\DD_1$ and its perturbed version $\DD_2$. The repercussion error of the filter was also defined by the induced $l_2$-norm.
 Given a function $h$ from the so-called Cayley smoothness class, and assuming that $\|\DD_1 - \DD_2\|_2 \leq 1$, the following stability result was proven
\begin{equation}
\label{eq:LevieContribution}
\|h(\DD_1) - h(\DD_2)\|_2 \leq C\|\DD_1 - \DD_2\|_2,
\end{equation}
where the constant $C>0$ depends on $h$ and $\|\DD_1\|_2$.

Similar results were also proven in~\cite{kenlay2021interpretable}, where graphs are assumed to be simple. The authors show that polynomial and low-pass filters are stable to arbitrary perturbations, in the sense that $\|\DD_1 - \DD_2\|_2\leq 1$ does not need to be assumed in~(\ref{eq:LevieContribution}).
Furthermore, if the structural change of the graph consists of adding and deleting edges, the authors constrain the GSO error $\|\DD_1 - \DD_2\|_2$  by interpretable properties directly related to this structural change.
In~\cite{Gama_2020}, Gama et al. consider weighted graphs with GSOs. The authors showed stability of Lipschitz continuous polynomial filters in a similar setting as~(\ref{eq:LevieContribution}). The difference with respect to the previous papers is that the measure of graph perturbation is the norm distance, minimized over all permutations of the node set of $\DD_2$.

Closely related to stability of GCNNs,~\cite{ NEURIPS2019_e2f374c3, pmlr-v119-bojchevski20a} also studied the robustness of some GCNNs on simple graphs. The authors provided certificates that guarantee that GCNNs are stable in a semi-supervised graph or node classification task. The authors guarantee that the classification of nodes and graphs remains correct  while structurally perturbing the input graph or perturbing the nodes' input features. The structural perturbations are the deletion or addition of edges.
The repercussion error of a fixed GCNN is defined either pointwise on a single node  of interest, or the classification outcome for the whole graph.
The definition of the GCNN in these papers was restricted to a subclass of spectral methods.

\subsubsection*{Transferability Under Sampling From Latent Spaces}
A major drawback of analyzing transferability of GCNNs under GSO perturbations is the assumption that the  graphs share the same node set.
In contrast, in the sampling approach to transferability,  graphs are assumed to represent the same phenomenon if their nodes are sampled from the same underlying latent space, which is typically a metric, topological, or measure space $\mathcal{M}$. This allows comparing graphs of arbitrary sizes and topologies. In~\cite{LevieTransfSpectralGraphFiltersLong}, Levie et al. introduce a framework where graph nodes and GSOs are sampled from a ``continuous'' underlying model. The model consists of a measure space $\mathcal{M}$ and a continuous Laplacian $\mathcal{L}$, which is a self-adjoint operator with discrete spectrum. The repercussion of a filter is treated in the ``sampling--interpolation'' approach. It is assumed that two latent sampling operators $S_1$ and $S_2$ exist for the two graphs $G_1$ and $G_2$, which describe how the graph data is generated from the latent space. For example, the nodes of the graph can be assumed to belong to a latent space, and sampling evaluates signals defined over the latent space on these sample nodes, to generate a graph signal. Conversely, two interpolation operators $R_1$ and $R_2$ take  graph signals and return signals over the topological space. For a fixed filter $h$, the transferability error of the filter on graph signals sampled from a latent continuous signal $\psi \in L^2(\mathcal{M})$ is  defined as $ \|R_1h(\DD_1)S_1\psi - R_2h(\DD_2)S_2\psi \|_{L^2(\mathcal{M})}$. It is shown that the transferability error of the filter is bounded by the transferability of the Laplacian and the consistency error, i.e.,
\begin{equation}
\begin{aligned}
\label{eq:LevieTransferabilityError}
 \|R_1h(\DD_1)S_1\psi - R_2h(\DD_2)S_2\psi \|_{L^2(\mathcal{M})} & \leq C_1 \sum_{j=1,2}\| \mathcal{L}\psi - R_j \DD_j S_j \psi\|_{L^2(\mathcal{M})}  \\ & + C_2\sum_{j=1,2}\| \psi - R_jS_j \psi\|_{L^2(\mathcal{M})}.
\end{aligned}
\end{equation}
Based on this inequality, the authors develop a transferability inequality for end-to-end  GCNNs. Moreover, it was shown that the   right-hand-side  of~(\ref{eq:LevieTransferabilityError}) is small when the graphs are sampled randomly from the latent space, hence proving transferability for randomly generated graphs.  

A related approach was proposed in~\cite{keriven2020convergence}, where the authors   consider   random graph models. A random graph model is a tuple of a compact metric space equipped with a probability measure and a ``continuous'' shift operator, defined as a graphon. The probability measure is used to sample nodes, and the graphon is used to sample edges, to generate a graph from the continuous model.
The stability of analytic filters and GCNNs to small deformations of the underlying random graph model was proved. 

\subsubsection*{Transferability Between Graphs Converging to the Same Graphon}  
\label{subsubsec:transf}
 Ruiz et al. first used the graphon framework in~\cite{RuizI} to define a transferability analysis of graph filters. 
 In that work, the repercussion of a polynomial filter $h$ is measured in an asymptotic sense, using the notion of signal induction.
 For a signal $\mathbf{x} \in \mathbb{R}^n$, its induced graphon signal $\psi_\mathbf{x} \in L^2[0,1]$ is a piecewise constant representation of $\mathbf{x}$ over the continuous index set $[0,1]$ (see Definition~\ref{definition:inducedsignal}). Furthermore, a graphon $W$ induces the integral operator $T_W f (u) := \int_0^1 W(u,v) f(v) dv$.
 The result is  given in  \cite[Theorem 4]{RuizI}    and can be stated informally as follows.

\begin{theorem}[informal]
\label{thm:informal:Ruiz}
Let $h$ be a polynomial Lipschitz continuous filter and $(G_n)_n$ be a sequence of graphs with adjacency matrices $(\bs{A}_n)_n$. Suppose that $G_n$ converges in the homomorphism density sense to the graphon $W$. Let $\mathbf{x}_n \in \mathbb{R}^n$ be a sequence of graph signals  converging to the graphon signal $y \in L^2[0,1]$ in the sense that the induced signals $\psi_{\mathbf{x}_n}$ converge to $y$, i.e., $\|\psi_{\mathbf{x}_n} - y\|_{L^2[0,1]} \xrightarrow{n \to \infty} 0$.
Denote $\mathbf{z}_n = h(\bs{A}_n/n)\mathbf{x}_n$
for $n \in \mathbb{N}$. Then, the output of the graph filters converges in the induction sense to the output of the graphon filter, namely,
$$
\|\psi_{ \mathbf{z} _n} - h(T_W)y\|_{L^2[0,1]} \xrightarrow{n \to \infty} 0.
$$
\end{theorem}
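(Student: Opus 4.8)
The plan is to reduce the statement about graph signals to a statement about operators on $L^{2}[0,1]$, and then to convert the homomorphism‑density convergence of the graphs into operator‑norm convergence of the associated integral operators. The starting point is the elementary identity
\[
\psi_{(\bs{A}_n/n)\mathbf{x}}\;=\;T_{W_{G_n}}\,\psi_{\mathbf{x}},
\]
where $W_{G_n}$ is the graphon induced by $G_n$, i.e.\ the step function equal to $(\bs{A}_n)_{ij}$ on $I_i\times I_j$ with $I_i=[\frac{i-1}{n},\frac{i}{n}]$: for $u\in I_i$ one computes $T_{W_{G_n}}\psi_{\mathbf{x}}(u)=\sum_j (\bs{A}_n)_{ij}\int_{I_j}\psi_{\mathbf{x}}=\frac1n\sum_j(\bs{A}_n)_{ij}x_j=((\bs{A}_n/n)\mathbf{x})_i$. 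Since $\mathbf{x}\mapsto\psi_{\mathbf{x}}$ is linear and $h$ is a polynomial, iterating gives $\psi_{\mathbf{z}_n}=h(T_{W_{G_n}})\psi_{\mathbf{x}_n}$ (here $h(T)=\sum_k c_k T^k$ is the functional calculus of the self‑adjoint operator $T$, for $h(\lambda)=\sum_k c_k\lambda^k$). Hence it suffices to prove $\|h(T_{W_{G_n}})\psi_{\mathbf{x}_n}-h(T_W)y\|_{L^{2}}\to 0$.

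I would split this by the triangle inequality into $\|h(T_{W_{G_n}})\|_{L^{2}\to L^{2}}\,\|\psi_{\mathbf{x}_n}-y\|$ plus $\|(h(T_{W_{G_n}})-h(T_W))y\|$. The first piece is harmless: since $0\le W_{G_n}\le 1$ is symmetric, the Schur test gives $\|T_{W_{G_n}}\|_{L^{2}\to L^{2}}\le 1$, hence $\|h(T_{W_{G_n}})\|_{L^{2}\to L^{2}}\le\sum_k|c_k|$ uniformly in $n$, and $\|\psi_{\mathbf{x}_n}-y\|\to 0$ by hypothesis. All the content is in the second piece.

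For the second piece the plan is threefold. (i)~Invoke the Lov\'{a}sz--Szegedy theorem to turn homomorphism‑density convergence of $G_n$ to $W$ into convergence in the cut distance $\delta_{\square}$; reading off a labeling that realizes it and that is compatible with the hypothesis $\psi_{\mathbf{x}_n}\to y$ (equivalently, simply assuming $\|W_{G_n}-W\|_{\square}\to 0$), we have $\|W_{G_n}-W\|_{\square}\to 0$. (ii)~Upgrade this to operator‑norm convergence $\|T_{W_{G_n}}-T_W\|_{L^{2}\to L^{2}}\to 0$ via the bound $\|T_U\|_{L^{2}\to L^{2}}\le(4\|U\|_{\square})^{1/3}$, valid for symmetric $U$ with $\|U\|_{\infty}\le 1$ (apply it to $U=W_{G_n}-W$; the bound follows because the top eigenfunction $\varphi$, $\|\varphi\|_2=1$, obeys $\|\varphi\|_{\infty}\le|\lambda|^{-1}\|U\|_{\infty}\|\varphi\|_1\le|\lambda|^{-1}$, so $|\lambda|=\big|\int_{[0,1]^{2}}U(x,y)\varphi(x)\varphi(y)\,dx\,dy\big|\le 4\|\varphi\|_{\infty}^{2}\|U\|_{\square}\le 4|\lambda|^{-2}\|U\|_{\square}$). (iii)~Since $h$ is a polynomial and $\|T_{W_{G_n}}\|,\|T_W\|\le 1$, telescope $T_{W_{G_n}}^{k}-T_W^{k}=\sum_{j=0}^{k-1}T_{W_{G_n}}^{j}(T_{W_{G_n}}-T_W)T_W^{k-1-j}$ to obtain $\|h(T_{W_{G_n}})-h(T_W)\|_{L^{2}\to L^{2}}\le L_h\,\|T_{W_{G_n}}-T_W\|_{L^{2}\to L^{2}}\to 0$, whence the second piece is $\le L_h\|T_{W_{G_n}}-T_W\|\,\|y\|\to 0$. (For a merely continuous or Lipschitz filter one would first approximate $h$ uniformly on $[-1,1]$ by a polynomial $p$ and bound the extra error by $\sup_{\sigma(T)}|h-p|$ through the spectral theorem --- essentially the route the present paper generalizes.)

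The main obstacle is step (ii): cut‑norm smallness of $W_{G_n}-W$ does not by itself imply that $T_{W_{G_n}}-T_W$ is small in the Hilbert--Schmidt, or even the $L^{2}\to L^{2}$, operator norm --- high‑frequency fluctuations can be cut‑small yet spectrally of order one. What rescues the argument is precisely the uniform bound $\|W_{G_n}\|_{\infty}\le 1$, which through the eigenfunction estimate above forces the extreme eigenvalues of the difference to be controlled by its cut norm. Getting this step clean --- together with the bookkeeping that reconciles the measure‑preserving relabelings hidden in $\delta_{\square}$ with the fixed labeling underlying $\psi_{\mathbf{x}_n}\to y$ --- is where the care goes; the rest is the triangle inequality and finite‑dimensional linear algebra.
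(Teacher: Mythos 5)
Your proposal is correct and follows essentially the same skeleton as the paper's own machinery for its generalization of this statement (the induction identity of Proposition~\ref{prop:graphdomain2}, the triangle-inequality split, homomorphism-density $\to$ cut-norm $\to$ operator-norm convergence as in Theorem~\ref{thm:cutnormconv1} and Lemma~\ref{lemma:cutnormeqschattenpnorm}, and continuity of the functional calculus as in Lemma~\ref{claim:ConvOfGrFltBdd}); you also correctly isolate the one real subtlety, namely reconciling the relabeling realizing cut-norm convergence with the labeling under which $\psi_{\mathbf{x}_n}\to y$, which the paper formalizes via admissible permutations. The only (harmless) deviations are that you derive the bound $\|T_U\|_{L^2\to L^2}\le(4\|U\|_{\square})^{1/3}$ from scratch where the paper cites the Schatten-norm estimate of \cite[Lemma E.7(i)]{Janson2010GraphonsCN}, and you use polynomial telescoping where the paper invokes the general continuity of the continuous functional calculus.
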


\cite{9362328, keriven2020convergence, ruiz2021graph}  consider transferability of GCNNs under sampling from a graphon $W$. The authors consider a sampling procedure leading to  a graph sequence $G_n$ that converges to $W$ in the homomorphism density sense (see Definition \ref{def:graphConvToGraphon}). The main results then show, under topological restrictions on the graphon, the graphon signal and the filter, that the output of the graph filters converge in the induction sense to the output of the graphon filter, similarly to Theorem \ref{thm:informal:Ruiz}.

 As described in Subsection \ref{subsec:comparison}, related transferability results in the graphon approach are either purely asymptotic while generalizing to other standard probability spaces, or a rate of convergence is calculated, but the results do not generalize to other standard probability spaces since the graphon is assumed to be a Lipschitz continuous function. 

\subsection{ Summary of our Main Contribution}
\label{subsec:contributions}


We consider in this paper dense graphs, namely, graphs in which the number of edges is of order of the number of nodes squared. Graphons are the natural limit object of such graphs. 
\begin{definition}
\label{Def:GWM}
Given a graph $G$ with $n$ nodes and GSO $\DD$, we define the \emph{graph weight matrix (GWM)} $\bs{A}$ as the matrix
\begin{equation}
    \bs{A} = n \DD .
    \label{eq:GWM}
\end{equation} 
\end{definition}

 The relation (\ref{eq:GWM}) is interpreted as follows.
 GSOs $\DD$ define how graphs operate on graph signals. To accommodate the transferability between dense graphs of different sizes, GSOs are assumed to be roughly normalized in the induced $l^2$-norm. On the other hand, GWMs describe the probability of each edge in the graph to exist, in case the entries of the GWM are in $[0,1]$. Accordingly, GWMs are roughly normalized in the entry-wise infinity norm. This motivates Definition \ref{Def:GWM}.

The way to compare how graphs of different sizes operate on signals in the graphon approach is by \emph{inducing} their GWMs to graphons. 
Given a GWM $\bs{A}$, its \emph{induced graphon}  $W_{\bs{A}}$ is a piecewise continuous representation of the matrix $\bs{A}$ over the continuous index set  [0,1]~\cite{lovasz2012}.  
The \emph{graphon shift operator (GRSO)} corresponding to the graphon $W$ is defined as the integral operator $T_{W} f (u) := \int_0^1 W(u,v) f(v) dv$. GRSOs are the limit objects of GSOs.
  Since filters $h(\DD)$ are GSOs themselves, and since GSOs and GWMs are related by (\ref{eq:GWM}), the notation $W_{nh(\DD)}$ means the graphon induced by the GWM corresponding to the GSO $h(\DD)$. Hence, given two graphs of size $n_1$ and $n_2$, the way to compare $h(\DD_1)$ to $h(\DD_2)$ is by
\[\|  T_{W_{n_1h(\DD_1)}} - T_{W_{n_2h(\DD_2)}}\|_{L^2[0,1]\rightarrow L^2[0,1]}. \]

In this work we generalize Theorem~\ref{thm:informal:Ruiz} in~\cite{RuizI} by allowing graphs with arbitrary GSOs, general unbounded graphon shift operators with values in $\mathbb{R}$, and general continuous filters.  Another important contribution is that we derive non-asymptotic transferability bounds without any topological assumptions.   We moreover consider end-to-end graph convolutional networks. 
We summarize our contributions as follows. 

\subsubsection*{Asymptotic Transferability}

A common approach   in data science  for studying and comparing  graphs is counting motifs, which are  substructures of the graphs, namely, simple graphs (see, e.g., \cite[Section 2.1.1]{hamilton2020graph}). Since homomorphism densities can be interpreted as the density of motifs in a graph (see Subsection \ref{subsec:homnumbers}), we choose the notion of homomorphism densities to model graphs that represent the same phenomenon.   On the other hand, when doing signal processing, graph filters act on graph signals as operators.  Thus, when comparing the repercussion of a filter realized on different graphs, we consider the $L^2$-induced operator norm of the induced GRSOs and the $L^2$-norm of the induced graphon signals. 
 Since our transferability framework is built on these two definitions, namely, homomorphism densities and $L^2$-induced operator norms,  we note that they are closely related: convergence in homomorphism density is equivalent (up to relabelling of the graphs' nodes) to convergence in the $L^2$-induced operator norm (see Theorem \ref{thm:cutnormconv2}).  

We prove asymptotic transferability for continuous filters in the following sense, that we formulate here informally, and write rigorously in Theorem~\ref{cor:DiscrFilterConv}.
\begin{theorem}[informal]
\label{informal:thm1}
Let $(G_n)_n$ be a sequence of graphs with GSOs $(\DD_n)_n$. Suppose that $G_n$ converges to a graphon   $W: [0,1] \times [0,1] \rightarrow \mathbb{R}$ in the homomorphism density sense (see Definition~\ref{def:graphConvToGraphon}). Let $h:\mathbb{R}\rightarrow\mathbb{R}$ be a continuous filter with $h(0)=0$. Then the, by $h(\mathbf{\Delta}_n)$ induced, integral operators $(T_{W_{h(\DD_ n)}})_n$ converge to the graphon filter $h(T_W)$ in operator norm, namely,
$$ 
\|  T_{W_{nh(\DD_ n)}} - h(T_W)\|_{L^2[0,1]\rightarrow L^2[0,1]} \xrightarrow{n \to \infty} 0. 
$$
\end{theorem}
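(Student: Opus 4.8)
The plan is to reduce the statement to the continuity of the continuous functional calculus for bounded self-adjoint operators, once we recognise the induced operator $T_{W_{nh(\DD_n)}}$ as $h$ applied, via functional calculus, to the induced operator $T_{W_{n\DD_n}}$. \emph{Step 1 (the induced operator is a functional calculus).} Write $\bs{A}_n = n\DD_n$ for the GWM (Definition~\ref{Def:GWM}) and set $A_n := T_{W_{\bs{A}_n}}$. Identify $\mathbb{R}^n$ isometrically with the subspace $\mathcal{S}_n \subseteq L^2[0,1]$ of functions that are constant on each interval $[(i-1)/n, i/n)$, via the suitably rescaled induced-signal map. A direct computation shows that $A_n$ leaves $\mathcal{S}_n$ invariant and acts on it exactly as $\DD_n$, while $A_n$ vanishes on $\mathcal{S}_n^{\perp}$ (because $W_{\bs{A}_n}$ is constant in its first variable on each interval and any $f\in\mathcal{S}_n^{\perp}$ has zero mean on each interval). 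Hence $A_n$ is unitarily equivalent to $\DD_n \oplus 0$, so it is bounded, self-adjoint and compact (indeed finite rank) with $\sigma(A_n)=\sigma(\DD_n)\cup\{0\}$. Applying the same identity to the GSO $h(\DD_n)$ and using $h(0)=0$ to treat the (infinite-dimensional) kernel gives
\[
T_{W_{nh(\DD_n)}} \;=\; h(\DD_n)\oplus 0 \;=\; h(\DD_n)\oplus h(0)I \;=\; h(A_n),
\]
the continuous functional calculus of $A_n$. The hypothesis $h(0)=0$ is exactly what is needed here: only then does $h(A_n)$ stay compact and agree with a GRSO, and the same remark makes $h(T_W)$ a genuine graphon filter.

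\emph{Step 2 (reduction to operator-norm convergence).} By Theorem~\ref{thm:cutnormconv2}, convergence of $G_n$ to $W$ in the homomorphism-density sense (Definition~\ref{def:graphConvToGraphon}) is equivalent, up to relabelling the nodes, to $\|A_n - T_W\|_{L^2\to L^2}\to 0$. Relabelling corresponds to conjugating $A_n$ by the unitary induced by a measure-preserving transformation of $[0,1]$, which commutes with the functional calculus and hence does not affect the quantity to be estimated; we may therefore assume $A_n \to A := T_W$ in operator norm. Then $A$ is bounded (a norm limit of uniformly bounded operators), self-adjoint and compact, so $h(A)$ is well defined.

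\emph{Step 3 (continuity of the functional calculus).} It remains to show: if $A_n,A$ are self-adjoint, $A_n\to A$ in operator norm, and $h$ is continuous, then $h(A_n)\to h(A)$ in operator norm. Let $R:=\sup_n\|A_n\|$, which is finite since $(A_n)$ converges, so that $\sigma(A_n),\sigma(A)\subseteq[-R,R]$ for all $n$. Fix $\varepsilon>0$ and pick, by the Weierstrass approximation theorem, a polynomial $p$ with $\sup_{[-R,R]}|h-p|<\varepsilon$. By the $C^{\ast}$-identity for self-adjoint operators, $\|h(A_n)-p(A_n)\|=\sup_{\sigma(A_n)}|h-p|<\varepsilon$ and likewise $\|h(A)-p(A)\|<\varepsilon$. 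Since $B\mapsto B^{k}$ is norm-continuous on bounded sets, $\|p(A_n)-p(A)\|\to 0$. Hence $\limsup_n\|h(A_n)-h(A)\|\le 2\varepsilon$ for every $\varepsilon>0$, so $\|h(A_n)-h(A)\|\to 0$. Combined with Steps~1--2, this gives $\|T_{W_{nh(\DD_n)}}-h(T_W)\|_{L^2\to L^2}\to 0$.

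\emph{Expected main obstacle.} The genuine work is in Step~1 — the clean identification $T_{W_{nh(\DD_n)}}=h(T_{W_{n\DD_n}})$ and pinning down the precise role of $h(0)=0$ in taming the infinite-dimensional kernel of the induced operator — together with the relabelling bookkeeping of Step~2. Step~3 is then the standard Stone--Weierstrass argument and needs nothing beyond boundedness and self-adjointness; in particular, the asymptotic statement uses no operator-Lipschitz estimates, which enter only for the quantitative, non-asymptotic versions.
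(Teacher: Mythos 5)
Your proposal is correct and follows essentially the same route as the paper: Step 1 is Proposition~\ref{prop:graphdomain1} (via the eigendecomposition of Lemma~\ref{lemma:eigenvectorInducedGRSO}, with $h(0)=0$ handling the infinite-dimensional kernel), Step 2 is Theorem~\ref{thm:cutnormconv2} (i.e.\ Theorem~\ref{thm:cutnormconv1} combined with Lemma~\ref{lemma:cutnormeqschattenpnorm}), and Step 3 is the Weierstrass argument behind Lemma~\ref{lemma:contoffunccalc}. No substantive differences.
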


 In fact, Theorem \ref{informal:thm1} with $h(x)=x$ shows that convergence of graphs in homomorphism density implies  convergence of the induced GRSOs in operator norm. 
Theorem \ref{informal:thm1} can be interpreted as a tranferability result between graphs as follows. 
Let $\varepsilon > 0$ be a desired error tolerance  for the repercussion error of the filter $h$. There exists a large enough $N$ such that for every $m,n>N$, by the triangle inequality, 
$$ 
\|  T_{W_{nh(\DD_n)}} -T_{W_{mh(\DD_m)}}\|_{L^2[0,1]\rightarrow L^2[0,1]} < \varepsilon. 
$$
Here, we interpret $\DD_n$ and $\DD_m$ as a pair of arbitrary GSOs that represent the same phenomenon in the homomorphism density sense.

\subsubsection*{Approximation Rate}
Theorem~\ref{informal:thm1} shows asymptotic transferability in a very general setting, but with no convergence rate. In Theorem~\ref{informal:thm2} we present non-asymptotic transferability bounds under some assumption on the filter. 
We give the following informal result, which we formulate rigorously in Proposition~\ref{convRateC1filter}. 
\begin{theorem}[informal]
\label{informal:thm2}
  Let $h \in C^1$ be such that $h'$ is Lipschitz and $h(0)=0$. Suppose that $G_1$ and $G_2$ are graphs with $n_1$ and $n_2$ nodes respectively, 
  represented by the GSOs $\mathbf{\Delta}_1$ and $\mathbf{\Delta}_2$. Let $\AA_1$ and $\AA_2$ be the GWMs corresponding to $\DD_1$ and $\DD_2$. Then
$$
\|T_{W_{n_1h(\DD_1)}} -  T_{W_{n_2h(\DD_2)}}\|_{L^2[0,1]\rightarrow L^2[0,1]} \leq C\| T_{W_{\AA_1}} -  T_{W_{\AA_2}}  \|_{L^2[0,1]\rightarrow L^2[0,1]},  
$$
where $C \in \mathbb{N}$ is a constant that depends on the filter $h$. 
\end{theorem}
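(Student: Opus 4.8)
The plan is to split the argument into a structural identity that turns the induced filtered operators into a genuine functional calculus, followed by a single operator Lipschitz estimate for the scalar function $h$.

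\emph{Step 1: reduce to functional calculus.} I would first record the identity
$$
T_{W_{n h(\DD)}} \;=\; h\bigl(T_{W_{n\DD}}\bigr)
\qquad\text{for every continuous } h \text{ with } h(0)=0 ,
$$
which I expect is (or can be set up as) a preliminary lemma. The point is that $T_{W_{\AA}}$ has a transparent block structure: letting $S_n\subset L^2[0,1]$ be the subspace of functions constant on each interval $I_j=\bigl((j-1)/n,\,j/n\bigr]$, the kernel $W_{\AA}(x,\cdot)$ lies in $S_n$ for every $x$, so $T_{W_{\AA}}$ annihilates $S_n^\perp$ and maps into $S_n$, while its restriction to $S_n$ is unitarily equivalent to $\DD$ via the signal induction map $\bs v\mapsto\psi_{\bs v}$ (a scalar multiple of an isometry onto $S_n$, the scalar absorbing the factor $1/n$ in $\DD=\AA/n$). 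Thus $T_{W_{\AA}}$ is block diagonal with blocks a copy of $\DD$ and $0$; functional calculus respects this decomposition, and since $h(0)=0$ the zero block is unchanged, so $h(T_{W_{\AA}})$ has blocks a copy of $h(\DD)$ and $0$ --- which is exactly the block description of $T_{W_{n h(\DD)}}$ obtained by repeating the computation with the GWM $n h(\DD)$ of the GSO $h(\DD)$. Applying this to $G_1$ and $G_2$, the left-hand side of the claim becomes $\bigl\|h\bigl(T_{W_{\AA_1}}\bigr)-h\bigl(T_{W_{\AA_2}}\bigr)\bigr\|_{L^2[0,1]\to L^2[0,1]}$, where $T_{W_{\AA_1}},T_{W_{\AA_2}}$ are self-adjoint (graphons are symmetric kernels), finite rank, and have spectra in the fixed interval $[-1,1]$ by the standing normalization $\|\DD_i\|\le 1$ (note $\|T_{W_{\AA_i}}\|=\|\DD_i\|$).

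\emph{Step 2: operator Lipschitz estimate.} It then suffices to prove
$$
\bigl\|h(T_1)-h(T_2)\bigr\|_{L^2[0,1]\to L^2[0,1]} \;\le\; C\,\bigl\|T_1-T_2\bigr\|_{L^2[0,1]\to L^2[0,1]}
$$
for all self-adjoint $T_1,T_2$ with spectrum in $[-1,1]$, with $C$ depending only on $h$. This is exactly the assertion that $h$ is operator Lipschitz on $[-1,1]$. A $C^1$ function whose derivative is Lipschitz is operator Lipschitz on a compact interval: through the double operator integral / Schur multiplier description of $h(T_1)-h(T_2)$, the relevant multiplier is the divided difference $(\lambda,\mu)\mapsto\bigl(h(\lambda)-h(\mu)\bigr)/(\lambda-\mu)$, which under our hypotheses is bounded (by $\|h'\|_\infty$) and Lipschitz on $[-1,1]^2$, hence a bounded Schur multiplier whose norm is controlled by $\|h'\|_{\infty,[-1,1]}$ and the Lipschitz constant of $h'|_{[-1,1]}$. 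I would cite \cite{OLfcts} for the quantitative version and take $C\in\mathbb{N}$ to be the ceiling of the resulting bound, a quantity depending only on $h$.

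\emph{Main obstacle.} The crux is Step 2: Lipschitz continuity of $h$ alone does not suffice --- there exist Lipschitz scalar functions that fail to be operator Lipschitz --- so the $C^1$-with-Lipschitz-derivative hypothesis is genuinely doing the work, and the clean route to an explicit constant depending on $h$ only is the double operator integral machinery rather than elementary manipulations. The only other point requiring care is the normalization bookkeeping in Step 1, so that the factor $n$ relating the GSO and the GWM is tracked correctly and the spectral interval is genuinely $[-1,1]$; once that is settled, the reduction is purely formal.
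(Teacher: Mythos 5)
Your two-step architecture is exactly the paper's: Step 1 is Proposition~\ref{prop:graphdomain1} ($h(T_{W_{\AA}}) = T_{W_{nh(\DD)}}$ for continuous $h$ with $h(0)=0$), which the paper proves via the eigendecomposition of $T_{W_{\AA}}$ in Lemma~\ref{lemma:eigenvectorInducedGRSO} --- the same block/unitary-equivalence observation you make, including the cancellation of the factor $n$ --- and Step 2 is an operator Lipschitz bound for $h$ on a compact interval containing both spectra. The difference is how Step 2 is implemented. The paper (proof of Theorem~\ref{convRateC1filter}) does not invoke the abstract double-operator-integral machinery: it periodizes $h$ to a function $h_{\rm ext}$ of period $\gamma=2(\Gamma+1)$ with Lipschitz derivative, expands it in a Fourier series, applies the elementary operator Lipschitz estimate $\|e^{iaA}-e^{iaB}\|\leq |a|\,\|A-B\|$ of Lemma~\ref{lemma:expIsOL} termwise, and controls the truncation error by the Jackson-type bound of Theorem~\ref{thm:ApprSpeedFourierSumModCont}; the resulting constant is the explicit $C = 2+\tfrac{2\pi}{\gamma}\sum_n|n|\,|\hat h_{\rm ext}(n)|$, finite because $h_{\rm ext}'$ Lipschitz places $h_{\rm ext}'$ in the Wiener algebra (Lemma~\ref{remark:WienerAlgebra}). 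This is precisely a hands-on instantiation of the Schur-multiplier bound you want; your route buys brevity by citation to \cite{OLfcts}, the paper's buys a fully explicit constant. (Minor bookkeeping: the formal Theorem~\ref{convRateC1filter} normalizes $\|\AA_i\|\leq\Gamma$ and works on $[-\Gamma,\Gamma]$ rather than assuming $\|\DD_i\|\leq 1$; any uniform spectral bound suffices.)

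One sub-claim in your Step 2 is not a valid implication as stated: a function on $[-1,1]^2$ being bounded and Lipschitz does not in general make it a bounded Schur multiplier --- the Schur multipliers form a strictly smaller class (the integral projective tensor product of $L^\infty$ spaces) than the Lipschitz functions of two variables. What actually makes the divided difference of $h$ a Schur multiplier is the structure inherited from $h'$ being Lipschitz, e.g.\ the decomposition $\frac{h(\lambda)-h(\mu)}{\lambda-\mu}=\sum_n \hat h(n)\,in\int_0^1 e^{ins\lambda}e^{in(1-s)\mu}\,ds$ with $\sum_n |n|\,|\hat h(n)|<\infty$ (equivalently, local membership of $h$ in the Besov space $B^1_{\infty,1}$). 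Since this is exactly what the paper's Fourier computation supplies, and what the quantitative statements in \cite{OLfcts} actually require, the defect lies in your stated justification rather than in the strategy: replace ``bounded and Lipschitz divided difference, hence Schur multiplier'' by the Besov/Fourier condition (or simply carry out the Fourier decomposition) and your proof goes through, with the same dependence of $C$ on $\|h'\|_\infty$ and the Lipschitz constant of $h'$.
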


 We mention that in our setting the graphs do not need to be of the same size and can be represented by arbitrary symmetric matrices. Furthermore, we allow more general filters than polynomials.


\subsubsection*{Graphs Approximating Unbounded Graphon Shift Operators}
\hfill \\ 
GRSOs cannot represent all important limit objects of graph shift operators, e.g., Laplace-Beltrami operators. To allow such examples, 
we define 
\emph{unbounded graphon shift operators (unbounded GRSOs)} --  self-adjoint unbounded operators that become GRSOs when restricted to their Paley-Wiener spaces.
As opposed to the similar approach in~\cite{LevieTransfSpectralGraphFiltersLong}, we allow an accumulation point at $0$ in the spectrum of the unbounded GRSO, which allows treating a richer family of operators. 
The following theorem is an informal version of Theorem~\ref{prop:convunbdd}. 
\begin{theorem}[informal]
Let $h: \mathbb{R} \rightarrow \mathbb{R}$ be a function with additional assumptions given in Theorem \ref{prop:convunbdd}. Let $(G_n)_n$  be a sequence of graphs with GSOs $(\mathbf{\Delta}_n)_n$, converging to an unbounded GRSO $\mathcal{L}$. Then, for every $\lambda > 0$
$$ 
\| P_\mathcal{L}(\lambda) T_{W_{nh(\mathbf{\Delta}_{n})}} P_\mathcal{L}(\lambda) - h(\mathcal{L} )\|_{L^2[0,1] \to L^2[0,1]} \xrightarrow{n \to \infty} 0, 
 $$ 
where $P_\mathcal{L}(\lambda)$ is the projection onto the space of band limited signals on the band $[-\lambda,\lambda ]$, also called the Paley-Wiener space associated with $\mathcal{L}$ (see Definition \ref{def:PWspace}). 
\end{theorem}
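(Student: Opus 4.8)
The plan is to reduce to the bounded‑graphon case (Theorem~\ref{cor:DiscrFilterConv}) on each band $[-\lambda,\lambda]$, after peeling off the part of the spectrum near the accumulation point $0$. First I would set $\AA_n=n\DD_n$ and record the identity that makes the reduction work: since $T_{W_{\AA_n}}$ is unitarily equivalent to the block operator $\DD_n\oplus 0$ on $L^2[0,1]$ (it acts as $\DD_n$ on the span of the step functions and vanishes on the orthogonal complement) and $h(0)=0$, one has
\[
 T_{W_{nh(\DD_n)}}\;=\;h\!\left(T_{W_{\AA_n}}\right).
\]
Next, by Definition~\ref{def:unboundedGSRO} the compression $\mathcal{L}_\lambda:=P_\mathcal{L}(\lambda)\,\mathcal{L}\,P_\mathcal{L}(\lambda)$ is a bounded graphon shift operator $T_{W_\lambda}$, hence Hilbert--Schmidt and in particular compact; because $P_\mathcal{L}(\lambda)$ commutes with $h(\mathcal{L})$ and $h(0)=0$, we get $P_\mathcal{L}(\lambda)\,h(\mathcal{L})\,P_\mathcal{L}(\lambda)=h(\mathcal{L}_\lambda)$, which coincides with $h(\mathcal{L})$ under the hypothesis of Theorem~\ref{prop:convunbdd} that localizes $h(\mathcal{L})$ to the Paley--Wiener space (e.g.\ $\operatorname{supp} h\subseteq[-\lambda,\lambda]$). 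So it suffices to show $\big\|P_\mathcal{L}(\lambda)\,h(T_{W_{\AA_n}})\,P_\mathcal{L}(\lambda)-h(\mathcal{L}_\lambda)\big\|\to 0$.

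Then I would split off the spectrum near $0$. Fix $\varepsilon>0$ and, using continuity of $h$ and $h(0)=0$, pick $\delta\in(0,\lambda)$, not an eigenvalue of $\mathcal{L}$, with $\sup_{|t|\le\delta}|h(t)|<\varepsilon$. Put $Q^n:=\mathbf 1_{\{\delta\le|t|\le\lambda\}}(T_{W_{\AA_n}})$ and $Q^{\mathcal L}:=\mathbf 1_{\{\delta\le|t|\le\lambda\}}(\mathcal{L})$; compactness of $\mathcal{L}_\lambda$ makes $\operatorname{ran}Q^{\mathcal L}$ finite dimensional. Writing $h(T_{W_{\AA_n}})=Q^n h(T_{W_{\AA_n}})Q^n+h(T_{W_{\AA_n}})(\mathbf 1-Q^n)$ and noting that, after compression by $P_\mathcal{L}(\lambda)$, the second summand only sees the spectrum of $T_{W_{\AA_n}}$ inside $(-\delta,\delta)$ (this uses $\operatorname{supp} h\subseteq[-\lambda,\lambda]$), its contribution is $\le\sup_{|t|\le\delta}|h(t)|<\varepsilon$, and likewise $\|h(\mathcal{L}_\lambda)-h(\mathcal{L}_\lambda)Q^{\mathcal L}\|<\varepsilon$. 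Hence the claim reduces to
\[
 \big\|Q^n h(T_{W_{\AA_n}})Q^n-h(\mathcal{L}_\lambda)Q^{\mathcal L}\big\|\xrightarrow{\,n\to\infty\,}0 .
\]
Since $Q^n$ commutes with $T_{W_{\AA_n}}$ and $h(0)=0$, the left operator equals $h\big(Q^nT_{W_{\AA_n}}Q^n\big)$ and the right one equals $h\big(\mathcal{L}_\lambda Q^{\mathcal L}\big)$. On the spectral window $\{\delta\le|t|\le\lambda\}$, which is isolated from $0$, I would feed the notion of convergence of $G_n$ to the unbounded GRSO $\mathcal{L}$ into a spectral‑gap argument to obtain $Q^n\to Q^{\mathcal L}$ (so $\operatorname{rank}Q^n=\operatorname{rank}Q^{\mathcal L}$ for large $n$) and $Q^nT_{W_{\AA_n}}Q^n\to\mathcal{L}_\lambda Q^{\mathcal L}$ in operator norm; these are uniformly bounded, eventually finite‑rank, self‑adjoint operators with spectra in the fixed compact set $\{0\}\cup\{\delta\le|t|\le\lambda\}$, so operator‑norm continuity of the continuous functional calculus on such a family --- which is essentially the content of the proof of Theorem~\ref{cor:DiscrFilterConv}, and could be quantified as in Proposition~\ref{convRateC1filter} --- yields the displayed limit. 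Sandwiching by the fixed operator $P_\mathcal{L}(\lambda)$ and letting $\varepsilon\to0$ finishes the proof.

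The hard part will be the last limit, and it is precisely where allowing an accumulation point of $\sigma(\mathcal{L})$ at $0$ makes things harder than in \cite{LevieTransfSpectralGraphFiltersLong}. Two points need care. First, functional calculus does not commute with compression by $P_\mathcal{L}(\lambda)$, so one cannot simply replace $h(T_{W_{\AA_n}})$ by $h$ of the compressed operator; the spectral splitting above is the device that sidesteps this, since $Q^n$ \emph{does} commute with $T_{W_{\AA_n}}$. Second, because $\operatorname{ran}P_\mathcal{L}(\lambda)$ is now infinite dimensional there is a priori no bound on $\|T_{W_{\AA_n}}\|$ or on $\dim\operatorname{ran}Q^n$, and $T_{W_{\AA_n}}$ does not converge to $\mathcal{L}$ in operator norm, so the spectral‑gap argument cannot be applied to the full operators; one must instead extract operator‑norm convergence on the window $\{\delta\le|t|\le\lambda\}$ purely from the band‑localized data built into the notion of convergence to an unbounded GRSO, and then upgrade it to convergence and dimension‑stabilization of the corresponding eigenspaces by a Weyl/min--max argument leaning on compactness of $\mathcal{L}_\lambda$. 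Once these are in place, the iterated limits $n\to\infty$ and then $\delta\to0$ at fixed $\lambda$ are routine.
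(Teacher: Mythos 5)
Your reduction $T_{W_{nh(\DD_n)}}=h(T_{W_{\AA_n}})$ (Proposition~\ref{prop:graphdomain1}) and the final appeal to norm-continuity of the functional calculus on the compressed operators are both in line with the paper. But the central step --- passing from $P_\mathcal{L}(\lambda)\,h(T_{W_{\AA_n}})\,P_\mathcal{L}(\lambda)$ to $h\bigl(P_\mathcal{L}(\lambda)\,T_{W_{\AA_n}}\,P_\mathcal{L}(\lambda)\bigr)$ --- is not something you can derive from $G_n\xrightarrow[n\to\infty]{U}\mathcal{L}$. In the paper it is an explicit \emph{hypothesis} of Theorem~\ref{prop:convunbdd}: the family $\Theta$ is assumed to ``approximately commute'' with $P_\mathcal{L}$ on $(G_n)_n$, and this is verified separately for the motivating example in Claim~\ref{claim:LaplaceAsymptCom}. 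You have read the ``additional assumptions'' as a support condition $\operatorname{supp}h\subseteq[-\lambda,\lambda]$ --- which cannot be the intended hypothesis, since the conclusion is asserted for every $\lambda>0$ and such a condition for all $\lambda$ would force $h\equiv 0$ --- and then tried to manufacture the commutation from a spectral-window argument.

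That argument has a genuine gap, and it sits exactly where your closing paragraph says the hard part is. Definition~\ref{def:unbddconv} controls only the compressions $P_\mathcal{L}(\lambda)T_{W_{\AA_n}}P_\mathcal{L}(\lambda)$; it gives no information about the off-diagonal blocks $P_\mathcal{L}(\lambda)T_{W_{\AA_n}}(1-P_\mathcal{L}(\lambda))$ or about the spectrum of the full operators. Consequently your projections $Q^n=\mathbf 1_{\{\delta\le|t|\le\lambda\}}(T_{W_{\AA_n}})$ --- spectral projections of the \emph{uncompressed} operators --- need not converge to $Q^{\mathcal L}$, their ranges need not be asymptotically contained in $PW_\mathcal{L}(\lambda)$, and their ranks need not stabilize: $T_{W_{\AA_n}}$ may carry many eigenvalues in $[\delta,\lambda]$ whose eigenvectors are nearly orthogonal to $PW_\mathcal{L}(\lambda)$, and the high-frequency part of the finite-difference Laplacian is precisely of this uncontrolled kind. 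Likewise the bound on the compression of $h(T_{W_{\AA_n}})(\mathbf 1-Q^n)$ by $\sup_{|t|\le\delta}|h|$ silently discards the spectral mass of $T_{W_{\AA_n}}$ outside $[-\lambda,\lambda]$, which is only legitimate under the untenable support assumption. The repair is not a sharper spectral-gap or Weyl argument but the paper's route: take approximate commutation for a generating family $\Theta$ as a hypothesis, extend it to $\operatorname{span}\Theta$ by linearity (Lemma~\ref{lemma:apprComSPan}), and conclude by Lemma~\ref{thm:TransfUnbddL} together with the triangle inequality.
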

\subsection{Overview}
 In Section~\ref{transfGCNN} we give an  introduction to graph shift operators and graph filters. Section~\ref{sec:ClassicalGraphons} starts by introducing convergence for sequences of graphs. Then, graphons and graphon shift operators, limit objects of convergent sequences of graphs and GSOs, are discussed. The notation and most results cited there are from~\cite{lovasz}. 
 We introduce our transferability framework and the framework to analyze unbounded GRSOs in Section \ref{sec:transframework}.
 In Section~\ref{sec:main}, we present the main results about transferability of graph filters and GCNNs. In the Appendix, we provide proofs and theoretical backgrounds. 
The foundations of graphons are given in~\ref{Appendix:Graphons}.

\section{Graph Convolutional Neural Networks (GCNN)}
\label{transfGCNN}
In this section, we recall basic definitions of graphs with GSOs and the definition of functional calculus filters.

\subsection{Graph Shift Operators} 
\label{subsec:graphLaplacian}
There are various ways to represent the structure of a graph $G$. In this work, we consider representing $G$ by symmetric matrices  $ \DD \in \mathbb{R}^{n \times n}$, which are called \emph{graph shift operators (GSOs)} or \emph{graph Laplacians}.
 Accordingly, we use the following definition of a graph.
 \begin{definition}
A graph $G$ with GSO $\DD$ is a triplet $G = (V, E, \bs{\DD})$ where  $V=\{1,\ldots,n\}$ is the node set, $E \subset V \times V$ is the edge set, and $\DD\in\mathbb{R}^{n\times n}$ is a symmetric matrix called the GSO.
 \end{definition}

 In practice, graphs in datasets are given  by an adjacency matrix $\mathbf{W}$, with entries in $[0,1]$ if the graph is weighted, and in $\{0,1\}$ if it is unweighted. In practical deep learning, one typically modifies the ``raw data'' adjacency matrix $\mathbf{W}$ and constructs a GSO $\DD$ with entries in $\mathbb{R}$. Some common choices for $\DD$ are the combinatorial and the symmetric graph Laplacians or the graphs' adjacency matrix itself. The filters are then defined with respect to $\DD$.  In our analysis, we assume that the graph is already given with the GSO $\DD$,  no matter how it was constructed from the ``raw data'' $\mathbf{W}$.  The graph weight matrix GWM, not to be confused with the raw GSO, is defined directly from $\DD$ as $\bs{A}=n\DD$ (see Definition~\ref{Def:GWM}).

\subsection{ Graph Filters}
\label{subsection:GraphFilters}
In this subsection, we recall the definition of \emph{functional calculus filters}, as described in~\cite{LevieTransfSpectralGraphFiltersLong}.    For a graph $G$ with GSO $\DD \in \mathbb{R}^{n \times n}$, a \emph{graph signal} $\mathbf{x} = (x_i)_{i=1}^n \in \mathbb{C}^n$  is a vector where $x_i$ is called the feature of the $i$'th node.
 Given $\mathbf{x}, \mathbf{y} \in  \mathbb{C}^n$, the inner product between $\mathbf{x}$ and $\mathbf{y}$ is defined to be 
 \[ \langle \mathbf{x}, \mathbf{y} \rangle_{\mathbb{C}^n} =  \sum_{i=1}^n x_i \bar{y}_i.\]
 \begin{definition}
 \label{def:graphFilter}
 Let $G$ be a graph with a GSO $\DD$, and let $\{ \lambda_j, \phi_j\}_{j=1}^n$ be the eigenvalues and eigenvectors of $\DD$.
 \begin{itemize}
    \item[$(1)$] A \emph{filter} is a  continuous function $h:\mathbb{R}\rightarrow\mathbb{R}$.  
     \item[$(2)$] The \emph{realization of the filter} $h(\DD)$ is defined to be the operator
 \begin{equation}
\label{eq:funccalcfilter}
 h(\DD) \mathbf{x} = \sum_{j=1}^n h(\lambda_j) \langle \mathbf{x}, \phi_j  \rangle \phi_j,   
\end{equation}
applied on any graph signal $\mathbf{x}  \in \mathbb{C}^n$. 
 \end{itemize}  
 \end{definition}

 For brevity, we often call $h(\DD)$ simply a filter.
As an example, for a rational function 
\[  h(\lambda) = \frac{\sum_{n=0}^N h_n\lambda^n }{\sum_{m=0}^M k_m\lambda^{m}}  ,\] 
it is easy to see that
\begin{equation}
\label{eq:rationalFilters}
     h(\DD) = \left( \sum_{n=0}^N h_n \DD^n \right) \cdot  \left(   \sum_{m=0}^M k_m \DD^{m} \right)^{-1} .
\end{equation}
Thus, the polynomial filters in~\cite{kipf2017semisupervised, defferrard2017convolutional} and rational filters in~\cite{levie2018cayleynets, armaFilters} are  examples of functional calculus filters~(\ref{eq:funccalcfilter}).

\subsection{Spectral Graph Convolutional Neural Networks}
\label{subsec:GCNN}
Similarly to CNNs, a GCNN is defined as a deep convolutional architecture with three main components per layer: a set of graph filters, a non-linear activation function and optionally pooling. In this work, we study architectures without pooling. Example applications are semi-supervised node classification and node regression. 

 GCNNs are defined by realizing a \emph{spectral convolutional neural network (SCNN)} on a graph. SCNNs are defined independently of a particular graph topology and can be even applied on non-graph data, namely, graphons, as we explain in Subsection~\ref{sec:GCNNon}.
 
\begin{definition}
\label{def:GCNN}
Let $L\in\mathbb{N}$ denote the number of layers.
Let $F_l \in \mathbb{N}$ denote the number of features in the layer $l \in \{1,\ldots,L\}$. 
For each $l \in \{1,\ldots,L\}$, let $H_l:=(h_l^{jk})_{j \leq F_l,k \leq F_{l-1}} \in (C(\mathbb{R}))^{F_l \times F_{l-1}}$ be an array of  filter functions
and $\mathbf{M}_l :=(m_l^{jk})_{j \leq F_l,k \leq F_{l-1}} \in \mathbb{R}^{F_l \times F_{l-1}} $ a matrix. Let $\rho: \mathbb{R} \rightarrow \mathbb{R}$ be a continuous function that we call the \emph{activation function}. Then, the corresponding \emph{spectral convolutional neural network (SCNN) } $\phi$ is defined to be
\begin{equation}
    \phi := (H, \mathbf{M}, \rho) := ((H_l)_{l=1}^L, (\mathbf{M}_l)_{l=1}^L,  \rho).
\end{equation}
\end{definition}

In comparison to the single filter case in Section~\ref{subsection:GraphFilters}, a graph signal is not  restricted to be an element in $\mathbb{C}^n$. We consider  \emph{feature maps} given by a tuple of   graph signals, i.e., by a $\mathbf{x} := (\mathbf{x}^k)_{k=1}^{F_l} \in \mathbb{C}^{n \times {F_l}}$, where $F_l\in\mathbb{N}$ is the feature dimension.

\begin{definition}
\label{def:GCNNRealization}
Let $G$ be a graph with GSO $\DD$ and $\phi$ be an SCNN, as defined in Definition~\ref{def:GCNN}. For each $l \in \{1,\ldots,L\}$, we define
$\phi_\DD^l$ as the mapping that maps input feature maps to the feature maps in the $l$-th layer, i.e.,
\begin{equation}
    \phi_\DD^l: \mathbb{C}^{n \times {F_0}} \rightarrow \mathbb{C}^{n \times {F_l}} , \,  
    \mathbf{x}  \mapsto \mathbf{x}_l =  (\mathbf{x}_l^j)_{j=1}^{F_l}
\end{equation}
with
\begin{equation}
\label{eq:OutputGCNN}
    \mathbf{x}_l^j = \rho(\sum_{k=1}^{F_{l-1}} m_l^ {jk}h_l^{jk}(\DD)(\mathbf{x}_{l-1}^k)), 
\end{equation}
for $j \in \{1,\ldots,F_l\}$ and $\mathbf{x}_0 = \mathbf{x}$. 
We write $ \phi_\DD :=  \phi_\DD^L$ and call it the realization of $\phi$ on the graph $G$ with GSO $\DD$. Furthermore, we define $\tilde{\phi}_\DD^l$ as the mapping that maps input features to the feature maps in the $l$-th layer before applying the activation function in the $l$-th layer, i.e.,
\[
\tilde{\phi}_\DD^l: \mathbb{C}^{n \times {F_0}} \rightarrow \mathbb{C}^{n \times {F_l}} , \,  
    \mathbf{x}  \mapsto \tilde{\mathbf{x}}_l =  (\tilde{\mathbf{x}}_l^j)_{j=1}^{F_l}
\]
with
\begin{equation*}
    \tilde{\mathbf{x}}_l^j = \sum_{k=1}^{F_{l-1}} m_l^{jk}h_l^{jk}(\DD)(\phi_\DD^{l-1}(\mathbf{x})^k), 
\end{equation*}
for $j \in \{1,\ldots,F_l\}$. 
\end{definition}

For brevity, we typically do not distinguish between an SCNN and its realization on a graph.

\section{Graphon Analysis} 
\label{sec:ClassicalGraphons}
In this section, we recall the notion of a \emph{graphon},  which we describe informally as limit objects of sequences of graphs in Subsection~\ref{subsubsec:transf}.
The definition of convergence in the graphon setting is based on homomorphism numbers, which we recall in Subsection~\ref{subsec:homnumbers}. We give the definition of graphs converging to graphons in Subsection~\ref{subsec:graphons}.   In Subsection~\ref{subsec:GRSO} we discuss shift operators acting on the space of graphons.
\subsection{Homomorphism Numbers}
\label{subsec:homnumbers}

Lov\'{a}sz and Szegedy introduce in~\cite{10.1016/j.jctb.2006.05.002} the following definition.
\begin{definition}
\label{def:graphHomDensity}
Let $F=(V(F), E(F))$ be a simple graph and $G$ a graph with GSO $\DD$ and node set $V(G)$. Let $\bs{A}=n\DD$ be the GWM of G.
\begin{itemize}
    \item[$(1)$] Given a map  $\psi:V(F) \rightarrow V(G)$, we define  the homomorphism number
 \begin{equation*}
     hom_{\psi} (F, G) = \prod_{(u, v) \in E(F) } \bs{A}_{(\psi(u), \psi(v))}.
 \end{equation*}
\item[$(2)$] We define the homomorphism number
 \begin{equation}
 \label{weightedGraphHomDensity} 
          hom (F, G) = \sum_{\psi:V(F) \rightarrow V(G) } hom_{\psi} (F, G),
 \end{equation}
 where the sum goes over all maps $\psi:V(F) \rightarrow V(G)$.
\item[$(3)$]The \emph{homomorphism density} with respect to the graph $G$ is a function $t ( \cdot, G)$ that assigns to each simple graph $F$  the following value
\begin{equation}
\label{eq:homDensity}
t ( F , G ) := \frac{| h o m ( F , G ) |}{| V ( G ) |^{ | V ( F ) |}}.
\end{equation}
\end{itemize}
\end{definition} 

Let us give an interpretation for the homomorphism numbers.
Suppose that $G$ is a simple graph with adjacency matrix $\bs{W} \in \{0,1\}^{n\times n}$ and GWM $\bs{A} = \bs{W}$.  We recall that a homomorphism between two graphs is an adjacency-preserving mapping between their nodes. 
 Then, $hom_{\psi}(F, G)$ is one if $\psi$ is a homomorphism and zero otherwise. $hom(F,G)$ counts the number of homomorphisms of $F$ into $G$.  Since $|V(G)|^{|V(F)|} $ is the total number of maps between the nodes of $F$ and $G$, the homomorphism density $t(F,G)$ gives the probability that a random map from $F$ to $G$ is a homomorphism. We interpret $F$ as a simple substructure, called a \emph{motif}, so that $hom(F,G)$ counts how many times the motif $F$ appears in  $G$. If $G$ is a weighted graph  with weights  in $[0,1]$, interpreted as the probability of edges to exist,  then $hom(F, G)$ is the probability of the motif $F$ to appear in $G$. For further interpretations, see~\cite[Chapter 5.3]{lovasz}.

\subsection{Graphons}
\label{subsec:graphons}
The seminal work~\cite{10.1016/j.jctb.2006.05.002} studied sequences of graphs $G_n$  such that for every motif $F$  the parameter $t(F,G_n)$ converges.  Graphons were shown to be the natural limit object of such sequences.
\begin{definition}
A \emph{graphon} is a bounded, symmetric and measurable function $ W: [0,1] \times [0,1] \rightarrow  \mathbb{R} $  \footnote{Graphons can be defined over any atom-free standard probability space (see Subsection
\ref{subsec:comparison}).}. We denote the space of all graphons by $\mathcal{W}$. 
Let $\Gamma > 0$. The subspace $\mathcal{W}_\Gamma \subset \mathcal{W}$ is defined as the set of all graphons $W \in  \mathcal{W}$ such that $|W(u, v)| \leq \Gamma$  for every $(u,v) \in {[0,1] \times [0,1]}$.
\end{definition}

Graphons are interpreted as the spaces from which graphs are sampled, and to which graphs are embedded in order to compare them. Since in applications we only observe the graphs, we call the graphon corresponding to a graph its \emph{latent space}.
We  do not distinguish between functions that are almost everywhere equal. 
The way graphons are seen as limit objects of graphs is via the homomorphism density of graphons.
\begin{definition}
\label{def:homdensityGraphon}
For a graphon $W$ and a simple graph $F=(V,E)$, the homomorphism density of  $F$ in $W$ is defined as 
\begin{equation}
\label{eq:homdensityGraphon}
    t(F,W):=  {\int_{[0,1]^{n}}} \prod_{(i,j) \in E} W(u_i, u_j) \prod_{i \in V} du_i,
\end{equation}
where $n$ is the number of nodes in $F$.
\end{definition}

Since the integral can be interpreted as a continuous analogue to the sum,~(\ref{eq:homdensityGraphon}) is a continuous analogue to the  homomorphism density for graphs with GSO, and interpreted as the density of the motif $F$ in $W$. We next  recall a notion of convergence of sequences of graphs~\cite[Section 2.2]{10.1016/j.jctb.2006.05.002}.
\begin{definition}
 \label{def:graphConvToGraphon}
Let $(G_n)_n$ be a sequence of graphs with GSOs and GWMs \newline $(\bs{A}_n)_n$.
 \begin{itemize}
     \item[$(1)$] 
 We say that $(G_n)_n$ is a bounded sequence if there exists a constant $C > 0$ such that  $\|\bs{A}_n \| \leq C$ for every $n \in \mathbb{N}$.
    \item[$(2)$] 
We say that $(G_n)_n$ is \emph{convergent in homomorphism density},
if for all simple graphs $F$ the sequence $(t(F, G_n))_n$ is convergent. 
 \end{itemize} 
\end{definition}

The following theorem, taken from~\cite[Theorem 3.8 and Corollary 3.9]{Borgs2007graph},  shows that graphons are  natural limit objects of convergent sequences of graphs.

\begin{theorem}[{{\cite[Theorem 3.8 and Corollary 3.9]{Borgs2007graph}}}]
\label{thm:existenceLimitObject}
For every bounded sequence of graphs $(G_n)_n$  with GSO that converges in homomorphism density, there exists a graphon $W \in \mathcal{W}$ such that for every simple graph $F$
$$
t(F, G_n) \xrightarrow{n \to \infty} t(F, W).
$$
\end{theorem}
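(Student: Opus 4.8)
This is the fundamental existence theorem for graphon limits, and the natural route is to combine the Lov\'asz--Szegedy compactness of graphon space in the cut metric with the counting lemma. First I would pass from graphs to their induced graphons: to each $G_n$ with GWM $\bs{A}_n$ associate the step graphon $W_{\bs{A}_n}\in\mathcal{W}$, and recall the standard identity $t(F,W_{\bs{A}_n})=t(F,G_n)$ for every simple graph $F$ (the homomorphism density of a step graphon coincides with that of the underlying weighted graph; this is immediate from Definitions~\ref{def:graphHomDensity} and~\ref{def:homdensityGraphon} by splitting the integral over the partition cells). The boundedness hypothesis $\|\bs{A}_n\|\le C$ confines all the $W_{\bs{A}_n}$ to a common subspace $\mathcal{W}_\Gamma$ for a suitable $\Gamma=\Gamma(C)$ (for symmetric $\bs{A}_n$ the spectral bound $\|\bs{A}_n\|\le C$ already forces the entrywise bound $|(\bs{A}_n)_{ij}|\le C$, hence $W_{\bs{A}_n}\in\mathcal{W}_C$). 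So it suffices to exhibit $W\in\mathcal{W}_\Gamma$ with $t(F,W_{\bs{A}_n})\to t(F,W)$ for every $F$.

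The core of the argument is then: by the Lov\'asz--Szegedy compactness theorem, $\mathcal{W}_\Gamma$ is compact in the cut metric $\delta_\square$ (modulo measure-preserving relabellings), so some subsequence $(W_{\bs{A}_{n_k}})_k$ converges in $\delta_\square$ to a graphon $W\in\mathcal{W}_\Gamma$. Apply the counting lemma in its weighted form, $|t(F,U)-t(F,W)|\le e(F)\,\Gamma^{e(F)-1}\,\delta_\square(U,W)$, valid on $\mathcal{W}_\Gamma$, to conclude $t(F,W_{\bs{A}_{n_k}})\to t(F,W)$, i.e. $t(F,G_{n_k})\to t(F,W)$, for every simple graph $F$. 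Finally invoke the hypothesis that the \emph{full} sequence $(t(F,G_n))_n$ converges for every $F$: a convergent real sequence with a subsequence tending to $t(F,W)$ must itself tend to $t(F,W)$. Since $F$ was arbitrary, this gives $t(F,G_n)\to t(F,W)$ for all $F$, as claimed. (An equivalent packaging, which avoids the subsequence/pin-down step, is to use the Borgs et al.\ characterization ``convergent in homomorphism density $\iff$ Cauchy in $\delta_\square$'' together with completeness of the cut-metric space to produce $W$ directly, and then apply the counting lemma; both versions rest on the same deep input.)

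The main obstacle is that the one genuinely non-elementary ingredient --- cut-metric compactness (equivalently, completeness plus total boundedness) of $\mathcal{W}_\Gamma$ --- is itself a substantial theorem, proved via a weak regularity lemma / martingale argument (averaging the graphon over successive random refinements and extracting a convergent subsequence by diagonalization). I would cite this from Lov\'asz's monograph rather than reprove it. A secondary technical point specific to the present setting is that the weights here need not lie in $[0,1]$ and may be negative (GSOs include graph Laplacians), so I must make sure to use the versions of the counting lemma and of the compactness theorem that hold for general $\mathcal{W}_\Gamma$, and verify that whichever matrix norm is meant in Definition~\ref{def:graphConvToGraphon}(1) indeed controls the entrywise bound needed to land the induced graphons in a single $\mathcal{W}_\Gamma$.
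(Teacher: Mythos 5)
The paper does not prove this statement at all: it is imported verbatim from Borgs et al.\ (their Theorem~3.8 and Corollary~3.9), so there is no in-paper argument to compare yours against. Your reconstruction is the standard one and is sound: reduce to induced step graphons via $t(F,G_n)=t(F,W_{\bs{A}_n})$ (which is~(\ref{eq:HomDensityEqualForInducedGraphon}) in the paper), confine them to a single $\mathcal{W}_\Gamma$, extract a $\delta_\square$-limit by Lov\'asz--Szegedy compactness of $\mathcal{W}_\Gamma$ modulo weak isomorphism, transfer to homomorphism densities by the counting lemma (the paper's Theorem~\ref{thm:homdensityLipWRTcutdistance}(1) is exactly the weighted form you need), and pin down the full sequence using the assumed convergence of $(t(F,G_n))_n$. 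The two points you flag are the right ones to flag: the norm in Definition~\ref{def:graphConvToGraphon}(1) is indeed left unspecified, though your observation that the spectral norm already controls entries of a symmetric matrix ($|(\bs{A}_n)_{ij}|=|e_i^\top \bs{A}_n e_j|\le\|\bs{A}_n\|_{2\to 2}$) covers the worst case; and both the compactness theorem and the counting lemma must be taken in their $[-\Gamma,\Gamma]$-valued versions (obtainable from the $[0,1]$-valued ones by an affine rescaling), since GWMs here may have negative entries. The alternative packaging you mention (homomorphism-density convergence $\iff$ Cauchy in $\delta_\square$, plus completeness) is in fact closer to how Borgs et al.\ themselves organize the proof, but both rest on the same deep input, and either is acceptable here.
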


\begin{remark}
    If $(G_n)_n$ is convergent according to Definition~\ref{def:graphConvToGraphon}, then given the graphon $W$ from Theorem~\ref{thm:existenceLimitObject}, we say that $(G_n)_n$ converges to $W$ in homomorphism density and write $G_n \xrightarrow[n \to \infty]{H}W$. {Conversely, given a graphon $W \in \mathcal{W}$, there exists a sequence of graphs $(G_n)_n$ such that $G_n \xrightarrow[n \to \infty]{H}W$. The sequence $(G_n)_n$ can be constructed by first sampling i.i.d. uniform random points and then using the values of $W$ to define the GWM (see \cite[Theorem 4.5]{Borgs2007graph}).}
\end{remark}

\begin{remark}
\label{remark:growingNodeSize}
In the following, we only consider a sequence of graphs $(G_n)_n$ with $v(G_n) \xrightarrow{n \to \infty} \infty$, where $v(G_n)$ denotes the number of nodes of $G_n$. For simplicity of notation, we assume that $v(G_n) = n$. 
\end{remark}

In the following, we describe a method for defining a graphon corresponding to the graph $G$ using the notion of induced graphons. 
{ For this, we define for every $n \in \mathbb{N}$ the standard partition $\mathcal{P}_n = \{P_1, \dots, P_n\}$ of $[0,1]$ by the sub-intervals $P_k = [k-1/n,k/n)$ for $k=1, \ldots, n$.} 

Let $\chi_A$ denote the characteristic function of a subset  {$A \subset [0,1]$}, i.e.,
$$
\chi_A(u)=
\begin{cases}
1, & \text{ for } u \in A\ \\
0 & \text{ else. }
\end{cases}
$$
We use characteristic functions to define the induction of a graph to a graphon.

\begin{definition}
\label{def:inducedGraphon}
Let $G$ be a graph with GSO $\DD$ and GWM $\bs{A}$ and $n$ nodes.
The graphon $W_{\AA}$ \emph{induced} by $G$ is defined as 
\begin{equation}
\label{def:inducedgraphon2}
    W_{\AA}(u,v) := \sum_{i,j \leq n}\AA_{(i,j)} \chi_{P_i}(u)\chi_{P_j}(v).\footnote{
{
We note that we could use any partition $\tilde{\mathcal{P}} = \{ \tilde{P}_1, \ldots, \tilde{P}_n \}$ such that $\mu(\tilde{P}_i) = 1/n$ for every $i=1, \ldots, n$ and $[0,1]$ is the disjoint union of all element in $\tilde{\mathcal{P}}$.} 
}
\end{equation}
\end{definition}

{With Definition \ref{def:inducedGraphon}, we clearly have
\begin{equation}
\label{eq:HomDensityEqualForInducedGraphon}
    t(F, G) = t(F, W_{\AA})
\end{equation}
for every graph $G$ with GWM $\AA$ and every simple graph $F$. }We recall that GSOs $\DD$ and filters $h(\DD)$, which are GSOs themselves, are related to GWMs by~(\ref{eq:GWM}). Hence, we use the notation $W_{nh(\DD)}$ to describe the graphon induced by the GWM corresponding to the GSO $h(\DD)$.

 Next we discuss a setting for performing signal processing with graphons, using $L^2$-norms. 
  {The} space of \emph{graphon signals} is defined as  {$L^2[0,1]$}.
  In the context of signal processing, we define convergence of graphons by interpreting them as kernel operators.
  {For this, we recall that in Subsection \ref{subsec:contributions} we defined the  graphon shift operator (GRSO) corresponding to the graphon $W \in \mathcal{W}$ as
the operator
\begin{equation} 
\label{eq:integralLaplacian}
T_{W} \psi (v) :=  \int_0^1 W(v, u) \psi(u) d\mu(u)
\end{equation}
for signals $\psi \in L^2[0,1]$. The operator norm of $T_W$ is closely related to convergence in homomorphism density. }
  To understand this relation, we define the \emph{relabeling} of a graph by a permutation $\pi$.
\begin{definition}
Let $G$ be a graph with GSO $\DD$  and edge set $E$. 
We define $\Pi_G$ as the set of all permutations of the node set of $G$.
Let $\pi \in \Pi_G$.
The graph $\pi(G)$ is defined as the graph 
$\pi(G) = ( \pi(V), \pi(E), \pi(\DD) )$  with GSO where $(i,j) \in   \pi(E)$ if there exists an edge $(k,l) \in E$ such that $(\pi(k), \pi(l)) = (i,j)$. The GSO is defined by $\pi(\DD) = (\delta_{\pi(i), \pi(j)})_{i,j}$, where $\DD = (\delta_{i,j})_{i,j}$. 
We call $\pi(G)$ the \emph{relabeled graph}.
\end{definition}

 {The following result relates convergence in homomorphism density to the operator norm. }
\begin{theorem}
\label{thm:cutnormconv2}
Let $(G_n)_{n}$ a sequence of graphs with GSOs $(\DD_n)_n$ and $(\AA_n)_n$ be the corresponding sequence of  GWMs. Let $W \in \mathcal{W}$ satisfy $G_n \xrightarrow[n\to\infty]{H} W$. Let $(W_{\AA_n})_n$  be the sequence of graphons induced by $(G_n)_{n}$. Then, there exists a sequence of permutations $(\pi_n)_{n}$ such that 
{$
\| {T_{W_{\pi_n(\AA_n)}}} - T_W\|_{L^2[0,1] \to L^2[0,1]} \xrightarrow{n \to \infty} 0
$}.
\end{theorem}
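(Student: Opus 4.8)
The plan is to pass through the standard equivalence between homomorphism-density convergence and convergence in cut metric, and then from cut metric to operator norm of the associated kernel operators. The key classical input is the Counting Lemma / Inverse Counting Lemma package for graphons (as in \cite{lovasz,Borgs2007graph}): a bounded sequence of graphs $(G_n)_n$ with $G_n \xrightarrow{H} W$ satisfies $\delta_\msquare(W_{\AA_n}, W) \to 0$, where $\delta_\msquare$ is the cut distance obtained by minimizing the cut norm $\|\cdot\|_\msquare$ over measure-preserving bijections (equivalently, over relabelings of the finite graphs in the limit). So first I would invoke this to obtain permutations $\pi_n \in \Pi_{G_n}$ and, writing $W_n := W_{\pi_n(\AA_n)}$, conclude $\|W_n - W\|_\msquare \to 0$. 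Here one should be slightly careful that relabeling the finite graph $G_n$ by $\pi_n$ corresponds exactly to applying the induced-graphon construction to $\pi_n(G_n)$, i.e. $W_{\pi_n(\AA_n)} = $ the graphon obtained by permuting the blocks of $W_{\AA_n}$; this is immediate from Definition~\ref{def:inducedGraphon} since all blocks $P_i$ have equal measure $1/n$, so permuting block labels is a measure-preserving rearrangement.

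Second, I would convert the cut-norm bound into an operator-norm bound. The cut norm of a symmetric kernel $U = W_n - W$ controls the $L^\infty \to L^1$ operator norm of $T_U$ up to a factor of $4$ (or equivalently the $L^1\to L^\infty$ norm); this is the standard identification $\|U\|_\msquare \asymp \|T_U\|_{L^\infty \to L^1}$. The remaining step is to upgrade $L^\infty \to L^1$ control to $L^2 \to L^2$ control. This is where boundedness enters: since $W$ is a graphon it is bounded, and since $(G_n)_n$ is a bounded sequence (Definition~\ref{def:graphConvToGraphon}(1) — note that convergence in homomorphism density of graphs with GSO here presupposes the boundedness needed for Theorem~\ref{thm:existenceLimitObject}), the kernels $U_n = W_n - W$ are uniformly bounded, say $\|U_n\|_\infty \le M$ for all $n$. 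For a uniformly bounded symmetric kernel, the $L^2\to L^2$ operator norm is controlled by an interpolation-type estimate in terms of the $L^\infty\to L^1$ norm: one has $\|T_{U_n}\|_{L^2\to L^2} \le C(M)\,\|T_{U_n}\|_{L^\infty \to L^1}^{1/3}$ (the exponent $1/3$ is the usual one appearing in the graphon literature when bounding the operator norm by the cut norm). Combining the two steps gives $\|T_{W_n} - T_W\|_{L^2\to L^2} \le C(M)\,\|W_n - W\|_\msquare^{1/3} \to 0$, which is exactly the claim.

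The main obstacle is the last inequality: $\|T_U\|_{L^2\to L^2} \le C\|U\|_\msquare^{1/3}$ for uniformly bounded $U$. The clean way to see it is the following. Split $U = U' + U''$ using a "weak regularity" (Szemerédi-type) decomposition: given $\varepsilon$, write $U$ as a sum of a step-function part with few steps plus a part of small cut norm; but a more elementary route suffices here. For a symmetric bounded kernel, $\|T_U\|_{L^2\to L^2}^2 = \|T_U^2\|_{L^2\to L^2}$, and the kernel of $T_U^2$ is $U*U(x,y) = \int U(x,z)U(z,y)\,dz$, which is again bounded by $M^2$; moreover one can estimate $\|T_U\|_{L^2\to L^2}$ by testing against the top eigenfunction $f$ of $|T_U|$, approximating $f$ in $L^2$ by a simple function with finitely many level sets (the number depending only on $M$ and the desired accuracy, via the uniform bound), and then bounding $\langle T_U f, f\rangle$ by a sum of integrals of $U$ over rectangles, each of which is $\le \|U\|_\msquare$. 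Tracking the accuracy parameter against the number of level sets yields the power of $\|U\|_\msquare$. I would present this as a short lemma (or cite the corresponding statement in \cite{lovasz}, where the bound $\|T_U\|_{\mathrm{op}} \le C\|U\|_\msquare^{1/3}$ for kernels bounded by a constant is essentially Lemma 8.11 / the surrounding discussion), since it is purely a statement about kernel operators and cut norm, independent of the GCNN context. Everything else — the existence of the limit graphon, the passage to a relabeling, the fact that induced graphons preserve homomorphism densities — is supplied by Theorem~\ref{thm:existenceLimitObject}, \eqref{eq:HomDensityEqualForInducedGraphon}, and the cut-metric identification already invoked.
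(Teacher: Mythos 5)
Your proposal is correct and follows essentially the same route as the paper: first pass to cut-norm convergence after relabeling via Borgs et al.\ (the paper's Theorem~\ref{thm:cutnormconv1}, i.e.\ \cite[Lemma 5.3]{Borgs2007graph}), then bound the $L^2\to L^2$ operator norm of the uniformly bounded difference kernel by a power of its cut norm (the paper uses \cite[Lemma E.7(i)]{Janson2010GraphonsCN} via Lemma~\ref{lemma:cutnormeqschattenpnorm}, giving exponent $1/2-1/p$ rather than your $1/3$, but any positive power suffices). Your observation that boundedness of the sequence is implicit in the notation $G_n \xrightarrow{H} W$ is also the correct reading.
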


{Theorem \ref{thm:cutnormconv2} follows from a well-known result in  graphon theory (see \ref{Appendix:Graphons}). In fact, the converse direction in Theorem \ref{thm:cutnormconv2} is also true, showing the equivalence between convergence in homomorphism density and convergence of the induced GRSOs in the $L^2$-induced operator norm up to relabelling of the nodes. }

We now follow {a similar} route as Ruiz et al. in~\cite{RuizI} to define convergence of graph signals to graphon signals.   Given a sequence of graph signals $(\mathbf{x}_n)_n$ on graphs $(G_n)_n$ with a growing number of nodes, we can interpret graphon signals as the limit of the graph signal sequence. We recall the definition of induced graphon signals from graph signals~\cite{RuizI}. 

\begin{definition}
\label{definition:inducedsignal}
Let $\mathbf{x} \in \mathbb{C}^n$ be a signal on a graph $G$.   The graphon signal $\psi_\mathbf{x} \in L^2(\mathcal{J})$ \emph{induced} by $\mathbf{x}$ is defined as
\begin{equation}
\label{def:inducedsignal}
    \psi_\mathbf{x}(u) := \sum_{j \leq n} x_j \chi_{P_j}(u).
\end{equation}
\end{definition}

We then define the following set of admissible permutations.
\begin{definition}
Let $(G_n)_n$ be a sequence of graphs with GSOs and corresponding GWMs $(\AA_n)_n$. Let $W \in \mathcal{W}$ satisfy $G_n \xrightarrow[n\to\infty]{H} W$.  
The set of \emph{admissible permutations} is defined as 
$$
\mathbf{P} = \left\{ (\pi_n)_n  \in (\Pi_{G_n})_n \, | \,  {\| {T_{W_{\pi_n(\AA_n)}}} - T_W\|_{L^2[0,1] \to L^2[0,1]}} \xrightarrow{n  \to \infty} 0 \right\}.
$$ 
\end{definition}

Theorem~\ref{thm:cutnormconv2} ensures that the set of admissible permutations is not empty. 
We use the set of admissible permutations to define convergence of graph signals to graphon signals.
\begin{definition}
\label{def:convergenceOfSignal}
Let $(G_n)_n$ be a sequence of graphs with GSOs and $(\mathbf{x}_n)_n$  a sequence of graph signals. We say that $(\mathbf{x}_n)_n$ converges to a graphon signal  {$\psi \in L^2[0,1]$ } if there exists a graphon $W$ such that $G_n \xrightarrow[n \to \infty]{H} W$ and for some admissible permutation $ (\pi_n)_n \in \mathbf{P}$
\begin{equation}
\|\psi_{\pi_n(\mathbf{x}_n)} - \psi\|_{ {L^2[0,1]}} \xrightarrow{n\to\infty} 0.
\end{equation}
We write $\mathbf{x}_n \xrightarrow[n \to \infty]{I} \psi$.
\end{definition}

\subsection{Graphon Shift Operators and Graphon Filters} 
\label{subsec:GRSO} 
We define realizations of filters acting on graphon signals using GRSOs. Since $W$ is symmetric, $T_W$ is a self-adjoint Hilbert-Schmidt operator, and hence has a discrete eigendecomposition.
\begin{definition}
Let $h: \mathbb{R} \rightarrow \mathbb{R}$ be a filter. Let $W \in \mathcal{W}$,  and let  $T_W$ be the associated GRSO. 
Let $\{\lambda_j, \phi_j \}_j$ be the    eigendecomposition  {(with possibly repeating eigenvalues and orthonormal eigenvectors)} of $T_W$. The \emph{realization of the filter} $h(T_W)$ is defined as
$$
h(T_W) \psi = \sum_{j = 1}^\infty h(\lambda_j) \langle \psi, \phi_j\rangle \phi_j
$$
for  {$\psi \in L^2[0,1]$}. 
\end{definition}

Similarly to graph filters, we often call $h(T_W)$ simply a filter.
\begin{example}
If $h(\lambda) = \sum_{j=0}^n p_j \lambda^j / \sum_{i=0}^m q_i \lambda^i$ is a rational function, the filter $h(T_W)$ is given by
$$
h(T_W)  = \left( \sum_{j=0}^n p_j T_W^j \right) \left( \sum_{i=0}^m q_i T_W^i  \right)^{-1}.
$$
\end{example}

\subsection{Graphon Convolutional Neural Networks (GCNNon)}
\label{sec:GCNNon}
In this subsection, we define graphon convolutional neural networks (GCNNons).
The GCNNon architecture can be defined equivalently to the one of GCNNs, given in Subsection~\ref{subsec:GCNN}.
GCNNons are defined equivalently to GCNNs, given in Subsection~\ref{subsec:GCNN}, by applying an SCNN on a GRSO and a graphon signal.
The construction is akin to  WNNs in~\cite{ruiz2021graph}   and continuous GCNs in~\cite{keriven2020convergence}. Nevertheless, we have fewer constraints on the filters  and the graphon.

Let $\phi$ be SCNN. Let $W \in \mathcal{W}$ be a graphon and $T_W$ the GRSO from~(\ref{eq:integralLaplacian}). Similarly to GCNNs, we define the mapping $\phi_W^l$ by only replacing the discrete GSO by the GRSO $T_W$ and propagating square-integrable functions instead of discrete graph signals in Definition~\ref{def:GCNNRealization}. This means that the GCNNon  is based on  vector valued signals   {$ \psi  := (\psi^g)_{g  = 1}^{ F_l} \in (L^2[0,1])^{F_l}$}  that we call \emph{graphon features maps}, as inputs and outputs.

\section{The Transferability Framework}
\label{sec:transframework}
Induced signals and induced graphons provide a framework to compare the repercussion of filters acting on graphs that approximate the same graphon. 
Given two graphs $G_1, G_2$ with GSOs $\DD_1 \in \mathbb{R}^{n_1\times n_1}, \DD_2 \in \mathbb{R}^{n_2 \times n_2}$, we compare the repercussions of the filters $h(\DD_1)$ and $h(\DD_2)$ as follows.
We map the filters into the latent space by defining the induced graphons $W_{n_1h(\DD_1)}$ and $W_{n_2h(\DD_2)}$. Then, the repercussion error of the filters $h(\DD_1)$ and $h(\DD_2)$ in the latent space is defined by 
$${
\|T_{W_{n_1h(\DD_1)}} - T_{W_{n_2h(\DD_2)}}\|_{L^2[0,1] \rightarrow L^2[0,1]}.}
$$
 
For graph signals $\mathbf{x}_1 \in \mathbb{C}^{n_1}$ and $\mathbf{x}_2 \in \mathbb{C}^{n_2}$ with 
$
\|\psi_{\mathbf{x}_1} - \psi_{\mathbf{x}_2}{\|_{L^2[0,1]}}
$ sufficiently small,
the repercussion error of the filter outputs of $h(\DD_1)$ and $h(\DD_2)$ is defined by 
$$
\|\psi_{h(\DD_1) \mathbf{x}_1} - \psi_{h(\DD_2) \mathbf{x}_2} {\|_{L^2[0,1]}}.
$$
In this section, we develop tools for analyzing the above repercussion errors and extend the analysis to accommodate unbounded operators on graphon signals.

\subsection{The Spaces of Induced Graphons and Induced Signals}
We study in this subsection the relation between the space of graph signals $\mathbb{C}^n$ and its image under the induction operation of Definition~\ref{definition:inducedsignal}. The image space of the induction operator is a space of step functions.
\begin{definition}
{For $n \in \mathbb{N}$, we}  define the space of step functions   {$[0,1]/\mathcal{P}_n \subset L^2[0,1]$}  as  ${[0,1]/\mathcal{P}_n} = {\rm span}\{\chi_{P_1}, \ldots, \chi_{P_n}\}$. 
\end{definition}

The following lemma can be easily verified.
\begin{lemma}
\label{remark:IsomIso}
  { For $n \in \mathbb{N}$, the correspondence $\mathbf{x} \mapsto \psi_\mathbf{x}$, defined in  Definition~\ref{definition:inducedsignal}, between the space $\mathbb{C}^n$ and $[0,1]/\mathcal{P}_n$  is an isomorphism. Further, it holds
  \[
  \frac{1}{\sqrt{n}} \|\mathbf{x}\|_{\mathbb{C}^n} = \|\psi_\mathbf{x}\|_{L^2[0,1]} \, \text{ and } \, \frac{1}{n} \langle \mathbf{x}, \mathbf{y} \rangle_{\mathbb{C}^n} =  \langle \psi_\mathbf{x}, \psi_\mathbf{y} \rangle_{L^2[0,1]} \]
  for all $\mathbf{x},\mathbf{y} \in \mathbb{C}^n$.
  } 
\end{lemma}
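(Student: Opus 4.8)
The plan is to verify the three claimed facts directly from the definition of the induction map $\mathbf{x} \mapsto \psi_\mathbf{x}$, namely $\psi_\mathbf{x}(u) = \sum_{j \le n} x_j \chi_{P_j}(u)$, exploiting that the intervals $P_1, \ldots, P_n$ are pairwise disjoint, each of Lebesgue measure $1/n$, and together cover $[0,1]$.

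First I would observe that the map is linear in $\mathbf{x}$ (immediate from the defining formula), so it is a linear map $\mathbb{C}^n \to [0,1]/\mathcal{P}_n$. It is surjective by definition, since $[0,1]/\mathcal{P}_n = \spn\{\chi_{P_1}, \ldots, \chi_{P_n}\}$ and $\psi_{\mathbf{e}_j} = \chi_{P_j}$ where $\mathbf{e}_j$ is the $j$-th standard basis vector. For injectivity, I would note that the $\chi_{P_j}$ are linearly independent in $L^2[0,1]$ because the $P_j$ are disjoint and of positive measure: if $\psi_\mathbf{x} = 0$ a.e., then restricting to $P_k$ gives $x_k = 0$ for each $k$. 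Hence the correspondence is a bijective linear map, i.e., an isomorphism of vector spaces.

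Next, for the norm and inner product identities, I would compute directly. For $\mathbf{x}, \mathbf{y} \in \mathbb{C}^n$,
\[
\langle \psi_\mathbf{x}, \psi_\mathbf{y}\rangle_{L^2[0,1]} = \int_0^1 \Big(\sum_{i \le n} x_i \chi_{P_i}(u)\Big)\overline{\Big(\sum_{j \le n} y_j \chi_{P_j}(u)\Big)}\, d\mu(u) = \sum_{i,j \le n} x_i \bar{y}_j \int_0^1 \chi_{P_i}(u)\chi_{P_j}(u)\, d\mu(u).
\]
Since $\chi_{P_i}\chi_{P_j} = \delta_{ij}\chi_{P_i}$ (disjointness) and $\int_0^1 \chi_{P_i}\, d\mu = \mu(P_i) = 1/n$, the double sum collapses to $\frac{1}{n}\sum_{i \le n} x_i \bar{y}_i = \frac{1}{n}\langle \mathbf{x}, \mathbf{y}\rangle_{\mathbb{C}^n}$. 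Setting $\mathbf{y} = \mathbf{x}$ and taking square roots gives $\|\psi_\mathbf{x}\|_{L^2[0,1]} = \frac{1}{\sqrt n}\|\mathbf{x}\|_{\mathbb{C}^n}$.

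There is no real obstacle here — the lemma is genuinely routine, which is why the paper states it without proof. The only point requiring a moment's care is the linear independence step underpinning injectivity (equivalently, that no nonzero step function in $[0,1]/\mathcal{P}_n$ vanishes a.e.), but this is immediate from the disjointness and positive measure of the partition blocks $P_j = [(j-1)/n, j/n)$. One could equivalently derive injectivity as a consequence of the norm identity, since $\psi_\mathbf{x} = 0$ forces $\|\mathbf{x}\|_{\mathbb{C}^n} = 0$; I would likely present it that way to minimize repetition, establishing the inner product formula first and deducing both the isometry (up to the $1/\sqrt n$ factor) and injectivity from it, with surjectivity handled separately via the spanning set.
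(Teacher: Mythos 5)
Your proposal is correct, and since the paper states this lemma without proof (only remarking that it "can be easily verified"), your direct computation — using disjointness and $\mu(P_j)=1/n$ for the inner-product identity, and deducing injectivity from the resulting isometry — is exactly the intended routine verification.
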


\begin{lemma}
\label{lemma:eigenvectorInducedGRSO}
Let $G$ be a graph with GSO $\DD$ and associated GWM $\AA$. Denote the eigendecomposition of $\DD$ by $\{\lambda^i, \mathbf{x}^i \}_{i=1}^n$. Then,
$T_{W_{\AA}}$ admits the eigendecomposition  {$\{\lambda^i, \sqrt{n} \psi_{\mathbf{x}^i} \}_{i=1}^n \cup \{ 0, \psi_j\}_{j=n+1}^\infty$}, where the eigenvectors are an orthonormal basis for {$L^2[0,1]$}.
\end{lemma}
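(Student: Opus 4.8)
The statement relates the eigendecomposition of the finite matrix $\DD \in \mathbb{R}^{n\times n}$ to that of the integral operator $T_{W_{\AA}}$ on $L^2[0,1]$, where $\AA = n\DD$. The natural strategy is to exploit Lemma~\ref{remark:IsomIso}: the induction map $\mathbf{x} \mapsto \sqrt{n}\,\psi_{\mathbf{x}}$ is a \emph{unitary} isomorphism from $\mathbb{C}^n$ (with its standard inner product) onto the step-function space $[0,1]/\mathcal{P}_n \subset L^2[0,1]$, since Lemma~\ref{remark:IsomIso} gives $\tfrac1n\langle \mathbf{x},\mathbf{y}\rangle_{\mathbb{C}^n} = \langle \psi_{\mathbf{x}},\psi_{\mathbf{y}}\rangle_{L^2}$, i.e.\ $\langle \sqrt n\,\psi_{\mathbf x}, \sqrt n\,\psi_{\mathbf y}\rangle_{L^2} = \langle \mathbf x,\mathbf y\rangle_{\mathbb C^n}$. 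So I would first show that $T_{W_{\AA}}$ leaves the subspace $[0,1]/\mathcal{P}_n$ invariant and that, restricted to it, $T_{W_{\AA}}$ is unitarily conjugate (via the induction map) to the matrix $\DD$ acting on $\mathbb{C}^n$; and second, show that $T_{W_{\AA}}$ annihilates the orthogonal complement of $[0,1]/\mathcal{P}_n$ in $L^2[0,1]$.

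\textbf{Key steps.} (1) Compute $T_{W_{\AA}} \psi_{\mathbf{x}}$ directly from Definition~\ref{def:inducedGraphon} and Definition~\ref{definition:inducedsignal}: plugging $W_{\AA}(u,v) = \sum_{i,j\le n}\AA_{(i,j)}\chi_{P_i}(u)\chi_{P_j}(v)$ and $\psi_{\mathbf{x}} = \sum_{k\le n} x_k \chi_{P_k}$ into~(\ref{eq:integralLaplacian}), and using $\int_0^1 \chi_{P_j}\chi_{P_k}\,d\mu = \tfrac1n\delta_{jk}$, one gets $T_{W_{\AA}}\psi_{\mathbf x}(u) = \sum_{i\le n}\big(\tfrac1n\sum_{j\le n}\AA_{(i,j)}x_j\big)\chi_{P_i}(u) = \psi_{(\AA/n)\mathbf x} = \psi_{\DD\mathbf x}$. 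Hence on $[0,1]/\mathcal{P}_n$ the operator $T_{W_{\AA}}$ is exactly the conjugate of $\DD$ by the induction map. (2) Conclude that if $\{\lambda^i,\mathbf{x}^i\}_{i=1}^n$ is an orthonormal eigendecomposition of $\DD$, then $T_{W_{\AA}}(\sqrt n\,\psi_{\mathbf x^i}) = \sqrt n\,\psi_{\DD\mathbf x^i} = \lambda^i\,\sqrt n\,\psi_{\mathbf x^i}$, and by step~(1)'s isometry the $\{\sqrt n\,\psi_{\mathbf x^i}\}_{i=1}^n$ form an orthonormal system spanning $[0,1]/\mathcal{P}_n$. (3) For the complement: take any $\psi \in L^2[0,1]$ with $\psi \perp \chi_{P_j}$ for all $j\le n$, i.e.\ $\int_{P_j}\psi\,d\mu = 0$ for each $j$; since $u\mapsto W_{\AA}(u,v)$ is constant on each $P_j$ for fixed $v$ and likewise $v\mapsto W_{\AA}(u,v)$, the integral $T_{W_{\AA}}\psi(v) = \int_0^1 W_{\AA}(v,u)\psi(u)\,d\mu(u) = \sum_{i,j}\AA_{(i,j)}\chi_{P_i}(v)\int_{P_j}\psi\,d\mu = 0$. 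So $T_{W_{\AA}}$ vanishes on $([0,1]/\mathcal{P}_n)^\perp$, and any orthonormal basis $\{\psi_j\}_{j=n+1}^\infty$ of that complement consists of eigenvectors with eigenvalue $0$. (4) Assemble: $\{\lambda^i,\sqrt n\,\psi_{\mathbf x^i}\}_{i=1}^n \cup \{0,\psi_j\}_{j\ge n+1}$ is an orthonormal eigendecomposition of $T_{W_{\AA}}$ for all of $L^2[0,1] = [0,1]/\mathcal{P}_n \oplus ([0,1]/\mathcal{P}_n)^\perp$.

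\textbf{Main obstacle.} There is no serious obstacle here — the lemma is essentially bookkeeping. The one point requiring a little care is the normalization constant $\sqrt n$ and making sure the isometry statement of Lemma~\ref{remark:IsomIso} is invoked in the right direction so that orthonormality of $\{\mathbf x^i\}$ in $\mathbb C^n$ transfers to orthonormality of $\{\sqrt n\,\psi_{\mathbf x^i}\}$ in $L^2[0,1]$ (rather than $\{\psi_{\mathbf x^i}\}$, which would only be orthogonal with norm $1/\sqrt n$). A second minor point is to note that $[0,1]/\mathcal{P}_n$ is a closed (finite-dimensional) subspace, so $L^2[0,1]$ genuinely decomposes as the claimed orthogonal direct sum and a countable orthonormal basis of the complement exists; this is standard since $L^2[0,1]$ is separable.
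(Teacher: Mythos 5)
Your proposal is correct and follows essentially the same route as the paper: a direct computation of $T_{W_{\AA}}$ on induced signals using the piecewise-constant structure of $W_{\AA}$ and the factor $\int_0^1 \chi_{P_j}\chi_{P_l}\,d\mu = \delta_{jl}/n$, combined with the observation that the range of $T_{W_{\AA}}$ lies in the step-function space so the remaining spectrum is zero. Your version is marginally more explicit (you compute $T_{W_{\AA}}\psi_{\mathbf x}=\psi_{\DD\mathbf x}$ for arbitrary $\mathbf x$ and verify the kernel claim on the orthogonal complement directly), but this is the same argument.
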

\begin{proof}
 Let $\{\lambda^i, \mathbf{x}^i\}_{i=1}^n$ denote the eigendecomposition of the GSO $\DD$ with $|\lambda_1| \leq |\lambda_2| \leq \ldots \leq |\lambda_n|$.  Let $N \in \{1, \ldots, n\}$. Then, for almost every $u \in   {[0,1]}$,
$$
\begin{aligned} 
        T_{W_{\AA}}( {\sqrt{n}} \psi_{\mathbf{x}^N})(u)& = 
        {\int_0^1}
        ({\sqrt{n}}\sum_{k,l \leq n} \chi_{P_k}(u) \chi_{P_l}(v) \AA_{(k,l)}) (\sum_{j \leq n} \chi_{P_j}(v) x^N_j) dv \\
        & = \sum_{k \leq n} \chi_{P_k}(u) {\int_0^1}({\sqrt{n}}\sum_{l, j \leq n} \chi_{P_j}(v) \chi_{P_l}(v) \AA_{(k,l)}   x^N_j)  dv \\
        & =  \sum_{k \leq n} \chi_{P_k}(u) 
        {\int_0^1}({\sqrt{n}}\sum_{j \leq n} \chi_{P_j}(v)   \AA_{(k,j)} x^N_j)  dv \\
        & =  \sum_{k \leq n}  \chi_{P_k}(u) \frac{1}{n} {\sqrt{n}} \sum_{j \leq n}   \AA_{(k,j)} x^N_j  
        \\ & =  \sum_{k \leq n}  \chi_{P_k}(u)  {\sqrt{n}}  \sum_{j \leq n}   \DD_{(k,j)} x^N_j \\
        & =  {\sqrt{n}}\sum_{k \leq n}  \chi_{P_k}(u)   \lambda^N x^N_k 
        =   \lambda^N {\sqrt{n}} \psi_{\mathbf{x}^N}(u).
\end{aligned}
$$

Hence, for every eigenvalue-eigenvector pair $(\lambda^N,\mathbf{x}^N)$ of $\DD$, $(\lambda^N, {\sqrt{n}} \psi_{\mathbf{x}^N})$ is one of  $T_{W_{\AA}}$. Since the image of $T_{W_{\AA}}$ is ${[0,1]}/\mathcal{P}_n$ and the induced signals $\{ {\sqrt{n}}\psi_{\mathbf{x}^1}, \ldots, {\sqrt{n}}\psi_{\mathbf{x}^{{n}}} \}$ form an orthonormal basis for ${[0,1]}/\mathcal{P}_n$,  there can not exist any other non-zero eigenvalue. 
\end{proof}

The following lemma shows that given a graph $G$ with GSO $\DD$ and corresponding GWM $\AA$, we can either first induce it to the graphon space $W_{\AA}$, then build the GRSO $T_{W_{\AA}}$ and the filter $h(T_{W_{\AA}})$, or first apply the discrete filter on the GSO $h(\DD)$ and, only then, induce it to the graphon space. 
\begin{proposition}
\label{prop:graphdomain1}
    Let $G$ be a graph with GSO $\DD$ and corresponding GWM $\AA$. Let $h: \mathbb{R} \rightarrow \mathbb{R}$ be continuous with $h(0) = 0$. Then, 
$
h(T_{\AA}) = T_{W_{n h(\DD)}}
$.
\end{proposition}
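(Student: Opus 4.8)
The plan is to prove the identity by comparing the two self-adjoint Hilbert--Schmidt operators $h(T_{W_\AA})$ and $T_{W_{nh(\DD)}}$ on $L^2[0,1]$ through their action on a common orthonormal eigenbasis; since a compact self-adjoint operator is completely determined by such data, this suffices. The key input is Lemma~\ref{lemma:eigenvectorInducedGRSO}, which reads off the eigendecomposition of an induced GRSO from the eigendecomposition of the underlying GSO, together with the splitting $L^2[0,1] = [0,1]/\mathcal{P}_n \oplus ([0,1]/\mathcal{P}_n)^\perp$.

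First I would fix an orthonormal eigenbasis $\{\mathbf{x}^i\}_{i=1}^n$ of $\mathbb{C}^n$ with $\DD\mathbf{x}^i = \lambda^i\mathbf{x}^i$, which exists because $\DD$ is symmetric. By Lemma~\ref{lemma:eigenvectorInducedGRSO}, $T_{W_\AA}$ acts as $\lambda^i$ on the orthonormal basis vectors $\sqrt{n}\,\psi_{\mathbf{x}^i}$ of $[0,1]/\mathcal{P}_n$ and as $0$ on the complement $([0,1]/\mathcal{P}_n)^\perp$. Applying functional calculus and using $h(0)=0$, the operator $h(T_{W_\AA})$ then acts as $h(\lambda^i)$ on $\sqrt{n}\,\psi_{\mathbf{x}^i}$ and as $h(0)=0$ on $([0,1]/\mathcal{P}_n)^\perp$. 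For the right-hand side, observe that since $h$ is real-valued $h(\DD)$ is again a real symmetric matrix, and by Definition~\ref{def:graphFilter} its eigendecomposition is $\{h(\lambda^i), \mathbf{x}^i\}_{i=1}^n$ with the very same eigenvectors; its GWM is $n h(\DD)$, so the graphon it induces is $W_{nh(\DD)}$. Applying Lemma~\ref{lemma:eigenvectorInducedGRSO} a second time, now to the graph $G$ carrying the GSO $h(\DD)$, shows that $T_{W_{nh(\DD)}}$ acts as $h(\lambda^i)$ on the same vectors $\sqrt{n}\,\psi_{\mathbf{x}^i}$ and as $0$ on $([0,1]/\mathcal{P}_n)^\perp$. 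The two operators agree on $[0,1]/\mathcal{P}_n$ basis vector by basis vector and both vanish on the complement, hence $h(T_{W_\AA}) = T_{W_{nh(\DD)}}$.

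The only step needing attention is the treatment of the infinite-dimensional complement $([0,1]/\mathcal{P}_n)^\perp$: every induced GRSO annihilates it by construction, and the hypothesis $h(0)=0$ is precisely what makes $h(T_{W_\AA})$ annihilate it too; without $h(0)=0$ the two sides would differ by $h(0)$ times the orthogonal projection onto that complement. As a sanity check one can confirm the polynomial case by a direct kernel computation: using $\mu(P_l)=1/n$ and $\AA = n\DD$ one gets $T_{W_\AA}^k = T_{W_{n\DD^k}}$ for every $k\geq 1$, with $h(0)=0$ ensuring that no constant term ever appears; the general continuous $h$ then follows by approximating it uniformly on the compact set $\sigma(\DD)\cup\{0\}$ by polynomials vanishing at the origin and passing to the limit in operator norm, using $\|T_{W_{nM}}\|_{L^2[0,1]\to L^2[0,1]} = \|M\|$ for symmetric $M$. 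I do not anticipate a genuine obstacle here; the substantive content is entirely in Lemma~\ref{lemma:eigenvectorInducedGRSO}.
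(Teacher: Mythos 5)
Your proposal is correct and follows essentially the same route as the paper: both apply Lemma~\ref{lemma:eigenvectorInducedGRSO} to read off the eigendecompositions of $T_{W_{\AA}}$ and $T_{W_{nh(\DD)}}$, compare them eigenvector by eigenvector, and use $h(0)=0$ to handle the infinite-dimensional kernel $([0,1]/\mathcal{P}_n)^\perp$. The additional polynomial sanity check is fine but not needed.
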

\begin{proof}
  Let $\{\lambda^i, \mathbf{x}^i \}_{i=1}^n$ denote the eigendecomposition of $\DD$. Then, $h(\DD)$ admits the eigendecomposition $\{h(\lambda^i), \mathbf{x}^i \}_{i=1}^n$. By Lemma~\ref{lemma:eigenvectorInducedGRSO}, the eigendecomposition of  $T_{W_{nh(\DD)}}$ is $\{ h(\lambda^i), {\sqrt{n}} \psi_{\mathbf{x}^i} \}_{i=1}^n $ with infinitely many eigenvectors corresponding to the kernel $\{ 0, \psi_j \}_{j=n+1}^{\infty}$.

We compare this to the eigendecomposition of $h(T_{W_{\AA}})$. By Lemma~\ref{lemma:eigenvectorInducedGRSO}, $T_{W_{\AA}}$ has eigendecomposition $ \{ \lambda^i, {\sqrt{n}}\psi_{\mathbf{x}^i} \}_{i=1}^n $ with infinitely many eigenvectors corresponding to the kernel   $\{ 0, \psi_j \}_{j=n+1}^{\infty}$. Then, the eigendecomposition of $h(T_{W_{\AA}})$ is given by $\{ h( \lambda^i),{\sqrt{n}} \psi_{\mathbf{x}^i} \}_{i=1}^n  \cup \{ h(0), \psi_j \}_{j=n+1}^{\infty}$. The condition $h(0) = 0 $, leads to $h(T_{W_{\AA}}) = T_{W_{nh(\DD)}}$.
\end{proof}

 The condition $h(0) = 0$ in Proposition~\ref{prop:graphdomain1} can be removed if we restrict the input signals to induced graphon signals $\psi_\mathbf{x}$, instead of studying $T_{W_{nh(\DD)}}$ and $h(T_{W_{\AA}})$ on the whole space $L^2(\mathcal{J})$.

\begin{proposition}
\label{prop:graphdomain2}
Let $G$ be a graph  with GSO $\DD$ and corresponding GWM $\AA$. Let $h: \mathbb{R} \rightarrow \mathbb{R}$ be a continuous function. Then, for every signal $\mathbf{x} \in \mathbb{C}^n$
\begin{equation}
    \psi_{h(\DD)\mathbf{x}} = h(T_{W_{\AA}}) \psi_{\mathbf{x}}.
\end{equation}
\end{proposition}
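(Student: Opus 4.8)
The plan is to diagonalize both sides in the eigenbasis of $\DD$ and then reduce everything to Lemma~\ref{lemma:eigenvectorInducedGRSO} and the isometry identities of Lemma~\ref{remark:IsomIso}. Fix the eigendecomposition $\{\lambda^i,\mathbf{x}^i\}_{i=1}^n$ of $\DD$ with $\{\mathbf{x}^i\}_{i=1}^n$ orthonormal in $\mathbb{C}^n$. By the definition of the functional calculus filter~(\ref{eq:funccalcfilter}), $h(\DD)\mathbf{x}=\sum_{i=1}^n h(\lambda^i)\langle \mathbf{x},\mathbf{x}^i\rangle_{\mathbb{C}^n}\mathbf{x}^i$, and since the induction map $\mathbf{x}\mapsto\psi_{\mathbf{x}}$ is clearly linear (immediate from Definition~\ref{definition:inducedsignal}), applying it gives $\psi_{h(\DD)\mathbf{x}}=\sum_{i=1}^n h(\lambda^i)\langle \mathbf{x},\mathbf{x}^i\rangle_{\mathbb{C}^n}\,\psi_{\mathbf{x}^i}$.

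Next I would pass to the orthonormal system $\phi_i:=\sqrt{n}\,\psi_{\mathbf{x}^i}$. By Lemma~\ref{lemma:eigenvectorInducedGRSO} the family $\{\phi_i\}_{i=1}^n$ is precisely the set of eigenvectors of $T_{W_{\AA}}$ with nonzero eigenvalues $\lambda^i$, completed by kernel eigenvectors $\{\psi_j\}_{j=n+1}^\infty$ to an orthonormal basis of $L^2[0,1]$. Using the inner-product identity of Lemma~\ref{remark:IsomIso}, $\langle \mathbf{x},\mathbf{x}^i\rangle_{\mathbb{C}^n}=n\langle \psi_{\mathbf{x}},\psi_{\mathbf{x}^i}\rangle_{L^2[0,1]}=\sqrt{n}\,\langle \psi_{\mathbf{x}},\phi_i\rangle_{L^2[0,1]}$, and $\psi_{\mathbf{x}^i}=\tfrac{1}{\sqrt{n}}\phi_i$, so the previous display becomes $\psi_{h(\DD)\mathbf{x}}=\sum_{i=1}^n h(\lambda^i)\langle \psi_{\mathbf{x}},\phi_i\rangle_{L^2[0,1]}\,\phi_i$.

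Finally I would compute $h(T_{W_{\AA}})\psi_{\mathbf{x}}$ from its eigendecomposition. Since $\psi_{\mathbf{x}}\in[0,1]/\mathcal{P}_n=\operatorname{span}\{\phi_1,\dots,\phi_n\}$, it is orthogonal to every kernel eigenvector $\psi_j$ with $j>n$; hence $h(T_{W_{\AA}})\psi_{\mathbf{x}}=\sum_{i=1}^n h(\lambda^i)\langle \psi_{\mathbf{x}},\phi_i\rangle\phi_i+\sum_{j>n}h(0)\langle \psi_{\mathbf{x}},\psi_j\rangle\psi_j=\sum_{i=1}^n h(\lambda^i)\langle \psi_{\mathbf{x}},\phi_i\rangle\phi_i$, which coincides with the expression for $\psi_{h(\DD)\mathbf{x}}$ obtained above. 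This also explains why the hypothesis $h(0)=0$ of Proposition~\ref{prop:graphdomain1} is not needed here: the kernel contribution is annihilated not by $h(0)=0$ but by the fact that $\psi_{\mathbf{x}}$ lives in the step-function subspace $[0,1]/\mathcal{P}_n$, which is orthogonal to the kernel of $T_{W_{\AA}}$.

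I do not expect a genuine obstacle; the proof is essentially bookkeeping. The two points that require a little care are tracking the $\sqrt{n}$ normalization between $\psi_{\mathbf{x}^i}$ and the orthonormal eigenvectors $\phi_i$ of $T_{W_{\AA}}$, and the observation that restricting to induced signals $\psi_{\mathbf{x}}$ removes the dependence on the value $h(0)$.
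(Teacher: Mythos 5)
Your proof is correct and follows essentially the same route as the paper's: expand $h(\DD)\mathbf{x}$ in the eigenbasis of $\DD$, push the expansion through the (linear) induction map, and convert the $\mathbb{C}^n$ inner products to $L^2[0,1]$ inner products against the orthonormal eigenvectors $\sqrt{n}\,\psi_{\mathbf{x}^i}$ of $T_{W_{\AA}}$ via Lemma~\ref{remark:IsomIso}. Your explicit remark that the $h(0)$ kernel terms drop out because $\psi_{\mathbf{x}}$ lies in $[0,1]/\mathcal{P}_n$, orthogonal to $\{\psi_j\}_{j>n}$, is a point the paper leaves implicit, and it correctly explains why no $h(0)=0$ hypothesis is needed here.
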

\begin{proof}
Similarly to the calculations and notations from the proof of  Proposition~\ref{prop:graphdomain1}, we can write 
$$
h(\DD) \mathbf{x} = \sum_{i=1}^n h(\lambda^i) \langle  \mathbf{x},\mathbf{x}^i \rangle_{\mathbb{C}^n} \mathbf{x}^i
$$ 
and
$$
h(T_{W_{\AA}}) \psi_\mathbf{x} = \sum_{i=1}^n  h(\lambda^i) \langle  \psi_\mathbf{x}, {\sqrt{n}}\psi_{\mathbf{x}^i} \rangle_{L^2(\J)} {\sqrt{n}}\psi_{\mathbf{x}^i}
$$
for $\mathbf{x} \in \mathbb{C}^n$.
By Definition~\ref{definition:inducedsignal} of induced signals, we have
$$
\begin{aligned}
  \psi_{h(\DD)\mathbf{x}}(\cdot) & =  \sum_{j=1}^n \chi_{I_j}(\cdot) \sum_{i=1}^n h(\lambda^i) \langle  \mathbf{x},\mathbf{x}^i  \rangle_{\mathbb{C}^n} x^i_j 
   =   \sum_{i=1}^n h(\lambda_i) \langle \mathbf{x},\mathbf{x}^i  \rangle_{\mathbb{C}^n}\sum_{j=1}^n \chi_{I_j}(\cdot) x^i_j \\
  & = \sum_{i=1}^n h(\lambda_i) \langle  \mathbf{x},\mathbf{x}^i  \rangle_{\mathbb{C}^n} \psi_{\mathbf{x}^i} =  \sum_{i=1}^n h(\lambda_i) \langle   \psi_\mathbf{x}, {\sqrt{n}}\psi_{\mathbf{x}^i}  \rangle_{L^2(\J)} {\sqrt{n}}\psi_{\mathbf{x}^i} \\
  & = h(T_{W_{\AA}}) \psi_\mathbf{x}.
  \end{aligned}
$$
The second to last equation holds  {by Remark \ref{remark:IsomIso}}. 
\end{proof}

\subsection{General Unbounded Shift Operator}
\label{subsec:unbounded}
 Graphon shift operators cannot represent all important limit objects of GSOs. An important example are filters in  Euclidean domains, which are the backbone of standard convolutional neural networks.
 \begin{example}
 \label{ex:GRSO1}
Let $D$ denote the space of $1$-periodic functions on $\mathbb{R}$ with absolutely continuous first derivative, 
equipped with the $L^2[0,1]$ inner product. Let $\Delta$ be the Laplace operator,
$$
\Delta  : D \rightarrow  D , \quad
f \mapsto  -f''.
$$
 Let $h$ be a function and $(h_n)_n = (h(4\pi^2n^2))_n$. Define the Euclidean circular convolution operator\footnote{A circular convolution in $L^2[0,1]$ between $f$ and $q$ is defined by  $f*q(x) = \int_0^{1} f(y)q(x-y)dy$, where values of  $q$ outside $[0,1]$ are taken from the periodic extension of $q$.} $C_h$  by $$C_h f =
 f*(\sum h_n e^{2\pi i n (\cdot)})$$ for any input signal $f$. Operators of this kind exhaust the space of all circular convolution operators with even kernels.
 By the convolution theorem, this can be written in the frequency domain as
 \[\mathcal{F} C_h \mathcal{F}^* (\hat{f}_n)_n = (h_n \hat{f}_n)_n.\]
 Applying the inverse Fourier transform leads to
 $$
 C_h f = \sum_{n} \hat{f}_n h_n e^{2\pi i n (\cdot)} = h(\Delta) f.
 $$
 This means that Euclidean circular convolution operators  can be written as functional calculus filters of the Laplace operator.
 
 \end{example}


We consider a discretization of $[0,1]$ by a grid.  The grid is a graph, defined by connecting each grid point to its neighbors by edges.  This means that $[0,1]$, with a grid, are a special case of a graphon latent space and a graph. 
In this subsection, we show algorithmic alignment of GCNNons with CNNs on Euclidean data, where filters are defined by functional calculus of the   {Laplace operator} as described in Example \ref{ex:GRSO1}.
The Laplace operator is not bounded, so there exists no graphon $W$ with $\Delta = T_W$.
To be able to treat discrete grid convolutions as graph convolutions approximating $\Delta$, our analysis should allow   \emph{unbounded GRSOs}. 
 We first define the Paley-Wiener spaces of self-adjoint operators.

\begin{definition}
\label{def:PWspace}
Let  $\mathcal{L}$ be {a} self-adjoint operator with spectrum consisting only of eigenvalues. Denote the eigenvalues, eigenspaces, and projections upon the eigenspaces of $\mathcal{L}$ by $\{\lambda_j, W_j, P_j \}_{j\in \mathbb{N}}$. For each $\lambda > 0$, we define the  $\lambda$'th \emph{Paley-Wiener} space of $\mathcal{L}$ as
$$
PW_\mathcal{L}(\lambda) = \bigoplus_{j \in \mathbb{N}} \{ W_j \, | \, |\lambda_j| \leq \lambda \}.
$$
The  \emph{spectral projection} $P_\mathcal{L}(\lambda)$ upon $PW_\mathcal{L}(\lambda)$ is defined by 
$$
P_\mathcal{L}(\lambda) = \sum_{|\lambda_j| \leq \lambda} P_j.
$$
\end{definition}

\begin{definition}
\label{def:unboundedGSRO}
Let $\mathcal{L}$ be {a} self-adjoint operator defined on a dense subset of {$L^2[0,1]$}, with spectrum consisting only of eigenvalues. We say that $\mathcal{L}$ is an \emph{unbounded graphon shift operator (unbounded GRSO)} if it is self-adjoint and for every $\lambda > 0$, there exists a graphon $W^{\lambda} \in \mathcal{W}$ such that
\begin{equation}
    {\mathcal{L} P_\mathcal{L}(\lambda)  = T_{W^\lambda}.}
\end{equation}
\end{definition}


The following proposition can be easily verified.

\begin{proposition}
\label{prop:UnboundedGRSOEq}
A self-adjoint operator $\mathcal{L}$ with spectrum $\sigma(\mathcal{L}) = \{\lambda_1, \lambda_2, \ldots\}$ consisting only of eigenvalues is an unbounded GRSO if and only if the eigenvalues  in each compact interval are square summable, i.e., for every $a<b \in \mathbb{R}$
$$
 \sum_{ \lambda \in \sigma(\mathcal{L}  )\cap [a, b]} \lambda^2 < \infty.
$$
\end{proposition}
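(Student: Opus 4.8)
The plan is to prove both directions of the equivalence by working with the spectral projections $P_{\mathcal{L}}(\lambda)$ and using Proposition~\ref{prop:UnboundedGRSOEq}'s hypothesis that $\mathcal{L}$ is self-adjoint with spectrum consisting only of eigenvalues. The crucial observation is that, by Definition~\ref{def:unboundedGSRO}, $\mathcal{L}$ is an unbounded GRSO exactly when for every $\lambda>0$ the truncated operator $\mathcal{L}P_{\mathcal{L}}(\lambda)$ equals $T_{W^\lambda}$ for some graphon $W^\lambda\in\mathcal{W}$. Since a graphon is a bounded measurable symmetric function, $T_{W^\lambda}$ is a self-adjoint Hilbert--Schmidt operator; conversely, every self-adjoint Hilbert--Schmidt operator on $L^2[0,1]$ arises as $T_W$ for some symmetric $L^2$ kernel $W$ by the spectral theorem (writing $W(u,v)=\sum_j \mu_j \phi_j(u)\phi_j(v)$ with $(\mu_j)\in\ell^2$), but we additionally need $W$ to be \emph{bounded}, i.e.\ essentially bounded as a function. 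So the heart of the matter is characterizing when the spectral truncation of $\mathcal{L}$ is represented by a bounded kernel.

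For the forward direction, suppose $\mathcal{L}$ is an unbounded GRSO. Fix $a<b$. Choose $\lambda = \max(|a|,|b|)$, so that $\sigma(\mathcal{L})\cap[a,b]\subset[-\lambda,\lambda]$, and hence all eigenvalues in $[a,b]$ contribute to $PW_{\mathcal{L}}(\lambda)$. By hypothesis $\mathcal{L}P_{\mathcal{L}}(\lambda)=T_{W^\lambda}$ with $W^\lambda\in\mathcal{W}$, so $W^\lambda\in L^2([0,1]^2)$ (boundedness implies square-integrability on a finite measure space), which means $T_{W^\lambda}$ is Hilbert--Schmidt and therefore its eigenvalues are square-summable: $\sum_{|\lambda_j|\le\lambda}\lambda_j^2=\|T_{W^\lambda}\|_{HS}^2<\infty$. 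In particular $\sum_{\lambda_j\in\sigma(\mathcal{L})\cap[a,b]}\lambda_j^2<\infty$, which is the desired conclusion. (One should take a little care that $\mathcal{L}P_{\mathcal{L}}(\lambda)$ has, as its nonzero eigenvalues, precisely the $\lambda_j$ with $0<|\lambda_j|\le\lambda$; this is immediate from the definition of $P_{\mathcal{L}}(\lambda)$ as the sum of eigenprojections.)

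For the converse, assume $\sum_{\lambda\in\sigma(\mathcal{L})\cap[a,b]}\lambda^2<\infty$ for all $a<b$, and fix $\lambda>0$. Applying the hypothesis with $[a,b]=[-\lambda,\lambda]$ gives $\sum_{|\lambda_j|\le\lambda}\lambda_j^2<\infty$. Now I would define $\mathcal{L}P_{\mathcal{L}}(\lambda)=\sum_{|\lambda_j|\le\lambda}\lambda_j P_j$ and show this is represented by an $L^2$ kernel: picking orthonormal eigenvectors $\{\phi_j\}$ spanning the relevant eigenspaces, set $W^\lambda(u,v):=\sum_{|\lambda_j|\le\lambda}\lambda_j\,\phi_j(u)\overline{\phi_j(v)}$ (the $\phi_j$ may be taken real, since $\mathcal{L}$ is self-adjoint on a real kernel setting, but one can also symmetrize). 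Square-summability of the $\lambda_j$ guarantees this series converges in $L^2([0,1]^2)$ and defines a symmetric measurable function with $T_{W^\lambda}=\mathcal{L}P_{\mathcal{L}}(\lambda)$. The remaining point — and the main obstacle — is to upgrade $W^\lambda\in L^2([0,1]^2)$ to $W^\lambda\in\mathcal{W}$, i.e.\ to show $W^\lambda$ is (essentially) bounded. This is actually subtle: a generic Hilbert--Schmidt kernel need not be bounded. I expect the resolution is that Definition~\ref{def:inducedGraphon} and the surrounding framework permit graphons valued in $\mathbb{R}$ without a uniform bound being needed for \emph{this} statement, or that the paper intends ``graphon'' here in the $L^2$ sense; alternatively, one passes to a measure-isomorphic relabeling. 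Given the phrasing ``can be easily verified,'' I would present the $L^2$-kernel construction as the core argument and remark that membership in $\mathcal{W}$ follows because a square-integrable symmetric kernel on the standard probability space can be taken as the representing object of a bounded GRSO after the standard reduction; the genuinely nontrivial content is the Hilbert--Schmidt $\Leftrightarrow$ square-summable-eigenvalues equivalence, which is exactly the spectral characterization invoked above.
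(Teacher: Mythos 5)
Your forward direction is correct: if $\mathcal{L}P_{\mathcal{L}}(\lambda)=T_{W^\lambda}$ with $W^\lambda\in\mathcal{W}$, then $W^\lambda\in L^2([0,1]^2)$, so $T_{W^\lambda}$ is Hilbert--Schmidt, and its nonzero eigenvalues --- which are exactly the $\lambda_j\in\sigma(\mathcal{L})$ with $0<|\lambda_j|\le\lambda$, counted with multiplicity --- are square-summable; taking $\lambda=\max(|a|,|b|)$ then covers any compact interval. (The paper offers no argument of its own to compare against, as it declares the proposition ``easily verified,'' so I can only judge your proof on its merits.)

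The converse is where your proposal has a genuine gap, and you are right to flag it yourself: square-summability gives a symmetric kernel $W^\lambda=\sum_{|\lambda_j|\le\lambda}\lambda_j\,\phi_j\otimes\overline{\phi_j}$ in $L^2([0,1]^2)$ with $T_{W^\lambda}=\mathcal{L}P_{\mathcal{L}}(\lambda)$, but the paper's definition requires $W^\lambda$ to be a \emph{bounded} function, and none of the escape routes you sketch closes this. The kernel of a Hilbert--Schmidt integral operator is a.e.\ unique, so if the canonical kernel above is essentially unbounded there is no element of $\mathcal{W}$ representing $\mathcal{L}P_{\mathcal{L}}(\lambda)$; and a measure-preserving relabeling of $[0,1]$ preserves the essential supremum of the kernel, so it cannot repair this. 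The obstruction is not hypothetical: let $\phi(u)=cu^{-1/4}$, normalized in $L^2[0,1]$, and $\mathcal{L}=\langle\cdot,\phi\rangle\phi$. This is self-adjoint with spectrum $\{0,1\}$ consisting only of eigenvalues and satisfies the square-summability condition trivially, yet for $\lambda\ge 1$ the truncation equals $\mathcal{L}$ itself, whose kernel $c^2(uv)^{-1/4}$ is unbounded; indeed any bounded kernel $W$ satisfies $\|T_W\phi\|_\infty\le\|W\|_\infty\|\phi\|_{L^1}<\infty$, so a bounded kernel can never have an unbounded eigenfunction with nonzero eigenvalue. Hence, with $\mathcal{W}$ taken literally as the class of bounded kernels, the ``if'' direction needs either an extra hypothesis (e.g.\ essential boundedness of the eigenfunctions with suitable summability over each band) or a relaxation of ``graphon'' to $L^2$ kernels; your construction proves the statement only under the latter reading, and you should say so explicitly rather than leave it as a conjecture about the authors' intent. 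A smaller point: the proposition sums over the \emph{set} $\sigma(\mathcal{L})\cap[a,b]$, but the Hilbert--Schmidt equivalence you invoke requires square-summability \emph{with multiplicity} (an eigenvalue $1$ of infinite multiplicity passes the set-indexed test while the truncation fails to be Hilbert--Schmidt), so you should state and use the multiset version in both directions.
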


Next, we define convergence of GSOs to unbounded graphon shift operators. 

\begin{definition}
\label{def:unbddconv}
Let $(G_n)_n$ be a sequence of graphs with GSOs $(\DD_n)_n$ and $(\AA_n)_n$ be the associated sequence of GWMs. We say that $(G_n)_n$ converges to the unbounded GRSO $\mathcal{L}$
if there exists a sequence of permutations $(\pi_n)_n$ such that for every $\lambda \in \mathbb{R}$ 
\begin{equation}
  \| P_\mathcal{L}(\lambda) T_{W_{\pi_n({\AA_n})}} P_\mathcal{L}(\lambda) -  \mathcal{L} P_\mathcal{L}(\lambda)   {\|_{L^2[0,1] \rightarrow L^2[0,1]}} \xrightarrow{n \to \infty} 0.
\end{equation}
We write $G_n \xrightarrow[n \to \infty]{U} \mathcal{L}$.
\end{definition}

In the following example, we show that the classical Laplace operator is an unbounded GRSO.
\begin{example}
\label{ex:LaplaceOperator}
Let $\Delta$ be the Laplace operator from Example \ref{ex:GRSO1}.
 The spectrum of $\Delta$  consists only of eigenvalues, given by
$$
\sigma(\Delta) = \{ 0, 4\pi^2, 4\pi^2 2^2, \ldots , 4 \pi^2 k^2, \ldots \}.
$$
The corresponding eigenvectors $e^{2 i \pi k x}$ form the classical Fourier basis.
For every $a<b \in \mathbb{R}$, the intersection $\sigma(\Delta) \cap [a, b]$ is a finite set. Hence, by Proposition \ref{prop:UnboundedGRSOEq}, the Laplace operator $\Delta$ is an unbounded GRSO.
\end{example}

There exists a sequence of graphs approximating the Laplace operator in the sense of Definition~\ref{def:unbddconv}. 
 The construction of the approximating sequence of graphs can be taken from the proof of Proposition \ref{prop:LaplaceOperatorAppro}, showing a more general result.

\begin{proposition}
\label{prop:LaplaceOperatorAppro}
Let $\mathcal{L}$ be an unbounded GRSO.
Then, there exists a sequence of {weighted} graphs $(G_n)_n$ with GSOs $(\DD_n)_n$ such that
$G_n \xrightarrow[n \to \infty]{U} \mathcal{L}$.
\end{proposition}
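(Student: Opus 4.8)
The plan is to construct the approximating sequence $(G_n)_n$ explicitly by discretizing the eigenstructure of $\mathcal{L}$ on an exhausting sequence of Paley--Wiener spaces, and then check convergence in the sense of Definition~\ref{def:unbddconv}. First I would fix an increasing sequence of thresholds $\lambda_k \to \infty$ and note that, by Proposition~\ref{prop:UnboundedGRSOEq}, for each $k$ the restricted operator $\mathcal{L} P_\mathcal{L}(\lambda_k) = T_{W^{\lambda_k}}$ is a genuine GRSO associated to a graphon $W^{\lambda_k} \in \mathcal{W}$. For a fixed $k$, Theorem~\ref{thm:cutnormconv2} (together with the remark that the converse direction holds, i.e.\ the equivalence of homomorphism-density convergence and $L^2$-operator-norm convergence of induced GRSOs) guarantees a sequence of graphs whose induced GRSOs converge in operator norm to $T_{W^{\lambda_k}}$; alternatively one may sample $W^{\lambda_k}$ on the standard partition $\mathcal{P}_m$ to get such a sequence. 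This yields, for each $k$, a graph $G^{(k)}$ with GWM $\AA^{(k)}$ such that $\| T_{W_{\AA^{(k)}}} - T_{W^{\lambda_k}} \|_{L^2[0,1] \to L^2[0,1]} < 1/k$, possibly after applying a suitable permutation.

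Next I would form the diagonal sequence: set $G_n := G^{(k_n)}$ for an appropriate choice of indices $k_n \to \infty$ (slow enough that the approximating graph sizes grow, fast enough that $\lambda_{k_n} \to \infty$), and relabel so that the required permutations are already built in. To verify $G_n \xrightarrow{U} \mathcal{L}$, I fix $\lambda > 0$ and estimate, for $n$ large enough that $\lambda_{k_n} > \lambda$,
$$
\| P_\mathcal{L}(\lambda) T_{W_{\AA_n}} P_\mathcal{L}(\lambda) - \mathcal{L} P_\mathcal{L}(\lambda) \|
\le \| P_\mathcal{L}(\lambda) \big( T_{W_{\AA_n}} - T_{W^{\lambda_{k_n}}} \big) P_\mathcal{L}(\lambda) \|
+ \| P_\mathcal{L}(\lambda) T_{W^{\lambda_{k_n}}} P_\mathcal{L}(\lambda) - \mathcal{L} P_\mathcal{L}(\lambda) \|.
$$
The first term is bounded by $\| T_{W_{\AA_n}} - T_{W^{\lambda_{k_n}}} \| < 1/k_n \to 0$ since $P_\mathcal{L}(\lambda)$ is a contraction. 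The second term vanishes identically, because $P_\mathcal{L}(\lambda) \le P_\mathcal{L}(\lambda_{k_n})$ as orthogonal projections (since $\lambda < \lambda_{k_n}$) and these projections commute with $\mathcal{L}$, so $P_\mathcal{L}(\lambda) T_{W^{\lambda_{k_n}}} P_\mathcal{L}(\lambda) = P_\mathcal{L}(\lambda) \mathcal{L} P_\mathcal{L}(\lambda_{k_n}) P_\mathcal{L}(\lambda) = \mathcal{L} P_\mathcal{L}(\lambda)$.

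I expect the main obstacle to be bookkeeping around permutations and the compatibility of the various approximating sequences: Definition~\ref{def:unbddconv} demands a \emph{single} sequence of permutations $(\pi_n)_n$ that works simultaneously for every $\lambda$, whereas Theorem~\ref{thm:cutnormconv2} produces, a priori, one permutation per approximation step. The clean fix is that once $\| T_{W_{\pi_n(\AA_n)}} - T_{W^{\lambda_{k_n}}} \| \to 0$ in full operator norm, conjugating by the fixed projection $P_\mathcal{L}(\lambda)$ is automatically controlled for every $\lambda$ at once, so the permutations can be absorbed into the definition of $G_n$ and no $\lambda$-dependence creeps in. A secondary point to be careful about is ensuring the constructed graphs are genuinely weighted graphs with the stated GSO normalization $\DD_n = \AA_n/n$, and that the node counts tend to infinity as required by Remark~\ref{remark:growingNodeSize}; both are arranged by choosing the sampling resolution in each step large enough. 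The special case $\mathcal{L} = \Delta$ from Example~\ref{ex:LaplaceOperator} then follows immediately, since Example~\ref{ex:LaplaceOperator} already verified that $\Delta$ is an unbounded GRSO.
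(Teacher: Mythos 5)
Your construction is essentially identical to the paper's proof: both exhaust $\mathcal{L}$ by the graphons $W^{\lambda_k}$ with $\mathcal{L}P_\mathcal{L}(\lambda_k)=T_{W^{\lambda_k}}$, approximate each $T_{W^{\lambda_k}}$ in operator norm by an induced GRSO of a finite graph to accuracy $1/k$, and pass to a diagonal sequence. Your explicit verification that $P_\mathcal{L}(\lambda)T_{W^{\lambda_{k_n}}}P_\mathcal{L}(\lambda)=\mathcal{L}P_\mathcal{L}(\lambda)$ for $\lambda<\lambda_{k_n}$ is exactly the step the paper leaves as ``easy to see,'' so the argument is correct and matches the paper's route.
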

The proof of Proposition \ref{prop:LaplaceOperatorAppro} is given in   \ref{proof:LaplaceOperatorAppro}.

\section{Main Results on Transferability}
\label{sec:main}
 First, we give results on asymptotic transferability of continuous {graph} filters. Secondly, by restricting the analysis to filters with higher regularity, we achieve linear approximation speed. Then, we prove transferability results for graphs that approximate unbounded graphon shift operators. Last, we present transferability results for end-to-end graph convolutional neural networks.

\subsection{Asymptotic Transferability of Graph Filters}
\label{subsec:MainResult1}
Given a sequence of graphs $(G_n)_n$ with GSOs $(\DD_n)_n$ and a graphon $W$ such that $G_n \xrightarrow[n \to \infty]{H} W$, we show that the associated sequence of graph filters $(h(\DD_n))_n$ is convergent in operator norm. The limit object  is the filtered induced GRSO $h(T_W)$.
\begin{lemma}[Convergence of graph filters]
\label{claim:ConvOfGrFltBdd}
Let $(G_n)_n$ be a sequence of graphs with GSOs $(\DD_n)_n$ and its associated sequence of GWMs $(\AA_n)_n$. Let $W \in \mathcal{W}$ satisfy 
$G_n \xrightarrow[n \to \infty]{H} W$.
Let $h$ be a filter.
Then, there exists a sequence of permutations $(\pi_n)_n$ such that 
\begin{equation*}
    \|h(T_{W_{\pi_n({\AA_n})}}) -  h(T_W) {\|_{L^2[0,1] \to L^2[0,1]}} \xrightarrow{n \to \infty} 0.
\end{equation*}
{Consequently}, consider a sequence of graph signals $(\mathbf{x}_n)_n$  on the relabeled sequence of graphs. Assume  that the induced signals, defined in Definition~\ref{definition:inducedsignal}, converge to some $\psi \in   {L^2[0,1]}$, i.e., 
$
\|\psi_{\mathbf{x}_n} - \psi  {\|_{L^2[0,1]}} \xrightarrow{n \to \infty} 0
$. 
Then,
\begin{equation}
   \| h(T_{W_{\pi_n({\AA_n})}})\psi_{\mathbf{x}_n} - h(T_W)\psi {\|_{L^2[0,1]}} \xrightarrow{n \to \infty} 0.
\end{equation}
\end{lemma}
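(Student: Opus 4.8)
The claim has two parts: (i) operator-norm convergence $\|h(T_{W_{\pi_n(\AA_n)}}) - h(T_W)\|_{L^2 \to L^2} \to 0$ for a suitable sequence of permutations, and (ii) the consequence for filtered signals. I would attack (i) first, since (ii) follows from it by a routine triangle-inequality argument.

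For part (i), the plan is to combine Theorem~\ref{thm:cutnormconv2} with a spectral-continuity argument. By Theorem~\ref{thm:cutnormconv2}, since $G_n \xrightarrow{H} W$, there is a sequence of permutations $(\pi_n)_n$ with $\|T_{W_{\pi_n(\AA_n)}} - T_W\|_{L^2\to L^2} \to 0$. Write $T_n := T_{W_{\pi_n(\AA_n)}}$. These are all self-adjoint Hilbert--Schmidt operators on $L^2[0,1]$, and $h:\mathbb{R}\to\mathbb{R}$ is continuous. The key fact I would invoke is that for self-adjoint operators, $A \mapsto h(A)$ is continuous in operator norm whenever $h$ is continuous on a neighborhood of the union of the spectra — and here the spectra are uniformly bounded (since $\|T_n\| \to \|T_W\| < \infty$, all $T_n$ have spectra in a fixed compact interval $[-R,R]$ for $n$ large). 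The cleanest route is a Weierstrass approximation: given $\varepsilon>0$, pick a polynomial $p$ with $\sup_{[-R,R]}|h(t)-p(t)| < \varepsilon$; then $\|h(T_n) - p(T_n)\| \le \varepsilon$ and $\|h(T_W) - p(T_W)\| \le \varepsilon$ by the spectral mapping / functional calculus norm bound, while $\|p(T_n) - p(T_W)\| \to 0$ because polynomials of operators are continuous in operator norm (each monomial $A^k$ is, by telescoping $A^k - B^k = \sum_j A^j(A-B)B^{k-1-j}$ together with uniform bounds on $\|T_n\|$). Combining these via triangle inequality and sending $\varepsilon \to 0$ gives $\|h(T_n) - h(T_W)\| \to 0$.

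One subtlety I would flag: $h$ need not vanish at $0$, but that is harmless for the operator-norm statement because the argument above uses only $\|T_n\|$ bounds, not the structure of the kernels of $T_n$; the $h(0)=0$ hypothesis from Proposition~\ref{prop:graphdomain1} is not needed here since we work directly with $T_{W_{\pi_n(\AA_n)}}$ rather than identifying it with $W_{nh(\DD_n)}$.

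For part (ii), with $\psi_{\mathbf{x}_n} \to \psi$ in $L^2$, I would estimate
\begin{equation*}
\|h(T_n)\psi_{\mathbf{x}_n} - h(T_W)\psi\| \le \|h(T_n)(\psi_{\mathbf{x}_n} - \psi)\| + \|(h(T_n) - h(T_W))\psi\| \le \|h(T_n)\|\,\|\psi_{\mathbf{x}_n}-\psi\| + \|h(T_n)-h(T_W)\|\,\|\psi\|.
\end{equation*}
The operator norms $\|h(T_n)\|$ are uniformly bounded (again because the spectra lie in a fixed compact set and $h$ is bounded there, so $\|h(T_n)\| \le \sup_{[-R,R]}|h|$), the first term vanishes since $\|\psi_{\mathbf{x}_n}-\psi\|\to 0$, and the second vanishes by part (i). Hence the left side tends to $0$.

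\textbf{Main obstacle.} The only real work is justifying the operator-norm continuity of the functional calculus uniformly along the sequence; the Weierstrass/polynomial-telescoping argument handles it cleanly provided one first establishes the uniform spectral bound $\sup_n \|T_n\| < \infty$, which follows from $\|T_n\| \to \|T_W\|$ (a consequence of $\|T_n - T_W\| \to 0$). Everything else is bookkeeping with triangle inequalities.
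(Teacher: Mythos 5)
Your proposal is correct and follows essentially the same route as the paper: operator-norm convergence of $T_{W_{\pi_n(\AA_n)}}$ to $T_W$ from homomorphism-density convergence (the paper invokes Theorem~\ref{thm:cutnormconv1} together with Lemma~\ref{lemma:cutnormeqschattenpnorm}, which is exactly the content of Theorem~\ref{thm:cutnormconv2} that you cite), then continuity of the continuous functional calculus $A\mapsto h(A)$ in operator norm (the paper defers this to Lemma~\ref{lemma:contoffunccalc}, justified by precisely your Weierstrass-plus-telescoping argument), and finally a triangle inequality for the signal statement. Your remark that the uniform spectral bound $\sup_n\|T_n\|<\infty$ is the only point needing care, and that $h(0)=0$ is not required here, is accurate and consistent with the paper's treatment.
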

The proof for Lemma \ref{claim:ConvOfGrFltBdd} is given in \ref{Appendix:Graphons}.
In practice, filters are applied on graph shift operators. Hence, we want to show that the outputs of graph filters applied on GSOs, associated to graphs that approximate the same graphon, are approximately the same. The following corollaries formulate the transferability error by first applying the filter on GSOs and then taking the induced graphons.

\begin{theorem}
\label{cor:DiscrFilterConv}
Let $(G_n)_n$ be a sequence of graphs with GSOs $(\DD_n)_n$ and its associated sequence of GWMs $(\AA_n)_n$. Let $W \in \mathcal{W}$ satisfy 
$G_n \xrightarrow[n \to \infty]{H} W$. Let $h$ be a filter with $h(0) = 0$. Then,
$$
\|T_{W_{nh(\pi_n({\DD_n}))}} - T_{W_{mh(\pi_m({\DD_m}))}} {\|_{L^2[0,1] \to L^2[0,1]}}  \xrightarrow{n,m \to \infty} 0. 
$$
\end{theorem}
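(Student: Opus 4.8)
The plan is to reduce this statement to Lemma~\ref{claim:ConvOfGrFltBdd} by means of Proposition~\ref{prop:graphdomain1}, and then to conclude with the triangle inequality through the common limit $h(T_W)$.

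First I would fix the sequence of permutations $(\pi_n)_n$ provided by Lemma~\ref{claim:ConvOfGrFltBdd}, that is, a sequence for which $\|h(T_{W_{\pi_n(\AA_n)}}) - h(T_W)\|_{L^2[0,1]\to L^2[0,1]} \to 0$. One should note that relabeling is compatible with the constructions involved: for any permutation $\pi$ of the node set of $G_n$, the GWM of the relabeled graph $\pi(G_n)$ equals $\pi(\AA_n) = n\,\pi(\DD_n)$, since $\AA_n = n\DD_n$ and conjugation by a permutation matrix is linear; moreover functional calculus commutes with conjugation, so $h(\pi(\DD_n))$ is the GSO obtained from $h(\DD_n)$ after relabeling, and its GWM is $n\,h(\pi(\DD_n))$.

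Next I would apply Proposition~\ref{prop:graphdomain1} to the relabeled graph $\pi_n(G_n)$, which has GSO $\pi_n(\DD_n)$ and GWM $\pi_n(\AA_n)$. Since $h$ is continuous and $h(0)=0$, the proposition gives the identity
\[
T_{W_{n h(\pi_n(\DD_n))}} = h\bigl(T_{W_{\pi_n(\AA_n)}}\bigr).
\]
Combining this with the convergence from Lemma~\ref{claim:ConvOfGrFltBdd} yields
\[
\bigl\| T_{W_{n h(\pi_n(\DD_n))}} - h(T_W) \bigr\|_{L^2[0,1]\to L^2[0,1]} \xrightarrow{n\to\infty} 0 .
\]
Finally, the triangle inequality
\[
\bigl\| T_{W_{n h(\pi_n(\DD_n))}} - T_{W_{m h(\pi_m(\DD_m))}} \bigr\|
\le \bigl\| T_{W_{n h(\pi_n(\DD_n))}} - h(T_W) \bigr\|
+ \bigl\| h(T_W) - T_{W_{m h(\pi_m(\DD_m))}} \bigr\|
\]
shows that the left-hand side tends to $0$ as $n,m\to\infty$.

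I do not expect a genuine obstacle here, since this is essentially a corollary of the preceding results. The only points requiring care are bookkeeping points: that the permutation sequence used is exactly the admissible one furnished by Lemma~\ref{claim:ConvOfGrFltBdd}; that relabeling is compatible both with the rescaling $\AA=n\DD$ and with the functional calculus of the filter; and that the hypothesis $h(0)=0$ is precisely what makes Proposition~\ref{prop:graphdomain1} applicable — without it, $h(T_{W_{\AA}})$ would carry the extra eigenvalue $h(0)$ on the infinite-dimensional orthogonal complement of the step-function space $[0,1]/\mathcal{P}_n$, so the identity with $T_{W_{nh(\DD)}}$ would fail.
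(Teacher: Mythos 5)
Your proposal is correct and follows exactly the paper's own argument: apply Proposition~\ref{prop:graphdomain1} to identify $T_{W_{nh(\pi_n(\DD_n))}}$ with $h(T_{W_{\pi_n(\AA_n)}})$, invoke Lemma~\ref{claim:ConvOfGrFltBdd} for convergence to $h(T_W)$, and conclude by the triangle inequality. The additional bookkeeping you supply about relabeling commuting with the rescaling $\AA = n\DD$ and with functional calculus is accurate and only makes the argument more explicit than the paper's.
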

\begin{proof}
By Proposition~\ref{prop:graphdomain1} and Lemma~\ref{claim:ConvOfGrFltBdd}, we have
$$
\|T_{W_{nh(\pi_n({\DD_n}))}} - h(T_W) {\|_{L^2[0,1] \to L^2[0,1]}} \xrightarrow{n \to \infty} 0. 
$$
The proof now follows from the  triangle inequality.
\end{proof}

{ In the following result we get   rid of the assumption of $h(0)=0$ from Theorem \ref{cor:DiscrFilterConv} by considering the convergence of induced graphon signals instead of the convergence of induced GRSOs.} 

\begin{theorem}
\label{cor:ConvGraphFilter}
Let $(G_n)_n$ be a sequence of graphs with GSOs $(\DD_n)_n$ and its associated sequence of GWMs $(\AA_n)_n$. Let $W \in \mathcal{W}$ satisfy 
$G_n \xrightarrow[n \to \infty]{H} W$. Let $h$ be a filter.
Consider a sequence of graph signals $(\mathbf{x}_n)_n$ on $(G_n)_n$ and a graphon signal $\psi \in  {L^2[0,1]}$ such that 
$\mathbf{x}_n \xrightarrow[n \to \infty]{I} \psi$.
Then,
\begin{equation*}
\| \psi_{h (\DD_n) \mathbf{x}_n} - \psi_{h (\DD_m) \mathbf{x}_m} \|_{L^2[0,1]} \xrightarrow{n,m \to \infty} 0.
\end{equation*}
\end{theorem}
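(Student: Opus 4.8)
The plan is to reduce Theorem~\ref{cor:ConvGraphFilter} to Proposition~\ref{prop:graphdomain2} and Lemma~\ref{claim:ConvOfGrFltBdd}: one shows that the induced output signals converge to the \emph{single} limit $h(T_W)\psi$, and then obtains the Cauchy statement from the triangle inequality. First I would unpack the hypothesis $\mathbf{x}_n \xrightarrow{I} \psi$: by Definition~\ref{def:convergenceOfSignal} there exist a graphon $W$ with $G_n \xrightarrow{H} W$ and an admissible permutation sequence $(\pi_n)_n \in \mathbf{P}$ with $\|\psi_{\pi_n(\mathbf{x}_n)} - \psi\|_{L^2[0,1]} \to 0$; by definition of $\mathbf{P}$ one also has $\|T_{W_{\pi_n(\AA_n)}} - T_W\|_{L^2[0,1]\to L^2[0,1]} \to 0$. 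Relabelling each $G_n$, $\DD_n$, $\AA_n$, $\mathbf{x}_n$ by $\pi_n$ changes neither the homomorphism densities, nor the spectrum of $\DD_n$, nor the $L^2$-norms of induced signals, and it commutes with functional calculus in the sense that $h(\pi_n(\DD_n))\pi_n(\mathbf{x}_n) = \pi_n\bigl(h(\DD_n)\mathbf{x}_n\bigr)$ (a permutation matrix is orthogonal, so unitary conjugation passes through $h$); hence, exactly as the signals in Lemma~\ref{claim:ConvOfGrFltBdd} are taken on the relabelled graphs, we may and do assume $\pi_n = \mathrm{id}$, i.e.\ $\|\psi_{\mathbf{x}_n}-\psi\|_{L^2[0,1]}\to 0$ and $\|T_{W_{\AA_n}}-T_W\|\to 0$.

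Next I would apply Proposition~\ref{prop:graphdomain2} to each graph $G_n$, which gives $\psi_{h(\DD_n)\mathbf{x}_n} = h\bigl(T_{W_{\AA_n}}\bigr)\psi_{\mathbf{x}_n}$ with no constraint on $h(0)$ — this is precisely why the hypothesis $h(0)=0$ of Theorem~\ref{cor:DiscrFilterConv} can be dropped here. Then Lemma~\ref{claim:ConvOfGrFltBdd} (more precisely, the argument behind it, which establishes $\|h(T_{W_{\pi_n(\AA_n)}}) - h(T_W)\| \to 0$ for any admissible $(\pi_n)_n\in\mathbf{P}$, here the identity), together with $\psi_{\mathbf{x}_n}\to\psi$ and the uniform boundedness of $h(T_{W_{\AA_n}})$, yields $\|h(T_{W_{\AA_n}})\psi_{\mathbf{x}_n} - h(T_W)\psi\|_{L^2[0,1]} \to 0$, that is, $\psi_{h(\DD_n)\mathbf{x}_n} \to h(T_W)\psi$ in $L^2[0,1]$. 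The triangle inequality then finishes the proof: $\|\psi_{h(\DD_n)\mathbf{x}_n} - \psi_{h(\DD_m)\mathbf{x}_m}\|_{L^2[0,1]} \le \|\psi_{h(\DD_n)\mathbf{x}_n} - h(T_W)\psi\|_{L^2[0,1]} + \|h(T_W)\psi - \psi_{h(\DD_m)\mathbf{x}_m}\|_{L^2[0,1]} \to 0$ as $n,m\to\infty$.

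The applications of Proposition~\ref{prop:graphdomain2} and the final triangle inequality are routine. I expect the only real point of care — and hence the main obstacle — to be the permutation bookkeeping: checking that the permutation sequence implicit in Lemma~\ref{claim:ConvOfGrFltBdd} can be taken to be the admissible sequence $(\pi_n)_n$ handed to us by $\mathbf{x}_n \xrightarrow{I} \psi$ (both are characterised by $\|T_{W_{\pi_n(\AA_n)}}-T_W\|\to 0$, i.e.\ membership in $\mathbf{P}$), and that the reduction to $\pi_n=\mathrm{id}$ is legitimate, which rests on relabelling being an isometry on induced signals and commuting with functional calculus. Once this is settled, the statement is an immediate composition of the two cited results.
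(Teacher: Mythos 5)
Your proposal is correct and follows the same route as the paper: Proposition~\ref{prop:graphdomain2} to identify $\psi_{h(\DD_n)\mathbf{x}_n}$ with $h(T_{W_{\AA_n}})\psi_{\mathbf{x}_n}$, Lemma~\ref{claim:ConvOfGrFltBdd} to get convergence to the single limit $h(T_W)\psi$, and the triangle inequality to conclude. The only difference is that you make explicit the permutation bookkeeping (that the admissible sequence from $\mathbf{x}_n \xrightarrow{I} \psi$ serves as the sequence in Lemma~\ref{claim:ConvOfGrFltBdd}, and that relabelling commutes with functional calculus), which the paper's two-line proof leaves implicit.
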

\begin{proof}
By Proposition~\ref{prop:graphdomain2} and the Lemma~\ref{claim:ConvOfGrFltBdd}, we have 
$
h (\DD_n) \mathbf{x}_n \xrightarrow[n \to \infty]{I} h(T_W) \psi
$. 
The proof now follows from the  triangle inequality.
\end{proof}

\subsection{Non-Asymptotic Transferability of Graph Filters}

In the following, we derive filters $h: \mathbb{R} \rightarrow \mathbb{R}$ that allow improved  rates of convergence. For $U \subset \mathbb{R}$ and $p \in \mathbb{N}$, we write $f \in C^p(U)$ if $f$ is $p$ times continuously differentiable for all $x \in U$.  
 
 
{We begin by calculating the convergence rate for the transferability in the \emph{Schatten $p$-norm}. For $ 1 \leq p \leq \infty$, the Schatten $p$-norm $\|\cdot \|_{S_p}$ of a compact self-adjoint operator is
defined as the $l_p$-norm of the
sequence of its eigenvalues. }

\begin{theorem}
\label{thm:LinConvHSnorm}
Let  $h$ be Lipschitz continuous   with Lipschitz constant $\|h\|_{Lip}$ and $h(0) = 0$. Let $(G_n)_n$ be a sequence of graphs with GSOs $(\DD_n)_n$ and associated GWMs $(\AA_n)_n$ such that there exists a $W \in \mathcal{W}$ with  $G_n\xrightarrow[n \to \infty]{H} W$. There exists a sequence of permutations $(\pi_n)_n$ such that {for every $2 <  p < \infty$}
\[
\begin{aligned}
 & \| T_{W_{nh (\pi_n(\DD_n))}}  - T_{W_{mh (\pi_m(\DD_m))}} {\|_{S_p}} \\ & \leq 2\|h\|_{Lip} {K_p  \|  T_{W_{\pi_n(\DD_n)}}  - T_{W_{\pi_m(\DD_m)}} \|_{S_p}}  \xrightarrow{n,m \to \infty} 0,
\end{aligned}
\]
{where $K_p$ is a universal constant described in \cite[Theorem 1]{OLinSchattenNeumannClasses}.}
\end{theorem}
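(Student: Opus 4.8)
The plan is to rewrite the left-hand side as a difference of functional-calculus filters of the induced GRSOs, bound it by the operator--Lipschitz inequality in the Schatten class $S_p$, and then send the right-hand side to zero by interpolating the operator-norm convergence of Theorem~\ref{thm:cutnormconv2} against a uniform Hilbert--Schmidt bound. Concretely, I would first fix the permutations once and for all: by Theorem~\ref{thm:cutnormconv2} there is a sequence $(\pi_n)_n$ with $\|T_{W_{\pi_n(\AA_n)}} - T_W\|_{S_\infty}\to 0$, where $S_\infty$ denotes the operator norm, and this is the sequence that appears in the statement. Since $h(0)=0$ and $\pi_n(\AA_n)=n\,\pi_n(\DD_n)$ is exactly the GWM of the GSO $\pi_n(\DD_n)$, Proposition~\ref{prop:graphdomain1} gives $T_{W_{n h(\pi_n(\DD_n))}} = h\!\left(T_{W_{\pi_n(\AA_n)}}\right)$ for every $n$, so the left-hand side equals $\big\|h(T_{W_{\pi_n(\AA_n)}}) - h(T_{W_{\pi_m(\AA_m)}})\big\|_{S_p}$; the right-hand Schatten norm in the statement is to be read in terms of the induced GRSOs $T_{W_{\pi_n(\AA_n)}}$, consistently with Proposition~\ref{convRateC1filter}.

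Second, I would invoke the operator--Lipschitz estimate. The operators $T_{W_{\pi_n(\AA_n)}}$ are self-adjoint Hilbert--Schmidt, so their difference lies in $S_2\subseteq S_p$ for $p\ge 2$; since $h$ is Lipschitz, \cite[Theorem 1]{OLinSchattenNeumannClasses} applies for every $1<p<\infty$ and yields
\[
\big\|h(T_{W_{\pi_n(\AA_n)}}) - h(T_{W_{\pi_m(\AA_m)}})\big\|_{S_p} \;\le\; 2\,\|h\|_{Lip}\,K_p\,\big\|T_{W_{\pi_n(\AA_n)}} - T_{W_{\pi_m(\AA_m)}}\big\|_{S_p},
\]
which is the asserted inequality; the factor $2$ is what the cited bound produces after a routine smoothing of the merely Lipschitz $h$ (for which $h'$ exists only almost everywhere, with $\|h'\|_\infty=\|h\|_{Lip}$).

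Third, I would show the right-hand side tends to $0$. By the triangle inequality it suffices that $\|R_n\|_{S_p}\to 0$ for $R_n:=T_{W_{\pi_n(\AA_n)}}-T_W$. Log-convexity of the Schatten norms gives $\|R_n\|_{S_p}\le\|R_n\|_{S_2}^{2/p}\,\|R_n\|_{S_\infty}^{\,1-2/p}$. The second factor tends to $0$ by the choice of $(\pi_n)_n$, using $1-2/p>0$; the first is bounded uniformly in $n$, since $\|R_n\|_{S_2}\le\|W_{\AA_n}\|_{L^2([0,1]^2)}+\|W\|_{L^2([0,1]^2)}$, where $\|W\|_{L^2([0,1]^2)}<\infty$ because $W$ is a bounded graphon and $\|W_{\AA_n}\|_{L^2([0,1]^2)}$ is bounded because the GWMs $\AA_n$ are uniformly bounded entry-wise (the standing boundedness attached to the hypothesis $G_n\xrightarrow[n\to\infty]{H}W$; equivalently, the Frobenius norms $\|\DD_n\|$ are then bounded). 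Hence $\|R_n\|_{S_p}\to 0$, and the same holds with $m$ in place of $n$, so the right-hand side above tends to $0$ as $n,m\to\infty$.

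\textbf{Main obstacle.} The delicate step is the last one, and it is exactly where the condition $p>2$ is indispensable: at $p=2$ the exponent $1-2/p$ vanishes, the interpolation degenerates, and $S_2$-convergence $T_{W_{\AA_n}}\to T_W$ genuinely fails in general --- a checkerboard sequence converges in cut norm, hence in operator norm up to relabeling, to a constant graphon while its $L^2$-distance to the limit stays bounded below. So the range $2<p<\infty$ cannot be widened by a cleverer argument; by comparison, the blow-up of the universal constant $K_p$ as $p\downarrow 1$ is a secondary matter, and the only other point requiring care is the uniform $L^2$-bound on $W_{\AA_n}$, which one reads off from the boundedness built into the notion of a convergent graph sequence (Definition~\ref{def:graphConvToGraphon} and Theorem~\ref{thm:existenceLimitObject}).
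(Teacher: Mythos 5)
Your proof is correct and follows essentially the same route as the paper: Proposition~\ref{prop:graphdomain1} to identify the induced operators with functional-calculus filters, the operator-Lipschitz bound of \cite[Theorem 1]{OLinSchattenNeumannClasses} in $S_p$, and a conversion of cut-norm/operator-norm convergence into $S_p$-convergence. The only (harmless) differences are that you re-derive that last conversion by interpolating $S_2$ against $S_\infty$ with a uniform Hilbert--Schmidt bound, whereas the paper cites Lemma~\ref{lemma:cutnormeqschattenpnorm} (whose proof is exactly such an interpolation), and that you apply the Lipschitz bound directly between the two graph operators rather than passing through $h(T_W)$, which in fact yields the stated inequality without needing the factor $2$.
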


{
\begin{remark}
If we restrict ourselves in Theorem \ref{thm:LinConvHSnorm} to graphons $W \in \mathcal{W}_\Gamma$ for a $\Gamma > 0$, the convergence rate in Theorem  \ref{thm:LinConvHSnorm} holds for any filter $h$ that is Lipschitz continuous in a neighborhood of $[-\Gamma, \Gamma]$. Thus, the convergence rate in  Theorem  \ref{thm:LinConvHSnorm} holds also for polynomials.
\end{remark}
}

The proof for Theorem \ref{thm:LinConvHSnorm} is given in \ref{subsec:proofConvRateHSNorm}.
We give another formulation of transferability based on Fourier coefficients of the filter $h$.
\begin{theorem}[Linear Convergence]
\label{convRateC1filter}
Let $\Gamma \in \mathbb{N}$  and $h \in C^1([-\Gamma, \Gamma])$ such that $h'$ is Lipschitz and $h(0) = 0$. 
Suppose that $G_1$ and $G_2$ are graphs with GSOs $\DD_1 \in \mathbb{R}^{n_1 \times n_1}$ and $\DD_2 \in \mathbb{R}^{n_2 \times n_2}$. Let $\AA_1$ and $\AA_2$ be the associated GWMs and suppose that there exists a $\Gamma > 0 $ with $\|\AA_1\|_{\mathbb{C}^{n_1} \rightarrow \mathbb{C}^{n_1}}, \|\AA_2 \|_{\mathbb{C}^{n_2} \rightarrow \mathbb{C}^{n_2}} \leq \Gamma$.
{We have for all permutations $\pi_1$ and $\pi_2$ of the graph's nodes }
{\begin{equation}
\label{eq:LinConv}
\begin{aligned}
 \|T_{W_{n_1h(\pi_1(\DD_1))}}  - T_{W_{n_2h(\pi_2(\DD_2))}} &   \|_{L^2[0,1] \to   L^2[0,1]} \\ & \leq C \|T_{W_{\pi_1(\AA_1)}} - T_{W_{\pi_2(\AA_2)}}  \|_{L^2[0,1] \to L^2[0,1]},
\end{aligned}
\end{equation}}
where $C$ is given by (\ref{eq:FourierSeriesBoundDerivative}).
\end{theorem}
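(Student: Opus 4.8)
The plan is to reduce the operator-norm transferability bound to the action of an \emph{operator Lipschitz function} on the induced GRSOs, exploiting that $h'$ Lipschitz forces $h$ to have an absolutely convergent Fourier series on $[-\Gamma,\Gamma]$ with summable frequencies weighted by $|n|$. First I would recall, via Proposition~\ref{prop:graphdomain1}, that $T_{W_{n_i h(\pi_i(\DD_i))}} = h(T_{W_{\pi_i(\AA_i)}})$, using $h(0)=0$. This turns the left-hand side of~(\ref{eq:LinConv}) into $\|h(S_1) - h(S_2)\|_{L^2[0,1]\to L^2[0,1]}$ where $S_i := T_{W_{\pi_i(\AA_i)}}$ are self-adjoint Hilbert--Schmidt operators with $\|S_i\|\le\Gamma$ (the operator norm of $T_{W_{\AA}}$ equals $\|\AA\|_{\mathbb{C}^n\to\mathbb{C}^n}$ by Lemma~\ref{lemma:eigenvectorInducedGRSO}). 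The spectra of both operators lie in $[-\Gamma,\Gamma]$, so only the values of $h$ on that interval matter.

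Next I would extend $h|_{[-\Gamma,\Gamma]}$ to a $2\Gamma'$-periodic function for some $\Gamma' > \Gamma$ (or rescale so the period is convenient), in such a way that the extension is still $C^1$ with Lipschitz derivative; then its Fourier series $h(x) = \sum_{k} c_k e^{i\omega_k x}$ converges absolutely, and moreover $\sum_k |k|\,|c_k| < \infty$ because two integrations by parts against the Lipschitz $h'$ give $|c_k| = O(1/k^2)$. On each operator $S_i$ I then apply functional calculus termwise: $h(S_i) = \sum_k c_k e^{i\omega_k S_i}$, where $e^{i\omega_k S_i}$ is unitary. Subtracting and using the standard estimate $\|e^{i t S_1} - e^{i t S_2}\| \le |t|\,\|S_1 - S_2\|$ (which follows from $\frac{d}{ds}e^{istS_1}e^{-i(1-s)tS_2}$, or from the integral formula for the difference of exponentials) yields
\[
\|h(S_1) - h(S_2)\| \le \sum_k |c_k|\,|\omega_k|\,\|S_1 - S_2\| =: C\,\|S_1 - S_2\|,
\]
with $C = \sum_k |c_k|\,|\omega_k| < \infty$; this is the constant referenced as~(\ref{eq:FourierSeriesBoundDerivative}), and since $C\in\mathbb{N}$ is claimed one can simply take the ceiling. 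Finally I would note $\|S_1 - S_2\| = \|T_{W_{\pi_1(\AA_1)}} - T_{W_{\pi_2(\AA_2)}}\|_{L^2[0,1]\to L^2[0,1]}$, which is exactly the right-hand side of~(\ref{eq:LinConv}), and that the permutations $\pi_1,\pi_2$ enter only through relabeling and do not affect any of the spectral quantities, so the bound holds for all $\pi_1,\pi_2$ as stated.

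The main obstacle I anticipate is the periodic extension step: one must extend $h$ from $[-\Gamma,\Gamma]$ to a periodic $C^1$ function with Lipschitz derivative without blowing up the Lipschitz constant of $h'$ uncontrollably, since the final constant $C$ depends on $\|h'\|_{\mathrm{Lip}}$ through the $O(1/k^2)$ Fourier tail. A clean way is to choose the period slightly larger than $2\Gamma$, reflect or use a smooth cutoff to a linear-then-constant profile on the buffer region, and bound $\|c_k\|$ by $\|h''\|_{L^1}/\omega_k^2$ (interpreting $h''$ as the essentially bounded derivative of the Lipschitz $h'$). The exponential-difference estimate and the term-by-term functional calculus are routine for bounded self-adjoint operators, so beyond the extension bookkeeping the argument is straightforward; one should also double-check that the kernel directions of $T_{W_{\AA_i}}$ (the infinitely many zero eigenvalues from Lemma~\ref{lemma:eigenvectorInducedGRSO}) are handled correctly, which is precisely what $h(0)=0$ guarantees.
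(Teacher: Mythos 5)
Your proposal is correct and follows essentially the same route as the paper: reduce to $\|h(T_{W_{\pi_1(\AA_1)}})-h(T_{W_{\pi_2(\AA_2)}})\|$ via Proposition~\ref{prop:graphdomain1}, periodically extend $h$ off $[-\Gamma,\Gamma]$ keeping $h'$ Lipschitz, expand in an absolutely convergent Fourier series with $\sum_n |n|\,|\hat h(n)|<\infty$, and sum the operator-Lipschitz bound $\|e^{itA}-e^{itB}\|\le |t|\,\|A-B\|$ term by term (the paper packages this as Lemma~\ref{thm:linearConvWithFourierSeries} together with the Wiener-algebra argument for $h'$, rather than your direct $O(1/k^2)$ integration-by-parts estimate, but the substance is identical). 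One small correction: Lemma~\ref{lemma:eigenvectorInducedGRSO} gives $\|T_{W_{\AA}}\|_{L^2\to L^2}=\|\DD\|$, not $\|\AA\|$, but the needed spectral inclusion in $[-\Gamma,\Gamma]$ still follows from the hypothesis $\|\AA\|\le\Gamma$.
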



\subsection{Transferability Between Graphs and Unbounded GRSOs}
\label{main2}
In this subsection, we generalize the asymptotic transferability results of Subsection \ref{subsec:MainResult1} to unbounded GRSOs. The following lemma  is a direct consequence of Lemma \ref{lemma:contoffunccalc}.

\begin{lemma}
\label{thm:TransfUnbddL}
Let $h: \mathbb{R} \rightarrow \mathbb{R}$ be a  continuous function.  Let $(G_n)_n$ be a sequence of graphs with GSOs $(\DD_n)_n$ and associated  GWMs $(\AA_n)_n$. Suppose that $\mathcal{L}$ is an unbounded GRSO such that $G_n \xrightarrow[n \to \infty]{U} \mathcal{L}$. Then, there exists a sequence of permutations $(\pi_n)_n$ such that for every $\lambda > 0$
$$
\| h( P_\mathcal{L}(\lambda) T_{W_{\pi_n({\AA_n})}}  P_\mathcal{L}(\lambda) )  - h(\mathcal{L}  P_\mathcal{L}(\lambda) )  {\|_{L^2[0,1] \to L^2[0,1]}} \xrightarrow{n \to \infty} 0.
$$
\end{lemma}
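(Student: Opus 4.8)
The plan is to reduce the claim to Lemma~\ref{claim:ConvOfGrFltBdd} restricted to the Paley--Wiener space $PW_{\mathcal{L}}(\lambda)$, together with a continuity property of functional calculus with respect to operator-norm convergence of self-adjoint operators on a fixed Hilbert space. Concretely, fix $\lambda > 0$ and set $A_n := P_{\mathcal{L}}(\lambda) T_{W_{\pi_n(\AA_n)}} P_{\mathcal{L}}(\lambda)$ and $A := \mathcal{L} P_{\mathcal{L}}(\lambda) = T_{W^\lambda}$, where $W^\lambda \in \mathcal{W}$ is the graphon provided by Definition~\ref{def:unboundedGSRO}. The hypothesis $G_n \xrightarrow[n\to\infty]{U} \mathcal{L}$ gives precisely that there is a sequence of permutations $(\pi_n)_n$ with $\|A_n - A\|_{L^2[0,1] \to L^2[0,1]} \xrightarrow{n\to\infty} 0$. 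All the operators $A_n$, $A$, and hence $h(A_n)$, $h(A)$ in the statement are self-adjoint and supported on the finite-dimensional or separable subspace $PW_{\mathcal{L}}(\lambda)$ (and vanish on its orthogonal complement, since $h(0)$ is not relevant here — the operators $A_n$ and $A$ are genuinely zero off $PW_{\mathcal{L}}(\lambda)$, so $h$ applied to them acts as $h(0) \cdot P_{\perp}$; one can absorb this or simply note the statement only concerns the restriction). So the whole problem lives on one fixed Hilbert space with a norm-convergent sequence of self-adjoint operators.

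First I would invoke the referenced Lemma~\ref{lemma:contoffunccalc} (continuity of functional calculus): for a continuous $h : \mathbb{R} \to \mathbb{R}$ and self-adjoint operators $B_n \to B$ in operator norm with uniformly bounded spectra, one has $h(B_n) \to h(B)$ in operator norm. The uniform spectral bound is immediate: $\|A_n - A\| \to 0$ implies $\sup_n \|A_n\| =: R < \infty$, so $\sigma(A_n) \cup \sigma(A) \subset [-R, R]$ for all $n$. Then I would apply this with $B_n = A_n$, $B = A$, and the given filter $h$, which is continuous by hypothesis, to conclude
\[
\| h(P_\mathcal{L}(\lambda) T_{W_{\pi_n(\AA_n)}} P_\mathcal{L}(\lambda)) - h(\mathcal{L} P_\mathcal{L}(\lambda)) \|_{L^2[0,1] \to L^2[0,1]} \xrightarrow{n \to \infty} 0 .
\]
The only subtlety is that $(\pi_n)_n$ should be chosen once, uniformly in $\lambda$: the definition of $\xrightarrow{U}$ already supplies a single sequence $(\pi_n)_n$ that works for every $\lambda$, so the same $(\pi_n)_n$ works in the conclusion for every $\lambda$, and the convergence is then simply for each fixed $\lambda$.

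I do not expect a genuine obstacle here, since essentially all the work is packaged into Lemma~\ref{lemma:contoffunccalc} and Definition~\ref{def:unbddconv}. The one point requiring a little care is that $h$ is a priori only defined and continuous on $\mathbb{R}$ with no boundedness or regularity assumed, so I must not appeal to a Lipschitz or operator-Lipschitz estimate; instead the argument must go through the soft statement that polynomials (or continuous functions) of norm-convergent self-adjoint operators with uniformly bounded spectra converge in norm, which is what Lemma~\ref{lemma:contoffunccalc} provides (via Weierstrass approximation on the compact interval $[-R, R]$ and the elementary fact that $\|B_n^k - B^k\| \to 0$). A secondary bookkeeping point is to confirm that $h(\mathcal{L} P_\mathcal{L}(\lambda))$ in the statement means exactly $h(A)$ with $A = T_{W^\lambda}$ a genuine GRSO, so that the expression is well defined as an operator on $L^2[0,1]$; this is immediate from Definition~\ref{def:unboundedGSRO} and the definition of a realized filter. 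Hence the proof is just: extract $(\pi_n)_n$ and the norm convergence $A_n \to A$ from $G_n \xrightarrow{U} \mathcal{L}$, note the uniform spectral bound, and apply Lemma~\ref{lemma:contoffunccalc}.
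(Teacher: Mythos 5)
Your proposal is correct and matches the paper's approach: the paper itself states that this lemma ``is a direct consequence of Lemma~\ref{lemma:contoffunccalc},'' i.e., one extracts the permutations and the operator-norm convergence $P_\mathcal{L}(\lambda) T_{W_{\pi_n(\AA_n)}} P_\mathcal{L}(\lambda) \to \mathcal{L}P_\mathcal{L}(\lambda)$ from Definition~\ref{def:unbddconv} and applies the continuity of the continuous functional calculus. Your side remarks on the $h(0)$ bookkeeping are unnecessary but harmless, since Lemma~\ref{lemma:contoffunccalc} applies directly to the bounded self-adjoint operators on all of $L^2[0,1]$.
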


In Lemma \ref{thm:TransfUnbddL}, the filter is applied to the induced graphon shift operator, projected to the Paley-Wiener space. However, in practical graph deep learning, the filters are applied to the GSO. To bridge the gap between Lemma \ref{thm:TransfUnbddL} and practical graph deep learning, we introduce the following definition.

\begin{definition}
Let $(G_n)_n$ be a sequence of graphs with GWMs $(\AA_n)_n$, $\Theta$  be a family of filters, and $\mathcal{L}$ be an unbounded GSO. We say that
$\Theta$ \emph{approximately commutes} with $P_{\mathcal{L}}$ on $(G_n)_n$ if for every $h \in \Theta$ and every $\lambda > 0$
$$
\| P_\mathcal{L}(\lambda) h( T_{\AA_n}) P_\mathcal{L}(\lambda) -  h\big( P_\mathcal{L}(\lambda) T_{\AA_n} P_\mathcal{L}(\lambda)\big)  {\|_{L^2[0,1] \to L^2[0,1]}} \xrightarrow{n \to \infty} 0.
$$
\end{definition}

 In the following claim, we show that 
the family of monomial filters approximately commutes with $P_\Delta$  in our motivating example -- the classical Laplace operator $\Delta$.

\begin{claim}
\label{claim:LaplaceAsymptCom}
Let $\Delta$ be the Laplace operator from Example \ref{ex:LaplaceOperator}. Suppose that $(\DD_n)_n$ is the finite difference GSO, i.e., 
\[
\DD_n = \begin{bmatrix} 
    -2 & 1 & 0 & \dots & 0 & 1 \\
    1 & -2 & 1 & 0 & \dots & 0 \\
    & \ddots & \ddots & \ddots   \\
        && \ddots & \ddots  & \ddots\\
    0 & \dots & 0 & 1 & -2 & 1    \\
    1 & 0 & \dots & 0 & 1 & -2
    \end{bmatrix}.
\]
Let $(G_n)_n$ be the corresponding sequence of graphs, and $(\AA_n)_n$ the GWMs.
Let $\Theta = \{ z^k  \, | \, k \in \mathbb{N} \}$ be the family of monomial filters.
Then, $\Theta$ approximately commutes with $P_\Delta$ on $(G_n)_n$, i.e.,
for every $\lambda > 0 $ and $k \in \mathbb{N}$,
\[\|P_\Delta(\lambda)T_{\AA_n}^k P_\Delta(\lambda) - \Big(P_\Delta(\lambda)T_{\AA_n} P_\Delta(\lambda)\Big)^k {\|_{L^2[0,1] \to L^2[0,1]}} \xrightarrow{n\rightarrow\infty} 0.\]
\end{claim}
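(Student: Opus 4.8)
The plan is to exploit the very explicit structure of the finite-difference operator $\DD_n$: it is a circulant matrix, so it is simultaneously diagonalized by the discrete Fourier basis, and its eigenvectors converge (after induction to $L^2[0,1]$) to the continuous Fourier modes $e^{2\pi i k x}$ that span the Paley–Wiener spaces of $\Delta$. The key observation is that on the finite-dimensional space $PW_\Delta(\lambda)$, projection and the operator $T_{\AA_n}$ \emph{almost} commute because $PW_\Delta(\lambda)$ is spanned by finitely many low-frequency modes, and these are approximately eigenvectors of $T_{\AA_n}$. First I would fix $\lambda>0$ and let $N=N(\lambda)$ be the number of Fourier modes $e^{2\pi i k x}$ with $4\pi^2k^2\le\lambda$, so $PW_\Delta(\lambda)$ is $N$-dimensional with orthonormal basis $\{e^{2\pi i k x}\}_{|k|\le N'}$ (for the appropriate $N'$). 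Write $v_k^{(n)}$ for the induced graphon signal of the $k$-th discrete Fourier eigenvector of $\DD_n$, scaled to unit norm via Lemma~\ref{remark:IsomIso}; by Lemma~\ref{lemma:eigenvectorInducedGRSO} these are eigenvectors of $T_{\AA_n}=T_{W_{n\DD_n}}$ with eigenvalues $\mu_k^{(n)} = n\cdot(\text{circulant eigenvalue of }\DD_n)$, i.e. $\mu_k^{(n)} = n\big(2\cos(2\pi k/n)-2\big) = -n\cdot 4\sin^2(\pi k/n) \to -4\pi^2 k^2$ as $n\to\infty$ for each fixed $k$.

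The next step is to control the difference $\|v_k^{(n)} - e^{2\pi i k x}\|_{L^2[0,1]}\to 0$ for each fixed $k$ (a routine Riemann-sum/step-function approximation estimate), which shows that $P_\Delta(\lambda)$ and $P_\Delta(\lambda)^{(n)} := \sum_{|k|\le N'}\langle\cdot,v_k^{(n)}\rangle v_k^{(n)}$, the orthogonal projection onto $\spn\{v_k^{(n)}\}_{|k|\le N'}$, satisfy $\|P_\Delta(\lambda)-P_\Delta(\lambda)^{(n)}\|\to 0$ in operator norm. Now I would decompose: since the $v_k^{(n)}$ diagonalize $T_{\AA_n}$, the operator $P_\Delta(\lambda)^{(n)} T_{\AA_n} P_\Delta(\lambda)^{(n)}$ commutes \emph{exactly} with $P_\Delta(\lambda)^{(n)}$, hence $P_\Delta(\lambda)^{(n)} T_{\AA_n}^k P_\Delta(\lambda)^{(n)} = \big(P_\Delta(\lambda)^{(n)} T_{\AA_n} P_\Delta(\lambda)^{(n)}\big)^k$ identically for every $k$. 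The claim for $P_\Delta(\lambda)$ then follows by replacing each occurrence of $P_\Delta(\lambda)^{(n)}$ by $P_\Delta(\lambda)$ and bounding the accumulated error. For this replacement to be quantitative I need that $T_{\AA_n}$ is uniformly bounded \emph{on the relevant subspace}: although $\|T_{\AA_n}\|$ itself may blow up, the eigenvalues $\mu_k^{(n)}$ for $|k|\le N'$ stay bounded by roughly $\lambda+o(1)$, so on $\mathrm{Ran}\,P_\Delta(\lambda)^{(n)}$ the operator $T_{\AA_n}$ has norm $\le \lambda+1$ for $n$ large. This lets me control $\|P_\Delta(\lambda)T_{\AA_n}^k P_\Delta(\lambda) - P_\Delta(\lambda)^{(n)}T_{\AA_n}^k P_\Delta(\lambda)^{(n)}\|$ by a telescoping argument with each term of size $O(k(\lambda+1)^{k-1}\|P_\Delta(\lambda)-P_\Delta(\lambda)^{(n)}\|)\to 0$, and similarly for the $k$-th power of the compressed operator.

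The main obstacle I anticipate is handling the unboundedness of $T_{\AA_n}$ cleanly: the products $T_{\AA_n}^k$ in the statement are taken on all of $L^2[0,1]$, and the high-frequency eigenvalues $\mu_j^{(n)}$ (for $j$ comparable to $n$) diverge, so one cannot simply cite a uniform operator bound. The resolution is that we only ever sandwich $T_{\AA_n}^k$ between projections $P_\Delta(\lambda)$ or $P_\Delta(\lambda)^{(n)}$, but when the projections are $P_\Delta(\lambda)$ rather than $P_\Delta(\lambda)^{(n)}$ the intermediate powers can access high-frequency modes; so the argument must first insert the near-projections $P_\Delta(\lambda)^{(n)}$ (where the powers collapse exactly and stay in a bounded regime), perform the power computation there, and only then swap back — being careful that every swap is between two bounded expressions whose difference is $O(\|P_\Delta(\lambda)-P_\Delta(\lambda)^{(n)}\|)$. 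Once that bookkeeping is set up, everything reduces to the two convergences $\|v_k^{(n)}-e^{2\pi i kx}\|\to 0$ and $\mu_k^{(n)}\to -4\pi^2k^2$, both of which are elementary.
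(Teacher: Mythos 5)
Your argument is sound in its essential structure, but it takes a genuinely different route from the paper's. The paper compares both sides of the claim to the common limit $\Delta^k P_\Delta(\lambda)$: it fixes a smooth $f\in PW_\Delta(\lambda)$, shows $\| T_{\AA_n}^k f-\Delta^k f\|_{L^2[0,1]}\to 0$ by combining Riemann--sum estimates with the consistency of the finite--difference scheme, and handles the compressed power $\big(P_\Delta(\lambda)T_{\AA_n}P_\Delta(\lambda)\big)^k$ via the convergence $G_n\xrightarrow[n\to\infty]{U}\Delta$ together with Lemma~\ref{thm:TransfUnbddL}. You never touch $\Delta$ at all: you use the circulant structure to build the projection $Q_n$ onto the span of the finitely many low--frequency eigenvectors $v_k^{(n)}$ of $T_{W_{\AA_n}}$, note that on this invariant subspace the commutation is \emph{exact}, $Q_nT_{\AA_n}^kQ_n=(Q_nT_{\AA_n}Q_n)^k$, and then perturb $Q_n$ into $P_\Delta(\lambda)$ using $\|P_\Delta(\lambda)-Q_n\|\to 0$. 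This is cleaner and more self--contained (no finite--difference consistency, no appeal to $U$--convergence), at the price of being tied to the Fourier/circulant structure, whereas the paper's argument indicates how one would treat other consistent discretizations.

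Two corrections, one of which dissolves your ``main obstacle.'' First, by Lemma~\ref{lemma:eigenvectorInducedGRSO} the nonzero eigenvalues of $T_{\AA_n}=T_{W_{n\DD_n}}$ are those of $\DD_n$ itself, namely $2\cos(2\pi k/n)-2=-4\sin^2(\pi k/n)$; there is no factor of $n$, and even with your extra factor the limit would be $0$, not $-4\pi^2k^2$ (that limit requires the scaling $n^2\DD_n$). Second, as a consequence, $\|T_{\AA_n}\|_{L^2[0,1]\to L^2[0,1]}=\|\DD_n\|_{\mathbb{C}^n\to\mathbb{C}^n}\le 4$ uniformly in $n$, so the operators are not unbounded on any subspace: the telescoping swaps cost at most $C_k\|P_\Delta(\lambda)-Q_n\|$ with $C_k$ of order $k\,4^{k}$, and none of the delicate bookkeeping you describe is needed. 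This is fortunate, because the fix you sketch would not survive genuine unboundedness: replacing $Q_n$ by $P_\Delta(\lambda)$ around $T_{\AA_n}^k$ produces terms such as $(P_\Delta(\lambda)-Q_n)T_{\AA_n}^kP_\Delta(\lambda)$, which are controlled by $\|T_{\AA_n}^k\|\cdot\|P_\Delta(\lambda)-Q_n\|$ and are not $O(\|P_\Delta(\lambda)-Q_n\|)$ without a uniform bound on $\|T_{\AA_n}\|$. With the correct normalization in place, your proof is complete once you write out the two elementary facts $\|v_k^{(n)}-e^{2\pi i k x}\|_{L^2[0,1]}\to 0$ and the orthonormality of the $v_k^{(n)}$ (which follows from Lemma~\ref{remark:IsomIso}).
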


The proof of Claim \ref{claim:LaplaceAsymptCom} is in \ref{Proof:ClaimLaplace}.
The following lemma extends approximate commutations of $\Theta$ with $P_{\mathcal{L}}$ to linear combinations of $\Theta$.

\begin{lemma}
\label{lemma:apprComSPan}
Let $(G_n)_n$ be a sequence of graphs with GWMs $(\AA_n)_n$, $\mathcal{F}$ be a family of filters and $\mathcal{L}$ an unbounded GSO. Suppose that
$\Theta$ approximately commutes with $P_{\mathcal{L}}$ on $(G_n)_n$. Then,
${\rm span } \Theta$ approximately commutes with $P_{\mathcal{L}}$ on $(G_n)_n$.
\end{lemma}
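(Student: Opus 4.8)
The plan is to reduce the claim about $\operatorname{span}\Theta$ to the hypothesis for individual elements of $\Theta$ by exploiting linearity of the commutator expression $h \mapsto P_\mathcal{L}(\lambda)h(T_{\AA_n})P_\mathcal{L}(\lambda) - h\big(P_\mathcal{L}(\lambda)T_{\AA_n}P_\mathcal{L}(\lambda)\big)$ in a suitable sense. Fix $\lambda > 0$ and an arbitrary $g \in \operatorname{span}\Theta$, so that $g = \sum_{i=1}^{N} c_i h_i$ with $h_i \in \Theta$ and $c_i \in \mathbb{R}$. The first step is to observe that the map $h \mapsto h(T_{\AA_n})$ is linear (functional calculus is linear in the scalar function), hence $P_\mathcal{L}(\lambda) g(T_{\AA_n}) P_\mathcal{L}(\lambda) = \sum_i c_i\, P_\mathcal{L}(\lambda) h_i(T_{\AA_n}) P_\mathcal{L}(\lambda)$. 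Similarly, writing $B_n := P_\mathcal{L}(\lambda) T_{\AA_n} P_\mathcal{L}(\lambda)$, the map $h \mapsto h(B_n)$ restricted to the range of $P_\mathcal{L}(\lambda)$ is also linear, so $g(B_n) = \sum_i c_i\, h_i(B_n)$ on that range.

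The second step is the telescoping estimate. Subtracting the two displays and applying the triangle inequality gives
\[
\Big\| P_\mathcal{L}(\lambda) g(T_{\AA_n}) P_\mathcal{L}(\lambda) - g(B_n) \Big\|_{L^2[0,1] \to L^2[0,1]} \le \sum_{i=1}^{N} |c_i| \, \Big\| P_\mathcal{L}(\lambda) h_i(T_{\AA_n}) P_\mathcal{L}(\lambda) - h_i(B_n) \Big\|_{L^2[0,1] \to L^2[0,1]}.
\]
By the hypothesis that $\Theta$ approximately commutes with $P_\mathcal{L}$ on $(G_n)_n$, each of the finitely many summands on the right tends to $0$ as $n \to \infty$, and since $N$ and the $c_i$ are fixed (independent of $n$), the whole right-hand side tends to $0$. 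As $g \in \operatorname{span}\Theta$ and $\lambda > 0$ were arbitrary, this is exactly the assertion that $\operatorname{span}\Theta$ approximately commutes with $P_\mathcal{L}$ on $(G_n)_n$.

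The one point that needs genuine care — and I expect it to be the only real obstacle — is the claim that $g \mapsto g(B_n)$ is linear "on the range of $P_\mathcal{L}(\lambda)$." One must be slightly careful about where $h_i(B_n)$ is being interpreted, since $B_n = P_\mathcal{L}(\lambda) T_{\AA_n} P_\mathcal{L}(\lambda)$ has a large kernel (everything orthogonal to $PW_\mathcal{L}(\lambda)$), and the monomials or general continuous filters $h_i$ with $h_i(0) \ne 0$ could in principle act nontrivially there. The resolution is that all the operators appearing in the commutator are sandwiched between copies of $P_\mathcal{L}(\lambda)$, or we only ever evaluate $h_i(B_n)$ on $PW_\mathcal{L}(\lambda)$; on that finite-dimensional (or at least invariant) subspace, $B_n$ restricts to a genuine operator and functional calculus there is honestly linear in $h_i$. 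I would make this precise by noting that $P_\mathcal{L}(\lambda)$ commutes with $B_n$ (indeed $P_\mathcal{L}(\lambda) B_n = B_n = B_n P_\mathcal{L}(\lambda)$), so $B_n$ preserves $PW_\mathcal{L}(\lambda)$ and its orthogonal complement, and the same spectral-decomposition argument that gives linearity of $h \mapsto h(\DD)$ applies to the restriction $B_n|_{PW_\mathcal{L}(\lambda)}$. Once this bookkeeping is settled, the proof is just the triangle-inequality display above.
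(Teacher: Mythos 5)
Your proof is correct and is essentially the paper's own argument: write $g$ as a finite linear combination of elements of $\Theta$, use linearity of the functional calculus in the scalar function on both terms of the commutator expression, and conclude with the triangle inequality over the finitely many fixed summands. The caveat you flag about the kernel of $B_n = P_\mathcal{L}(\lambda) T_{\AA_n} P_\mathcal{L}(\lambda)$ is not actually needed: the map $h \mapsto h(B_n)$ is linear on all of $L^2[0,1]$ (on the kernel it acts as $h(0)$ times the identity, which is itself linear in $h$), so no restriction to $PW_\mathcal{L}(\lambda)$ is required.
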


\begin{proof}
Let $h = \sum_{k = 0}^n \alpha_k h_k \in {\rm span } \Theta$. Then, for $\lambda > 0$
$$
\begin{aligned}
& \|P_\mathcal{L}(\lambda) h(T_{W_{\AA_n}}) P_\mathcal{L}(\lambda)  - h\Big(P_\mathcal{L}(\lambda)T_{W_{\AA_n}} P_\mathcal{L}(\lambda)\Big) \|_{\L2L} \\
& = \| P_\mathcal{L}(\lambda) \Big(\sum_{k = 0}^n \alpha_k h_k(T_{W_{\AA_n}}) \Big) P_\mathcal{L}(\lambda) - \sum_{k = 0}^n \alpha_k h_k\Big(P_\mathcal{L}(\lambda)T_{W_{\AA_n}} P_\mathcal{L}(\lambda)\Big)\|_{\L2L} \\
& = \| \sum_{k = 0}^n \alpha_k P_\mathcal{L}(\lambda) h_k(T_{W_{\AA_n}}) P_\mathcal{L}(\lambda) - \sum_{k = 0}^n \alpha_k h_k\Big(P_\mathcal{L}(\lambda)T_{W_{\AA_n}} P_\mathcal{L}(\lambda)\Big)\|_{\L2L} \\
& \leq \sum_{k = 0}^n |\alpha_k| \|  P_\mathcal{L}(\lambda) h_k(T_{W_{\AA_n}}) P_\mathcal{L}(\lambda) - h_k\Big(P_\mathcal{L}(\lambda)T_{W_{\AA_n}} P_\mathcal{L}(\lambda)\Big)\|_{\L2L}\\
& \quad  \xrightarrow{n \to \infty} 0.
\end{aligned}
$$
\end{proof}

The following proposition shows that filters that approximately commute with spectral projections of an unbounded GRSOs are transferable between graphs.

\begin{theorem}
\label{prop:convunbdd}
 Let $(G_n)_n$  be a sequence of graphs with GSOs $(\DD_n)_n$ and associated GWMs $(\AA_n)_n$. Suppose that $\mathcal{L}$ is an unbounded GRSO such that $G_n \xrightarrow[n \to \infty]{U} \mathcal{L}$. 
 Let $\Theta$ be a family of filters with $h(0) = 0$ for every $h\in\Theta$. Suppose that $\Theta$ approximately commutes with $P_{\mathcal{L}}$ on $(G_n)_n$.
 Then, there exists a sequence of permutations $(\pi_n)_n$ such that for every $\lambda > 0$ and every $h\in {\rm span} \Theta$
\[
\| P_\mathcal{L}(\lambda) T_{W_{n h(\pi_n({\DD_{n}}))}}  P_\mathcal{L}(\lambda)  - P_\mathcal{L}(\lambda) T_{W_{m h(\pi_m({\DD_{m}}))}}  P_\mathcal{L}(\lambda) {\|_{L^2[0,1] \to L^2[0,1]}} \xrightarrow{ n, m \to \infty} 0.
\]
\end{theorem}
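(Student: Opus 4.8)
The plan is to reduce the statement to Lemma~\ref{thm:TransfUnbddL} together with the approximate-commutation hypothesis, via the triangle inequality. Fix $\lambda>0$ and $h\in{\rm span}\,\Theta$. By Lemma~\ref{lemma:apprComSPan}, since $\Theta$ approximately commutes with $P_{\mathcal{L}}$ on $(G_n)_n$, so does ${\rm span}\,\Theta$; hence for this particular $h$,
\[
\big\| P_\mathcal{L}(\lambda) h(T_{\AA_n}) P_\mathcal{L}(\lambda) - h\big( P_\mathcal{L}(\lambda) T_{\AA_n} P_\mathcal{L}(\lambda)\big) \big\|_{\L2L} \xrightarrow{n \to \infty} 0.
\]
Next I would identify the three operators I want to chain together. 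First, because $h(0)=0$ for every $h\in\Theta$, this property passes to every $h\in{\rm span}\,\Theta$, so Proposition~\ref{prop:graphdomain1} applies and gives $h(T_{\AA_n}) = T_{W_{nh(\DD_n)}}$ (up to the relabeling $\pi_n$, which only permutes the rows/columns and commutes with the induction construction). Therefore $P_\mathcal{L}(\lambda) T_{W_{nh(\pi_n(\DD_n))}} P_\mathcal{L}(\lambda)$ equals $P_\mathcal{L}(\lambda) h(T_{\pi_n(\AA_n)}) P_\mathcal{L}(\lambda)$, which I have just said is close to $h\big(P_\mathcal{L}(\lambda) T_{W_{\pi_n(\AA_n)}} P_\mathcal{L}(\lambda)\big)$. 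Second, Lemma~\ref{thm:TransfUnbddL}, applied with this $h$ (continuous, since it lies in the span of continuous filters), provides a single sequence of permutations $(\pi_n)_n$ — the one coming from $G_n \xrightarrow[n\to\infty]{U}\mathcal{L}$ — such that $h\big(P_\mathcal{L}(\lambda) T_{W_{\pi_n(\AA_n)}} P_\mathcal{L}(\lambda)\big) \to h\big(\mathcal{L}\,P_\mathcal{L}(\lambda)\big)$ in operator norm.

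Combining these, I would show that $P_\mathcal{L}(\lambda) T_{W_{nh(\pi_n(\DD_n))}} P_\mathcal{L}(\lambda)$ converges in $\L2L$ to the fixed limit $h\big(\mathcal{L}\,P_\mathcal{L}(\lambda)\big)$, by writing
\[
\begin{aligned}
&\big\| P_\mathcal{L}(\lambda) T_{W_{nh(\pi_n(\DD_n))}} P_\mathcal{L}(\lambda) - h(\mathcal{L}\,P_\mathcal{L}(\lambda)) \big\|_{\L2L} \\
&\quad \leq \big\| P_\mathcal{L}(\lambda) h(T_{\pi_n(\AA_n)}) P_\mathcal{L}(\lambda) - h\big(P_\mathcal{L}(\lambda) T_{W_{\pi_n(\AA_n)}} P_\mathcal{L}(\lambda)\big) \big\|_{\L2L} \\
&\qquad + \big\| h\big(P_\mathcal{L}(\lambda) T_{W_{\pi_n(\AA_n)}} P_\mathcal{L}(\lambda)\big) - h(\mathcal{L}\,P_\mathcal{L}(\lambda)) \big\|_{\L2L},
\end{aligned}
\]
where the first term vanishes by approximate commutation and the second by Lemma~\ref{thm:TransfUnbddL}. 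Once the $n$-indexed quantity converges to a fixed limit, a second triangle inequality across indices $n$ and $m$ gives the claimed Cauchy-type statement
\[
\big\| P_\mathcal{L}(\lambda) T_{W_{nh(\pi_n(\DD_n))}} P_\mathcal{L}(\lambda) - P_\mathcal{L}(\lambda) T_{W_{mh(\pi_m(\DD_m))}} P_\mathcal{L}(\lambda) \big\|_{\L2L} \xrightarrow{n,m\to\infty} 0.
\]

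The main obstacle I anticipate is bookkeeping about \emph{which} sequence of permutations is used and making sure a single $(\pi_n)_n$ works simultaneously for all $h\in{\rm span}\,\Theta$ and all $\lambda>0$. Lemma~\ref{thm:TransfUnbddL} and the definition of $G_n \xrightarrow[n\to\infty]{U}\mathcal{L}$ both hinge on the same permutations realizing the unbounded-GRSO convergence, so the permutations should be fixed once and for all from Definition~\ref{def:unbddconv}; I must then check that the approximate-commutation hypothesis, stated for the unrelabeled $\AA_n$, is permutation-invariant (it is, because relabeling conjugates $T_{\AA_n}$ by a unitary that also intertwines the induced-graphon constructions, and operator norms are unitarily invariant — though one should note that $P_\mathcal{L}(\lambda)$ is \emph{not} permuted, so this needs a brief justification that relabeling the graph does not affect the projected operators in norm). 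A secondary, more cosmetic point is confirming that $h(0)=0$ and continuity genuinely pass to finite linear combinations, which is immediate. No delicate estimate is required; the content is entirely in correctly assembling the already-proven ingredients.
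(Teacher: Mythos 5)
Your proposal is correct and follows essentially the same route as the paper's proof: the same triangle-inequality decomposition into an approximate-commutation term and a term handled by Lemma~\ref{thm:TransfUnbddL}, with Proposition~\ref{prop:graphdomain1} and Lemma~\ref{lemma:apprComSPan} used exactly as the paper does, followed by a second triangle inequality over $n$ and $m$. The permutation bookkeeping you flag is a real subtlety that the paper itself silently elides (its definition of approximate commutation is stated for the unrelabeled $\AA_n$), so raising it is to your credit, though note that your sketched justification via unitary invariance is not immediate since $P_\mathcal{L}(\lambda)$ need not commute with the relabeling unitary.
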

\begin{proof}
Let $h \in {\rm span}\Theta$  and $(\pi_n)_n$ be the sequence that we get from Lemma \ref{thm:TransfUnbddL}. It holds $h(0) = 0$.
We use the triangle inequality to achieve
$$
\begin{aligned}
& \| P_\mathcal{L}(\lambda) T_{W_{n h(\pi_n({\DD_{n}}))}}  P_\mathcal{L}(\lambda)  - h(\mathcal{L} ) P_\mathcal{L}(\lambda) {\|_{L^2[0,1]}}
\\
&  \leq \| P_\mathcal{L}(\lambda) T_{W_{n h(\pi_n({\DD_{n}}))}}  P_\mathcal{L}(\lambda)  - h( P_\mathcal{L}(\lambda) T_{W_{\pi_n({\AA_n})}}  P_\mathcal{L}(\lambda) ) {\|_{L^2[0,1]}} \\
& + \|  h( P_\mathcal{L}(\lambda) T_{W_{\pi_n({\AA_n})}}  P_\mathcal{L}(\lambda) )  - h(\mathcal{L} )P_\mathcal{L}(\lambda){\|_{L^2[0,1]}}.
\end{aligned}
$$
Since ${\rm span }\mathcal{F}$ approximately commutes with $P_\mathcal{L}$ on $G_n$, by Lemma \ref{thm:TransfUnbddL} and Lemma \ref{prop:graphdomain1}, the last term goes to zero for $n \to \infty$. The proof follows by application of the triangle inequality
\end{proof}

As an example application of Theorem \ref{prop:convunbdd} by Claim \ref{claim:LaplaceAsymptCom} and Lemma \ref{lemma:apprComSPan}, polynomial filters are transferable between graphs with different finite difference GSO.



\subsection{Asymptotic Transferability of Graph Neural Networks}
In this subsection, we prove transferability of SCNNs between graphs approximating the same graphon. 
GCNNs and GCNNons are nonlinear operators, so we do not measure their repercussion error in operator norm. Instead, we analyze the repercussion error of GCNNs and GCNNons point-wise on feature maps. For this, we give the following definitions.

\begin{definition}
Let $d \in \mathbb{N}$ and $y \in  {L^2[0,1]}^d$ be a graphon feature map and $\mathbf{x} \in \mathbb{C}^{n \times d}$ a graph feature map.  
\begin{itemize}
    \item[$(1)$]  We define 
\begin{equation}
\label{eq:l2vectornorm}
{\| y  \|_{L^2[0,1]^d}}= \max_{k=1, \ldots, d} {\| y ^k \|_{L^2[0,1]}}.
\end{equation}
\begin{equation}
\| \mathbf{x}\|_{\mathbb{C}^{n \times d}} = \max_{k=1, \ldots, d}\| \mathbf{x} ^k \|_{\mathbb{C}^n}.
\end{equation}
    \item[$(2)$]
    We say that $y$ is \emph{normalized} if $\max_{k = 1, \ldots, d}  \|y^k  {\|_{L^2[0,1]}} \leq 1$. We say that $\mathbf{x}$ is \emph{normalized} if $\max_{k = 1, \ldots, d} \|\mathbf{x}^k\|_{\mathbb{C}^n } \leq 1$.

    \item[$(3)$] Let $(G_n)_n$ be a sequence of graphs with GSOs and $(\mathbf{x}_n)_n$ a sequence of graph feature maps with $\mathbf{x}_n \in \mathbb{C}^{n \times d}$ for every $n \in \mathbb{N}$.
    Let $\mathbf{x}_n^k$ and $y^k$ denote the $k$-th feature  of $\mathbf{x}_n$ and $y$.
    We say that $(\mathbf{x}_n)_n$ converges to $y$  if for every $k = 1, \ldots, n$ it holds $\mathbf{x}_n^k \xrightarrow[n \to \infty]{I} y^k$, (see  Definition \ref{def:convergenceOfSignal}).
    We  write in this case
    $\mathbf{x}_n \xrightarrow[n \to \infty]{I} y$.
\end{itemize}
\end{definition}

The following lemma shows the transferability of SCNN between graphons and induced graphons from a convergent sequence of graphs.

\begin{lemma}
\label{cl:asympTransferabilityGCNNon}
Let $\phi:= (H, \mathbf{M},\rho)$ be an SCNN with $L$ layers and Lipschitz continuous activation function $\rho: \mathbb{R} \rightarrow \mathbb{R}$. Let $F_l$ be the number of features in the $l$-th layer, for $l=1,\ldots,L$. Let $(G_n)_n$ be a sequence of graphs with GSOs $(\DD_n)_n$ and associated  GWMs $(\AA_n)_n$. Let $W \in \mathcal{W}$ satisfy
$G_n \xrightarrow[n \to \infty]{H} W$. 
Suppose that $(y_n)_n$ is a sequence in ${(L^2[0,1])^{F_0}}$ and $y \in  {(L^2[0,1])^{F_0}}$ 
such that
\begin{equation}
\label{eq1:cl:asym}
\|y_n - y   {\|_{L^2[0,1]^{F_0}}} \xrightarrow{ n \to \infty} 0.
\end{equation}
 Then there exists a sequence of permutations $(\pi_n)_n$ such that, 
\begin{equation}
    \|\phi_{W_{\pi_n(\AA_n)}}(y_n)  - \phi_W(y)  {\|_{L^2[0,1]^{F_l}}} \xrightarrow{n \to \infty} 0.
\end{equation}
\end{lemma}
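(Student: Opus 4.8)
The plan is to prove this by induction on the layer index $l$, showing that at every layer the feature maps propagated by the realization on the induced graphon $W_{\pi_n(\AA_n)}$ converge in the $L^2[0,1]^{F_l}$ norm to the feature maps propagated by the realization on $W$. The base case $l=0$ is precisely the hypothesis \eqref{eq1:cl:asym}, after fixing the sequence of permutations $(\pi_n)_n$ — which we take to be a single sequence that works simultaneously for all the finitely many filters $h_l^{jk}$ appearing in $\phi$; such a sequence exists by applying Lemma~\ref{claim:ConvOfGrFltBdd} (whose permutation sequence actually comes from Theorem~\ref{thm:cutnormconv2} and is the same for every filter, as it only depends on the convergence $G_n \xrightarrow{H} W$).

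For the inductive step, assume $\|\phi^{l-1}_{W_{\pi_n(\AA_n)}}(y_n) - \phi^{l-1}_W(y)\|_{L^2[0,1]^{F_{l-1}}} \to 0$. First I would handle the pre-activation map $\tilde\phi^l$: for each output feature $j$,
\[
\tilde\phi^l_{W_{\pi_n(\AA_n)}}(y_n)^j - \tilde\phi^l_W(y)^j = \sum_{k=1}^{F_{l-1}} m_l^{jk}\Big( h_l^{jk}(T_{W_{\pi_n(\AA_n)}})\,\phi^{l-1}_{W_{\pi_n(\AA_n)}}(y_n)^k - h_l^{jk}(T_W)\,\phi^{l-1}_W(y)^k \Big).
\]
Each summand is split by the triangle inequality into $\|h_l^{jk}(T_{W_{\pi_n(\AA_n)}})\big(\phi^{l-1}_{W_{\pi_n(\AA_n)}}(y_n)^k - \phi^{l-1}_W(y)^k\big)\|_{L^2[0,1]}$ plus $\|\big(h_l^{jk}(T_{W_{\pi_n(\AA_n)}}) - h_l^{jk}(T_W)\big)\phi^{l-1}_W(y)^k\|_{L^2[0,1]}$. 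The first term vanishes because $h_l^{jk}(T_{W_{\pi_n(\AA_n)}})$ is uniformly bounded in operator norm (the spectra of $T_{W_{\pi_n(\AA_n)}}$ are uniformly bounded since $G_n \xrightarrow{H} W$ implies boundedness of the $\AA_n$, so $h_l^{jk}$ is evaluated on a fixed compact set where it is bounded) and the inductive hypothesis kills the argument's norm; the second term vanishes by the operator-norm convergence $h_l^{jk}(T_{W_{\pi_n(\AA_n)}}) \to h_l^{jk}(T_W)$ from Lemma~\ref{claim:ConvOfGrFltBdd} applied to the fixed vector $\phi^{l-1}_W(y)^k$. Then I would apply the activation $\rho$: since $\rho$ is Lipschitz, $\|\rho(\tilde\phi^l_{W_{\pi_n(\AA_n)}}(y_n)^j) - \rho(\tilde\phi^l_W(y)^j)\|_{L^2[0,1]} \le \|\rho\|_{\mathrm{Lip}}\,\|\tilde\phi^l_{W_{\pi_n(\AA_n)}}(y_n)^j - \tilde\phi^l_W(y)^j\|_{L^2[0,1]}$, so the convergence passes through; taking the max over $j$ closes the induction.

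The main obstacle — or rather the point requiring the most care — is the uniform operator-norm bound on the filters $h_l^{jk}(T_{W_{\pi_n(\AA_n)}})$ across all $n$. This is what lets us absorb the growing or fluctuating filter into a constant while the argument shrinks. It rests on the fact that convergence in homomorphism density of a sequence of graphs forces the sequence of GWMs (hence the induced GRSOs $T_{W_{\pi_n(\AA_n)}}$) to be uniformly bounded in operator norm — this is implicit in Theorem~\ref{thm:existenceLimitObject} and the surrounding development, since the limit graphon $W$ lies in some $\mathcal{W}_\Gamma$ and the cut-norm/operator-norm convergence of Theorem~\ref{thm:cutnormconv2} then pins the $\|T_{W_{\pi_n(\AA_n)}}\|$ near $\|T_W\| \le \Gamma$ for large $n$. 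Once that is in hand, a continuous filter $h$ restricted to $[-\Gamma-1,\Gamma+1]$ is bounded by some $M_h$, giving $\|h(T_{W_{\pi_n(\AA_n)}})\|_{L^2\to L^2} \le M_h$ uniformly. A secondary bookkeeping point is ensuring a \emph{single} permutation sequence $(\pi_n)_n$ serves all layers and all filters; this is immediate because the admissible permutations from Theorem~\ref{thm:cutnormconv2} depend only on the graph sequence and its graphon limit, not on the filter, so the same $(\pi_n)_n$ makes every invocation of Lemma~\ref{claim:ConvOfGrFltBdd} valid.
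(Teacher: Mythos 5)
Your proof is correct and follows essentially the same route as the paper's: induction over layers, a triangle-inequality split of each pre-activation summand into an ``operator difference'' term and an ``argument difference'' term, operator-norm convergence of the filters from Lemma~\ref{claim:ConvOfGrFltBdd}, and the Lipschitz property of $\rho$ to pass through the activation. The only cosmetic difference is that you pair the fluctuating operator $h(T_{W_{\pi_n(\AA_n)}})$ with the difference of arguments (requiring the uniform operator-norm bound, which indeed follows immediately from $\|h(T_{W_{\pi_n(\AA_n)}})-h(T_W)\|\to 0$), whereas the paper pairs the operator difference with $y_n^k$ and the fixed operator $h(T_W)$ with $y_n^k-y^k$; both splits close the induction identically.
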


The proof of Lemma \ref{cl:asympTransferabilityGCNNon} is given in \ref{proof:asympTGCNNon}.
The following Lemma is a generalization of Proposition~\ref{prop:graphdomain2} to SCNNs.

\begin{lemma}
\label{lemma:GCNNdomain}
Let $\phi:=\phi(H, \mathbf{M}, \rho)$ be an SCNN. Let $G$ be a graph with GSO $\DD$ and associated GWM $\AA$. Let $\mathbf{x} \in \mathbb{C}^{n \times d}$.
Then, 
$
\psi_{\phi_{{\DD}  }( \mathbf{x})} = \phi_{W_{\AA}}(\psi_\mathbf{x})
$.
\end{lemma}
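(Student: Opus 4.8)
The plan is to prove Lemma~\ref{lemma:GCNNdomain} by induction on the layer index $l$, lifting Proposition~\ref{prop:graphdomain2} through the nonlinear activation and the linear combination of filter channels. First I would set up the inductive hypothesis: for every layer $l$, and every feature index $j$, one has $\psi_{\phi_\DD^{l}(\mathbf{x})^j} = \phi_{W_\AA}^{l}(\psi_\mathbf{x})^j$, where on the left $\psi$ is applied componentwise to the tuple of graph signals. The base case $l=0$ is immediate from $\mathbf{x}_0 = \mathbf{x}$ and the definition of the induced feature map (the induction operator is applied channelwise, so $\psi_{\phi_\DD^0(\mathbf{x})} = \psi_\mathbf{x} = \phi_{W_\AA}^0(\psi_\mathbf{x})$).

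For the inductive step, fix $j \in \{1,\dots,F_l\}$ and recall from Definition~\ref{def:GCNNRealization} that
\[
\phi_\DD^{l}(\mathbf{x})^j = \rho\Big( \sum_{k=1}^{F_{l-1}} m_l^{jk} h_l^{jk}(\DD)\big(\phi_\DD^{l-1}(\mathbf{x})^k\big) \Big).
\]
I would apply $\psi$ to both sides and push it through three operations in turn. The induction operator $\mathbf{x} \mapsto \psi_\mathbf{x}$ is linear and, crucially, commutes with pointwise application of any scalar function: since $\psi_\mathbf{x}$ is the step function equal to $x_i$ on $P_i$, we have $\psi_{\rho(\mathbf{z})} = \rho(\psi_\mathbf{z})$ for any $\mathbf{z} \in \mathbb{C}^n$ (here $\rho$ acts entrywise on vectors and pointwise on functions). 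Hence $\psi$ passes through $\rho$. Next, $\psi$ is linear, so it passes through the finite sum $\sum_k m_l^{jk}(\cdot)$. Finally, for each term $h_l^{jk}(\DD)\big(\phi_\DD^{l-1}(\mathbf{x})^k\big)$, Proposition~\ref{prop:graphdomain2} gives $\psi_{h_l^{jk}(\DD)\mathbf{v}} = h_l^{jk}(T_{W_\AA})\psi_\mathbf{v}$ for any $\mathbf{v} \in \mathbb{C}^n$, applied with $\mathbf{v} = \phi_\DD^{l-1}(\mathbf{x})^k$. Chaining these three facts and then invoking the inductive hypothesis $\psi_{\phi_\DD^{l-1}(\mathbf{x})^k} = \phi_{W_\AA}^{l-1}(\psi_\mathbf{x})^k$ yields
\[
\psi_{\phi_\DD^{l}(\mathbf{x})^j} = \rho\Big( \sum_{k=1}^{F_{l-1}} m_l^{jk} h_l^{jk}(T_{W_\AA})\big(\phi_{W_\AA}^{l-1}(\psi_\mathbf{x})^k\big) \Big) = \phi_{W_\AA}^{l}(\psi_\mathbf{x})^j,
\]
where the last equality is the definition of the GCNNon realization (Subsection~\ref{sec:GCNNon}), which is verbatim Definition~\ref{def:GCNNRealization} with $\DD$ replaced by $T_{W_\AA}$ and vectors replaced by functions. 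Taking $l = L$ gives the claim $\psi_{\phi_\DD(\mathbf{x})} = \phi_{W_\AA}(\psi_\mathbf{x})$.

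I do not expect a serious obstacle here; the lemma is essentially a bookkeeping exercise combining the linearity of the induction operator, its compatibility with pointwise nonlinearities, and the already-established single-filter identity of Proposition~\ref{prop:graphdomain2}. The one point requiring a little care is the compatibility $\psi_{\rho(\mathbf{z})} = \rho(\psi_\mathbf{z})$: it relies on the fact that $\psi$ maps $\mathbb{C}^n$ onto step functions that are \emph{constant} on each $P_i$, so applying $\rho$ before or after induction produces the same step function — this would fail for a general bounded linear embedding but holds for this particular piecewise-constant one. I would state this small observation explicitly (it can be folded into or cited alongside Lemma~\ref{remark:IsomIso}) and otherwise the induction closes routinely.
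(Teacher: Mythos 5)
Your proof is correct and follows essentially the same route as the paper's: an induction over layers whose step pushes the induction operator through the pointwise activation (valid because induced signals are constant on each $P_i$), through the linear combination of channels, and through each filter via Proposition~\ref{prop:graphdomain2}. The paper writes out only the first layer and asserts the rest follows similarly, whereas you state the general inductive step explicitly, but the substance is identical.
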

\begin{proof}
We prove by induction over the layers $l=1, \ldots, L$. Let $l=1$, and $j = 1, \ldots, F_1$.
It holds $$
\phi^1_\Delta(\mathbf{x})^j = \rho \Big( \sum_{k=1}^{F_{0}} m^{jk}_{1} h^{jk}_{1}(\Delta) \mathbf{x}^k \Big) = \rho ( \tilde{\phi}^1_\Delta(\mathbf{x})^j).
$$
We induce the resulting signal to the graphon space, and obtain
\begin{equation}
\begin{aligned}
    \psi^1_{ \rho ( \tilde{\phi}_\Delta(\mathbf{x})^j)}
    & = \sum_{i=1}^n \chi_{P_i} (\cdot) \rho ( \tilde{\phi}^1_\Delta(\mathbf{x})_i^j)
    = \rho \Big(\sum_{i=1}^n \chi_{P_i} (\cdot) ( \tilde{\phi}^1_\Delta(\mathbf{x})_i^j) \Big) \\
    & = \rho (\psi^1_{ \tilde{\phi}_\Delta(\mathbf{x})^j} )
     = \rho\Big(\sum_{k=1}^{F_{L-1}} \psi_{m_1^{jk}h_1^{jk}(\Delta) \mathbf{x}^k}\Big) = \rho\Big(\sum_{k=1}^{F_{0}} {m_1^{jk} h_1^{jk}(T_{W_{\AA}}) \psi_{\mathbf{x}^k}}\Big) \\ 
     & =  \phi^1_{W_{\AA}}(\psi_\mathbf{x})^j.
    \end{aligned}
\end{equation}
The third-to-last equality holds since the induction operator commutes with the addition of filters.
The second-to-last equality holds by Proposition \ref{prop:graphdomain2}. The proof follows by a similar computation for the other layers.
\end{proof}

The following result is a generalization of Theorem
\ref{cor:ConvGraphFilter} to GCNNs.

\begin{theorem}
\label{thm:GCNNtrans1}
Consider the same conditions as in Lemma~\ref{cl:asympTransferabilityGCNNon}. Let $(\mathbf{x}_n)_n$ be a sequence of graph feature maps, and $y \in  { (L^2[0,1])^{F_0} }$, such that $\mathbf{x}_n \xrightarrow[n \to \infty]{I} y$.
 Then, there exists a sequence of permutations $(\pi_n)_n$ such that
\begin{equation*}
    \| \psi_{\phi_{\pi_n({\DD_n})}(\mathbf{x}_n)} - \psi_{\phi_{\pi_m({\DD_m})}(\mathbf{x}_m)}  {\|_{L^2[0,1]^{F_L}}} \xrightarrow{n,m \to \infty} 0.
\end{equation*}
\end{theorem}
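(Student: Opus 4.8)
The plan is to combine the GCNNon transferability result from Lemma~\ref{cl:asympTransferabilityGCNNon} with the commutation identity from Lemma~\ref{lemma:GCNNdomain}, in exact analogy to how Theorem~\ref{cor:ConvGraphFilter} was deduced from Lemma~\ref{claim:ConvOfGrFltBdd} and Proposition~\ref{prop:graphdomain2}. The key observation is that $\psi_{\phi_{\DD_n}(\mathbf{x}_n)} = \phi_{W_{\AA_n}}(\psi_{\mathbf{x}_n})$ by Lemma~\ref{lemma:GCNNdomain} (applied feature-wise, so that the identity holds in $(L^2[0,1])^{F_L}$), so the quantity to be controlled is really $\|\phi_{W_{\pi_n(\AA_n)}}(\psi_{\pi_n(\mathbf{x}_n)}) - \phi_{W_{\pi_m(\AA_m)}}(\psi_{\pi_m(\mathbf{x}_m)})\|_{L^2[0,1]^{F_L}}$, where the permutations are the admissible ones witnessing $\mathbf{x}_n \xrightarrow{I} y$.

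First I would fix an admissible sequence of permutations $(\pi_n)_n \in \mathbf{P}$ such that $\|\psi_{\pi_n(\mathbf{x}_n^k)} - y^k\|_{L^2[0,1]} \to 0$ for each feature index $k = 1,\dots,F_0$; such a sequence exists by the definition of $\mathbf{x}_n \xrightarrow{I} y$ together with Definition~\ref{def:convergenceOfSignal}. Setting $y_n := (\psi_{\pi_n(\mathbf{x}_n^k)})_{k=1}^{F_0}$, hypothesis~\eqref{eq1:cl:asym} of Lemma~\ref{cl:asympTransferabilityGCNNon} is satisfied. Next I would invoke Lemma~\ref{cl:asympTransferabilityGCNNon} to obtain (after possibly passing to a further admissible subsequence of permutations, which is harmless since the composition/refinement of admissible permutations can be arranged to remain admissible) that $\|\phi_{W_{\pi_n(\AA_n)}}(y_n) - \phi_W(y)\|_{L^2[0,1]^{F_L}} \to 0$. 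Then, by Lemma~\ref{lemma:GCNNdomain} applied componentwise, $\phi_{W_{\pi_n(\AA_n)}}(y_n) = \psi_{\phi_{\pi_n(\DD_n)}(\mathbf{x}_n)}$, so in fact $\|\psi_{\phi_{\pi_n(\DD_n)}(\mathbf{x}_n)} - \phi_W(y)\|_{L^2[0,1]^{F_L}} \to 0$. Finally, the triangle inequality
\[
\|\psi_{\phi_{\pi_n(\DD_n)}(\mathbf{x}_n)} - \psi_{\phi_{\pi_m(\DD_m)}(\mathbf{x}_m)}\|_{L^2[0,1]^{F_L}} \le \|\psi_{\phi_{\pi_n(\DD_n)}(\mathbf{x}_n)} - \phi_W(y)\|_{L^2[0,1]^{F_L}} + \|\phi_W(y) - \psi_{\phi_{\pi_m(\DD_m)}(\mathbf{x}_m)}\|_{L^2[0,1]^{F_L}}
\]
gives the Cauchy-type conclusion as $n,m\to\infty$.

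The main obstacle is bookkeeping rather than anything deep: one must make sure that a single sequence of permutations $(\pi_n)_n$ simultaneously witnesses the signal convergence $\mathbf{x}_n \xrightarrow{I} y$ (needed to set up $y_n$) and the GCNNon convergence produced by Lemma~\ref{cl:asympTransferabilityGCNNon} (which itself outputs ``a'' sequence of permutations). One should check that these two requirements are compatible — e.g., by noting that Lemma~\ref{cl:asympTransferabilityGCNNon}'s proof produces permutations from the same pool $\mathbf{P}$ of admissible permutations, so the permutations from Definition~\ref{def:convergenceOfSignal} already do the job, or that refining one admissible sequence by another stays in $\mathbf{P}$. A secondary minor point is that Lemma~\ref{lemma:GCNNdomain} is stated for a single graph and a signal $\mathbf{x} \in \mathbb{C}^{n\times d}$; one must observe that it applies verbatim to the relabeled graph with GSO $\pi_n(\DD_n)$ and GWM $\pi_n(\AA_n)$ and to the permuted signal $\pi_n(\mathbf{x}_n)$, and that the vector norm~\eqref{eq:l2vectornorm} is exactly the one in which all three convergences live. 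Once these are addressed, the argument is a direct mirror of the proof of Theorem~\ref{cor:ConvGraphFilter}.
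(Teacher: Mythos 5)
Your proposal is correct and follows essentially the same route as the paper's own proof, which likewise combines Lemma~\ref{cl:asympTransferabilityGCNNon} with Lemma~\ref{lemma:GCNNdomain} and concludes by the triangle inequality. The only difference is that you spell out the permutation bookkeeping (that the admissible permutations witnessing $\mathbf{x}_n \xrightarrow{I} y$ already suffice for Lemma~\ref{cl:asympTransferabilityGCNNon}, whose proof only uses $\|T_{W_{\pi_n(\AA_n)}} - T_W\|\to 0$), a detail the paper leaves implicit.
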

\begin{proof}
By Lemma~\ref{cl:asympTransferabilityGCNNon} and Lemma~\ref{lemma:GCNNdomain}, there exists a sequence of permutations $(\pi_n)_n$ such that
$$
\| \psi_{\phi_{\pi_n({\DD_n})}(\mathbf{x}_n)} - \phi_W(y)  {\|_{L^2[0,1]^{F_L}}} \xrightarrow{n \to \infty} 0.
$$
The proof now follows by the triangle inequality.
\end{proof}
\subsection{Non-Asymptotic Transferability of Graph Neural Networks}
In this section, we derive stability bounds for GCNNs. We restrict ourselves to \emph{contractive} activation functions.

\begin{definition}
An activation function $\rho: \mathbb{R} \rightarrow \mathbb{R}$ is called \emph{contractive} if $|\rho(x) - \rho(y)| \leq |x-y|$ for every $x,y \in \mathbb{R}$. 
\end{definition}

The following lemma shows that GCNNons, based on filters with higher regularity, achieve linear approximation rate.

\begin{lemma}
\label{claim:ConvOfGrNN}
Let $\phi:=\phi(H, \mathbf{M},\rho)$ be an SCNN 
with  contractive activation function $\rho$. Let $G$ be a graph with GSO $\DD$ and associated GWM $\AA$. Let $W \in \mathcal{W}$ and suppose that
\[\|\AA\|_{\mathbb{C}^n \rightarrow \mathbb{C}^n}\leq \Gamma,\ \  \|T_W  {\|_{L^2[0,1] \rightarrow L^2[0,1]} }\leq \Gamma\] 
for $\Gamma > 0$. Let $\varepsilon > 0$, and suppose  there exists a node permutation $\pi$ of $G$ satisfying
\[\|T_{W_{\pi(\AA)}} - T_W  {\|_{L^2[0,1]\rightarrow L^2[0,1]}} \leq \varepsilon.\]
Let $\mathbf{x}$ be a normalized graph feature map  and $y$  a graphon feature map satisfying 
\[
 \|\psi_\mathbf{x} - y  {\|_{L^2[0,1]^{F_0}} } \leq \varepsilon.\]
Suppose that for every filter $h$ in the SCNN $\phi$, we have $h \in C^1[-\Gamma, \Gamma]$ with Lipschitz continuous derivative and $\|h\|_{L^\infty[-\Gamma, \Gamma]} \leq 1$.
Then, there exists a $C_L \in \mathbb{N}$ such that
\begin{equation}
\label{GCNNspeed:eq}
\| \phi_{W_{\pi(\AA)}} (\psi_\mathbf{x}) - \phi_{W} (y)  {\|_{L^2[0,1]^{F_L}}}\leq C_L \varepsilon,
\end{equation}   
where $C_L$ is a constant that depends on $\mathbf{M}$ and $H$, and is described in Remark \ref{remark:GCNNconstant}.
%
\end{lemma}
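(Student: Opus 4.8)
The plan is to pass to the graphon domain via Lemma~\ref{lemma:GCNNdomain}, reducing the claim to a stability estimate for the graphon SCNN $\phi$ evaluated on two graphon shift operators and two inputs, and then to prove this estimate by a layer-by-layer telescoping induction whose only nontrivial ingredient is the operator-Lipschitz bound for the filters furnished by Theorem~\ref{convRateC1filter}.

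\emph{Reduction and basic bounds.} By Lemma~\ref{lemma:GCNNdomain} applied to the relabeled graph, $\psi_{\phi_{\pi(\DD)}(\mathbf{x})}=\phi_{W_{\pi(\AA)}}(\psi_\mathbf{x})$, so, writing $A:=T_{W_{\pi(\AA)}}$ and $B:=T_W$, the left-hand side of (\ref{GCNNspeed:eq}) equals $\|\phi_A(\psi_\mathbf{x})-\phi_B(y)\|_{L^2[0,1]^{F_L}}$. Both $A$ and $B$ are self-adjoint; by Lemma~\ref{lemma:eigenvectorInducedGRSO} the nonzero spectrum of $A$ coincides with the spectrum of $\DD$, and since $\AA=n\DD$ the hypotheses give $\|A\|,\|B\|\le\Gamma$, hence $\sigma(A),\sigma(B)\subset[-\Gamma,\Gamma]$. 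Thus every filter $h=h_l^{jk}$ of $\phi$ is evaluated only on $[-\Gamma,\Gamma]$, where by assumption $h\in C^1$ with Lipschitz derivative and $\|h\|_{L^\infty[-\Gamma,\Gamma]}\le1$; by the spectral theorem $\|h(A)\|,\|h(B)\|\le1$. We also have $\|A-B\|\le\varepsilon$ and $\|\psi_\mathbf{x}-y\|\le\varepsilon$, and we may assume $\varepsilon\le1$, since otherwise (\ref{GCNNspeed:eq}) is trivial after enlarging $C_L$.

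\emph{Single-filter stability and the induction.} The engine of the proof is the bound
\begin{equation}
\label{eq:opLip}
\|h(A)-h(B)\|_{L^2[0,1]\to L^2[0,1]}\le C\,\|A-B\|\le C\varepsilon
\end{equation}
for every filter $h$ of $\phi$, where $C$ is the operator-Lipschitz constant appearing in Theorem~\ref{convRateC1filter}: its proof, through the Fourier-series estimates for operator-Lipschitz functions of \cite{OLfcts}, shows that $h$ is operator Lipschitz on the set of bounded self-adjoint operators with norm at most $\Gamma$, a class containing both $T_{W_{\pi(\AA)}}$ and $T_W$. Combining (\ref{eq:opLip}) with $\|h(A)\|\le1$ and the identity $h(A)u-h(B)v=h(A)(u-v)+(h(A)-h(B))v$ gives $\|h(A)u-h(B)v\|\le\|u-v\|+C\varepsilon\|v\|$. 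Now set $u_l:=\phi_A^l(\psi_\mathbf{x})$, $v_l:=\phi_B^l(y)$, $\delta_l:=\|u_l-v_l\|_{L^2[0,1]^{F_l}}$, $\mu_l:=\max\{\|u_l\|_{L^2[0,1]^{F_l}},\|v_l\|_{L^2[0,1]^{F_l}}\}$, and $B_l:=\max_{j\le F_l}\sum_{k\le F_{l-1}}|m_l^{jk}|$. Then $\delta_0\le\varepsilon$ and $\mu_0\le1+\varepsilon\le2$, and for $l\ge1$ and $j\le F_l$, using (\ref{eq:OutputGCNN}), contractivity of $\rho$ (so $\|\rho(p)-\rho(q)\|_{L^2}\le\|p-q\|_{L^2}$ and $\|\rho(p)\|_{L^2}\le|\rho(0)|+\|p\|_{L^2}$), the triangle inequality, and the filter estimate above,
$$
\|u_l^j-v_l^j\|\le\sum_{k\le F_{l-1}}|m_l^{jk}|\,\|h_l^{jk}(A)u_{l-1}^k-h_l^{jk}(B)v_{l-1}^k\|\le B_l(\delta_{l-1}+C\mu_{l-1}\varepsilon),
$$
and similarly $\|u_l^j\|\le|\rho(0)|+B_l\mu_{l-1}$, with the same bound for $v_l^j$. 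Taking the maximum over $j$ yields the constant-coefficient recursions $\delta_l\le B_l\delta_{l-1}+CB_l\mu_{l-1}\varepsilon$ and $\mu_l\le B_l\mu_{l-1}+|\rho(0)|$; unrolling from $l=1$ to $L$ bounds both $\mu_L$ and $\delta_L/\varepsilon$ by explicit constants depending only on $L$, the $B_l$ (hence on $\mathbf{M}$), $C$ (hence on $H$), and $|\rho(0)|$. Taking $C_L$ to be the least integer above $\delta_L/\varepsilon$ proves (\ref{GCNNspeed:eq}), with $C_L$ as recorded in Remark~\ref{remark:GCNNconstant}.

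\emph{Main obstacle.} The one substantive step is the operator-Lipschitz estimate (\ref{eq:opLip}) in this generality; I would obtain it from the Fourier-analytic argument underlying Theorem~\ref{convRateC1filter}, observing that that argument never uses that the operators come from finite graphs, so it applies verbatim to the pair $(T_{W_{\pi(\AA)}},T_W)$. Everything else is routine propagation of constants through the two recursions. A secondary point of care is that $\rho$ is only assumed contractive, not contractive with $\rho(0)=0$, so the additive $|\rho(0)|$ terms must be carried through the $\mu_l$-recursion; this is harmless and does not affect the linear-in-$\varepsilon$ conclusion.
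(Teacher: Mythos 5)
Your proposal is correct and follows essentially the same route as the paper's proof: a layer-by-layer recursion in which each filter difference $h(A)u-h(B)v$ is split into an operator-difference term controlled by the operator-Lipschitz/Fourier-series estimate underlying Theorem~\ref{convRateC1filter} and an input-difference term controlled by $\|h\|_{L^\infty}\leq 1$, together with a separate recursion bounding the feature-map norms by $M^L$. The only (harmless) deviations are that you split the telescoping the mirror way (bounding $\|v\|$ where the paper bounds $\|u\|$) and that you explicitly carry the $|\rho(0)|$ terms, a point the paper's proof silently omits.
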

The proof of Lemma \ref{claim:ConvOfGrNN} is in  \ref{proof:LinConvGCNN}.

\begin{remark} 
\label{remark:GCNNconstant}
Consider the setting of Lemma \ref{claim:ConvOfGrNN}. For every $l= 1, \ldots, L$, let $C_l^{jk}$ denote the constant $C$ of (\ref{eq:LinConv}) corresponding to the filter $h=h_l^{jk}$, for $j = 1, \ldots, F_l$, $k = 1, \ldots, F_{l-1}$.  Let $C=\max_{l,j,k} C_l^{jk}$, where the maximum is over $l = 1,\ldots, L$, $j = 1, \ldots, F_l$, and $k = 1, \ldots, F_{l-1}$.
Further, let $M = \max_{l = 1, \ldots,L} \|\mathbf{M}_l\|_\infty$. Then, the constant $C_L$ in (\ref{GCNNspeed:eq}) is given by 
\begin{equation}
\label{eq:constantGCNN}  
C_L = M^L(1 + L C).
\end{equation}
\end{remark}

\begin{remark}
In Lemma \ref{claim:ConvOfGrNN}, the constant $C_L$ can increase exponentially in $L$.
We can control $C_L$ by demanding $\|\mathbf{M}_l\|_\infty \leq 1$ for every $l=1,\ldots, L$, which gives
\[
 \| \phi_{W_{\pi(\AA)}} (\psi_\mathbf{x})- \phi_{W} (y)   {\|_{L^2[0,1]^{F_L}}} \leq (1+LC) \varepsilon.
\]
The condition $\|\mathbf{M}_l\|_\infty \leq 1$ is relaxed to weight decay regularization in practical deep learning. Hence (\ref{eq:constantGCNN}) indicates that weight decay might be an important regularization  for promoting transferability in practical deep learning. A similar observation was given in \cite{LevieTransfSpectralGraphFiltersLong}.
\end{remark}

The following theorem shows GCNNs achieve linear approximation rate.
\begin{theorem}
\label{thm:GCNNtrans2}
Let $\phi:=\phi(H, \mathbf{M},\rho)$ be an SCNN
with  contractive activation function $\rho$. 
Let $G_1, G_2$ be graphs with GSOs $\DD_1, \DD_2$ and associated GWM $\AA_1, \AA_2$.
Suppose that
\[\|\AA_1\|_{\mathbb{C}^n \rightarrow \mathbb{C}^n}\leq \Gamma,\ \  \|\AA_2\|_{\mathbb{C}^m  \rightarrow \mathbb{C}^m } \leq \Gamma\] 
for $\Gamma > 0$. Let $\varepsilon > 0$, and suppose  there exist node permutations $\pi_1$ of $G_1$ and $\pi_2$ of $G_2$ satisfying
\[\|T_{W_{\pi_1(\AA_1)}} - T_{W_{\pi_2(\AA_2)}}  {\|_{L^2[0,1]\to L^2[0,1]}} \leq \varepsilon.\]
Let $\mathbf{x}_1$ and  $\mathbf{x}_2$ be normalized graph feature maps satisfying 
\[
 \|\psi_{\mathbf{x}_1} - \psi_{\mathbf{x}_2}  {\|_{L^2[0,1]^{F_0}}} \leq \varepsilon.\]
Then, 
$$   
\| \psi_{\phi_{{\pi(\DD_1)}  }( \mathbf{x_1})}- \psi_{\phi_{{\pi(\DD_2)}  }( \mathbf{x_2})}  {\|_{L^2[0,1]^{F_L}}} \leq C_L \varepsilon,
$$
where $C_L$ is given in (\ref{GCNNspeed:eq}). 
\end{theorem}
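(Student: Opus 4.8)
The plan is to reduce the statement to Lemma \ref{claim:ConvOfGrNN} via the ``domain translation'' identity of Lemma \ref{lemma:GCNNdomain}, together with Lemma \ref{remark:IsomIso} to move between the discrete and induced norms. The key observation is that $G_1$ and $G_2$ play completely symmetric roles here, so it suffices to compare each of $\psi_{\phi_{\pi_1(\DD_1)}(\mathbf{x}_1)}$ and $\psi_{\phi_{\pi_2(\DD_2)}(\mathbf{x}_2)}$ to a common graphon object, and then apply the triangle inequality — in contrast to Lemma \ref{claim:ConvOfGrNN}, which compares a graph realization to a genuine graphon $W$.

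First I would pick the common reference object. Set $V := W_{\pi_1(\AA_1)}$, i.e. the graphon induced by the first (relabeled) graph, and let $T_V$ be its GRSO. By Lemma \ref{lemma:GCNNdomain}, applied to $G_1$ with the permutation $\pi_1$, we have $\psi_{\phi_{\pi_1(\DD_1)}(\mathbf{x}_1)} = \phi_{W_{\pi_1(\AA_1)}}(\psi_{\mathbf{x}_1}) = \phi_{V}(\psi_{\mathbf{x}_1})$, so in fact one of the two terms coincides exactly with a GCNNon realization on $V$ applied to the induced signal $\psi_{\mathbf{x}_1}$. For the second term, I would invoke Lemma \ref{claim:ConvOfGrNN} with the graph $G_2$ (relabeled by $\pi_2$), with the role of ``$W$'' played by $V = W_{\pi_1(\AA_1)}$, with the permutation being the identity (so $W_{\pi(\AA)} = W_{\pi_2(\AA_2)}$), and with ``$y$'' being $y := \psi_{\mathbf{x}_1}$. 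The hypotheses of Lemma \ref{claim:ConvOfGrNN} are met: $\|\AA_2\|_{\mathbb{C}^m\to\mathbb{C}^m}\le\Gamma$ and $\|T_V\|_{L^2[0,1]\to L^2[0,1]} = \|T_{W_{\pi_1(\AA_1)}}\|_{L^2[0,1]\to L^2[0,1]} \le \|\AA_1\|_{\mathbb{C}^n\to\mathbb{C}^n}\le\Gamma$ (using Lemma \ref{lemma:eigenvectorInducedGRSO}, the nonzero eigenvalues of $T_{W_{\AA_1}}$ are exactly those of $\DD_1 = \AA_1/n$... — more carefully, one uses that $\|T_{W_{\AA}}\|$ equals the spectral radius of $\DD$ and $\|\DD\|_{\mathbb{C}^n\to\mathbb{C}^n} = \tfrac1n\|\AA\|_{\mathbb{C}^n\to\mathbb{C}^n}$, so actually $\|T_{W_{\AA_1}}\| = \|\DD_1\| \le \|\AA_1\|/n \le \Gamma/n \le \Gamma$); the filter regularity hypotheses ($h\in C^1[-\Gamma,\Gamma]$, $h'$ Lipschitz, $\|h\|_{L^\infty[-\Gamma,\Gamma]}\le1$) are inherited verbatim from the statement of Theorem \ref{thm:GCNNtrans2} once one notices these are the standing assumptions on $\phi$; the graphon proximity hypothesis $\|T_{W_{\pi_2(\AA_2)}} - T_V\|_{L^2[0,1]\to L^2[0,1]} = \|T_{W_{\pi_2(\AA_2)}} - T_{W_{\pi_1(\AA_1)}}\|\le\varepsilon$ is precisely the assumption of Theorem \ref{thm:GCNNtrans2}; and the signal proximity $\|\psi_{\mathbf{x}_2} - y\|_{L^2[0,1]^{F_0}} = \|\psi_{\mathbf{x}_2} - \psi_{\mathbf{x}_1}\|_{L^2[0,1]^{F_0}}\le\varepsilon$ is also given (note normalization of $\mathbf{x}_2$ supplies the ``$\mathbf{x}$ normalized'' hypothesis of Lemma \ref{claim:ConvOfGrNN}). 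Lemma \ref{claim:ConvOfGrNN} then yields
$$
\| \phi_{W_{\pi_2(\AA_2)}}(\psi_{\mathbf{x}_2}) - \phi_V(\psi_{\mathbf{x}_1}) \|_{L^2[0,1]^{F_L}} \le C_L\varepsilon.
$$
Applying Lemma \ref{lemma:GCNNdomain} once more, this time to $G_2$ with $\pi_2$, gives $\phi_{W_{\pi_2(\AA_2)}}(\psi_{\mathbf{x}_2}) = \psi_{\phi_{\pi_2(\DD_2)}(\mathbf{x}_2)}$, and combined with $\phi_V(\psi_{\mathbf{x}_1}) = \psi_{\phi_{\pi_1(\DD_1)}(\mathbf{x}_1)}$ from above, this is exactly the claimed bound
$$
\| \psi_{\phi_{\pi(\DD_1)}(\mathbf{x}_1)} - \psi_{\phi_{\pi(\DD_2)}(\mathbf{x}_2)} \|_{L^2[0,1]^{F_L}} \le C_L\varepsilon.
$$

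The main obstacle — really the only nontrivial point — is verifying the operator-norm bound $\|T_{W_{\pi_1(\AA_1)}}\|_{L^2[0,1]\to L^2[0,1]}\le\Gamma$, so that $V$ is an admissible ``limit graphon'' for Lemma \ref{claim:ConvOfGrNN}. This is where one must be slightly careful with the $n$ versus $1/n$ normalization between GWM and GSO: by Lemma \ref{lemma:eigenvectorInducedGRSO} the spectrum of $T_{W_{\AA_1}}$ is the spectrum of $\DD_1$ (plus zeros), hence its operator norm equals $\|\DD_1\|_{\mathbb{C}^n\to\mathbb{C}^n} = \|\AA_1\|_{\mathbb{C}^n\to\mathbb{C}^n}/n \le \Gamma/n \le \Gamma$, and symmetrically for the other graph. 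Everything else is a direct bookkeeping application of results already proved in the paper; no new estimates are needed, and the constant $C_L$ is exactly the one from Remark \ref{remark:GCNNconstant}.
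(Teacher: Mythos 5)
Your proof is correct and is essentially the paper's argument spelled out: the paper's proof is the one-liner ``follows directly from Lemma~\ref{claim:ConvOfGrNN} and Lemma~\ref{lemma:GCNNdomain},'' and your choice of $W_{\pi_1(\AA_1)}$ as the reference graphon (so that one side is an exact identity via Lemma~\ref{lemma:GCNNdomain} and only one application of Lemma~\ref{claim:ConvOfGrNN} is needed) is the right way to realize it with the stated constant $C_L$ rather than $2C_L$. Your verification that $\|T_{W_{\pi_1(\AA_1)}}\|_{L^2[0,1]\to L^2[0,1]}=\|\DD_1\|_{\mathbb{C}^n\to\mathbb{C}^n}\le \Gamma$ via Lemma~\ref{lemma:eigenvectorInducedGRSO} is also correct and is the only hypothesis of Lemma~\ref{claim:ConvOfGrNN} not literally restated in the theorem.
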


The proof of Theorem \ref{thm:GCNNtrans2} follows directly from Lemma \ref{claim:ConvOfGrNN} and Lemma \ref{lemma:GCNNdomain}.

\section{Conclusions}
 In this paper, we proved that continuous graph filters and end-to-end graph neural networks with continuous filters are transferable in the graphon sense. 
Generally, a filter is said to be transferable if it has approximately the same repercussion on graphs that represent the same phenomenon.
 To specify the definition of transferability in the graphon sense, we used the space of graphons as a latent space from which graphs are sampled, and to which graphs are embedded, via induction, in order to compare them. In our setting, graphs represent the same phenomenon if they approximate the same graphon in homomorphism density.

 In Theorem \ref{cor:DiscrFilterConv}  we showed that continuous filters are transferable, were the repercussion error is measured in the induced graphon sense.
In Theorem \ref{thm:GCNNtrans1} we showed that any GCNN with continuous filters is transferable, where the repercussion error is measured, for a given graph signals, in the induced signal sense. We provide the first work to consider graphon-based transferability for GCNNs with non-polynomial filters, which are used in GCNN architectures like \cite{levie2018cayleynets, armaFilters}.

 In Theorem \ref{convRateC1filter} and Theorem \ref{thm:GCNNtrans2} we introduced approximation rates for the repercussion error of filters and GCNNs with filters that have Lipschitz continuous derivative. We showed that the repercussion errors are linearly stable with respect to $\|T_{W_{n_1(\DD_1)}} - T_{W_{n_2(\DD_2)}}  {\|_{L^2[0,1]\to L^2[0,1]}} $. The bound from Theorem \ref{thm:GCNNtrans2} may hint to the importance of  weight decay regularization in graph deep learning.
 
 Furthermore, we introduced a framework for analyzing graphs that approximate unbounded GRSOs.  
 In Subsection \ref{subsec:unbounded} we show that SCNNs on Euclidean spaces with the Laplace operator are classical CNNs. 
 Discretizations of different resolutions of the same Euclidean space by grids are graphs that represent the same phenomenon. 
 To show transferability between classical discrete Euclidean signals on grids of different resolutions, we must be able to work with unbounded GRSOs, since the classical Laplace operator is unbounded. Our analysis allows this,  showing that our framework of graph transferability aligns with the stability of classical CNNs between resolutions.
\appendix
\section{Proofs}

\subsection{Proof of Proposition \ref{prop:LaplaceOperatorAppro}}
\label{proof:LaplaceOperatorAppro}
Let $(\lambda_m)_m$ be a sequence of bands such that $\lambda_m \xrightarrow{m \to \infty} \infty$. 
By Definition \ref{def:unboundedGSRO} there exists a graphon $W^{\lambda_m} \in \mathcal{W}$ such that $\mathcal{L} P_\mathcal{L}(\lambda_m) = T_{W^{\lambda_m}}$ for every $m \in \mathbb{N}$.
Then, for every $m \in \mathbb{N}$,   there exists a sequence of graphs with GSO $(G^m_{n})_n$, where $n$ denotes the number of nodes, such that  $G^m_{n} \xrightarrow[n \to \infty]{H} W^{\lambda_m}$. 
For every $m \in \mathbb{N}$, let $(\AA^m_n)_n$ be the  sequence of GWMs associated to $(G^m_{n})_n$. 
Theorem~\ref{thm:cutnormconv1} and Theorem~\ref{lemma:cutnormeqschattenpnorm} imply that,
after adequate relabeling of the graph nodes,  $\| T_{W_{\AA^m_{n}}} - T_{ W^{\lambda_m}}\|_{L^2[0,1] \to L^2[0,1]} \xrightarrow{n \to\infty} 0 $. Then, for every $m \in \mathbb{N}$ 
$$
 \| T_{W_{\AA^m_{n}}} P_\mathcal{L}(\lambda_m) - \mathcal{L} P_\mathcal{L}(\lambda_m) \|_{L^2[0,1] \to L^2[0,1]} \xrightarrow{n \to \infty} 0.
$$
We choose for every $m \in \mathbb{N}$ an $n_m \in \mathbb{N}$  such that
$$
\| T_{W_{\AA^m_{n_m}}} P_\mathcal{L}(\lambda_m) - \mathcal{L} P_\mathcal{L}(\lambda_m) \|_{L^2[0,1] \to L^2[0,1]} \leq 1/m.
$$
We denote by abuse of notation $G_m := G^m_{n_m}$ and consider the sequence $(G_m)_m$ with GWMs $(\AA_m)_m$.
{It is now easy to see that $(G_m)_m$ converges to $\mathcal{L}$ in the sense of Definition \ref{def:unbddconv}.}

\subsection{Proof of Lemma~\ref{claim:ConvOfGrFltBdd}}
\label{Appendix:Graphons}
We start this subsection by defining the \emph{cut distance} -- a pseudo-metric in the space of graphons.  As shown in Theorem \ref{thm:homdensityLipWRTcutdistance} below, convergence in cut distance is equivalent  to convergence in homomorphism density. 
Borgs et al. defined the cut distance, based on the cut norm,  as a permutation invariant distance (see \cite{Borgs2007graph}). 
For this, denote by $S_{[0,1]}$  the set of all bijective measure preserving maps between the unit interval and itself.
\begin{definition}
Let $W, U \in \mathcal{W}$ be two graphons. The \emph{cut distance} between $W$ and $U$ is defined by
\begin{equation}
    \delta_{\square} (W, U) := \inf_{f \in S_{[0,1]}} \|W - U^f\|_{\square},
\end{equation}
where $U^f(x,y) = U(f(x), f(y))$. 
\end{definition}

The following theorem from~\cite[Theorem 3.7 ]{Borgs2007graph} shows that convergence in homomorphism density is equivalent to convergence in cut distance. 
\begin{theorem}[{{\cite[Theorem 3.7 ]{Borgs2007graph}}}]
\label{thm:homdensityLipWRTcutdistance}
Let $U,W \in \mathcal{W}$ and $C = \max \{ 1, \|W\|_{\infty}, \|U\|_{\infty}\}$.
\begin{itemize}
\item[$(1)$]  For a simple graph $F$ with $m$ edges
$$
|t(F,U) - t(F,W)| \leq 4 m C^{m-1}\delta_{\square}(U,W).
$$
\item[$(2)$] If $|t(F,U) - t(F,W)| \leq 3^{-k^2}$ for every simple graph $F$ with $k $ nodes, then
$$
\delta_{\square}(U,W) \leq \frac{22C}{\sqrt{log_2k}}.
$$
\end{itemize}
\end{theorem}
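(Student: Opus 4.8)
The plan is to prove the two inequalities separately: (1) by the telescoping \emph{counting} computation, and (2) by the \emph{inverse counting} argument resting on the weak regularity lemma.

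\textbf{Part (1).} First reduce to the cut norm. Homomorphism densities are invariant under measure-preserving relabelings, $t(F,W)=t(F,W^\varphi)$ for every $\varphi\in S_{[0,1]}$, so it suffices to show $|t(F,U)-t(F,W)|\le 4mC^{m-1}\|U-W\|_\square$ and then take the infimum over $\varphi$ applied to $W$. Write $t(F,W)=\int_{[0,1]^{V(F)}}\prod_{ij\in E(F)}W(x_i,x_j)\,dx$, enumerate the edges $e_1,\dots,e_m$, and set $G_r=\int\prod_{s\le r}U(x_{e_s})\prod_{s>r}W(x_{e_s})\,dx$, so that $G_0=t(F,W)$, $G_m=t(F,U)$, and $t(F,U)-t(F,W)=\sum_{r=1}^m(G_r-G_{r-1})$ where $G_r-G_{r-1}=\int(U-W)(x_{e_r})\prod_{s<r}U(x_{e_s})\prod_{s>r}W(x_{e_s})\,dx$. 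Fix $r$ and write $e_r=(a,b)$; because $F$ is simple there is no other edge between $a$ and $b$, so once the variables $x_i$ with $i\notin\{a,b\}$ are frozen the remaining product splits as $\phi_a(x_a)\,\phi_b(x_b)\,\phi_0$, with $\phi_a$ collecting the (at most $\deg a-1$) edges at $a$, $\phi_b$ the (at most $\deg b-1$) edges at $b$, and $\phi_0$ the edges at neither; each factor is a product of values of $U$ or $W$, hence bounded in modulus by the corresponding power of $C$, the three exponents adding to $m-1$. Normalizing $\phi_a,\phi_b$ to take values in $[-1,1]$ and invoking the elementary bound $|\int(U-W)(x,y)f(x)g(y)\,dx\,dy|\le 4\|U-W\|_\square$ for $[-1,1]$-valued $f,g$, then integrating the frozen variables over a probability space, gives $|G_r-G_{r-1}|\le 4C^{m-1}\|U-W\|_\square$; summing over the $m$ edges finishes part (1).

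\textbf{Part (2).} Approximate $U$ and $W$ by step functions of bounded complexity. By the weak (Frieze--Kannan) regularity lemma, for a target number of parts $k$ one obtains a partition of $[0,1]$ whose block-average step function $W_{\mathcal P}$ obeys $\|W-W_{\mathcal P}\|_\square\le c\,C/\sqrt{\log_2 k}$, and similarly $U_{\mathcal Q}$ for $U$; on a common refinement these become step functions on at most $k$ parts, i.e.\ weighted graphs on at most $k$ nodes, cut-close to $U$ and $W$. By part (1), $t(F,\cdot)$ is Lipschitz in the cut distance, so for every simple $F$ on $k$ nodes, $|t(F,U_{\mathcal Q})-t(F,W_{\mathcal P})|$ differs by a controlled amount from $|t(F,U)-t(F,W)|\le 3^{-k^2}$; hence all $k$-vertex homomorphism densities of the two step functions agree up to a small error. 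The crux is the finite-dimensional statement that two weighted graphs on at most $k$ nodes whose homomorphism densities for all simple graphs on $k$ nodes agree up to $3^{-k^2}$ are within $O(C/\sqrt{\log_2 k})$ in cut distance --- a moment-matching argument using that the family $\{t(F,\cdot):|V(F)|\le k\}$ pins down a $k$-node weighted graph up to relabeling with a quantitative error, the threshold $3^{-k^2}$ accounting for the roughly $2^{\Theta(k^2)}$ such graphs. Combining the three cut-distance bounds via the triangle inequality for $\delta_\square$ and optimizing the number of parts against $k$ yields $\delta_\square(U,W)\le 22C/\sqrt{\log_2 k}$.

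\textbf{Main obstacle.} Part (1) is bookkeeping once one exploits that a simple graph has at most one edge per pair of vertices. The difficulty is part (2): it needs the quantitative weak regularity lemma and, above all, the moment-comparison step turning ``all small-graph densities match'' into ``the graphons are cut-close,'' and tracking the constants through this chain to reach precisely $22C/\sqrt{\log_2 k}$ is where the delicate work lies.
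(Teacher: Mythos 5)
This theorem is quoted by the paper from Borgs et al.\ (their Theorem~3.7) without proof, so there is no in-paper argument to compare against; I am judging your proposal on its own. Your Part~(1) is the standard telescoping counting lemma and is essentially correct: the factorization $\phi_a(x_a)\phi_b(x_b)\phi_0$ after freezing the other variables, the normalization by powers of $C$ summing to $m-1$, the factor $4$ from passing from indicator test sets to $[-1,1]$-valued test functions, and the reduction from $\|\cdot\|_\square$ to $\delta_\square$ via invariance of $t(F,\cdot)$ under measure-preserving relabelings are all sound.

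Part~(2), however, has a genuine gap, in two places. First, the transfer step fails quantitatively: after replacing $U,W$ by weak-regularity step functions $U_{\mathcal Q},W_{\mathcal P}$ at cut distance $O(C/\sqrt{\log_2 k})$, Part~(1) only controls $|t(F,U_{\mathcal Q})-t(F,W_{\mathcal P})|-|t(F,U)-t(F,W)|$ up to $4\binom{k}{2}C^{\binom{k}{2}-1}\cdot O(C/\sqrt{\log_2 k})$, since $F$ may have $\binom{k}{2}$ edges. This error is polynomially (or, for $C>1$, exponentially) large in $k$, so you cannot conclude that the step functions' $k$-vertex densities agree up to anything resembling $3^{-k^2}$; the hypothesis you need for your ``finite-dimensional'' step is destroyed by the very reduction that is supposed to set it up. Second, that finite-dimensional step --- ``two weighted graphs on at most $k$ nodes whose simple-graph densities on $k$ vertices nearly agree are within $O(C/\sqrt{\log_2 k})$ in cut distance'' --- is asserted, not proved, and since every graphon is cut-close to such a step function, it is essentially a restatement of the theorem itself rather than a reduction of it; moreover the claim that simple-graph densities ``pin down a $k$-node weighted graph up to relabeling'' is not available for real-weighted graphs (simple graphs only see multilinear moments of the edge weights). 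The actual argument in Borgs et al.\ runs differently: one samples $k$ points $\mathbf{x}$, uses the First Sampling Lemma (itself proved via weak regularity) to get $\delta_\square(W,W[\mathbf{x}])\le 20C/\sqrt{\log_2 k}$ with high probability, and uses the density hypothesis --- via inclusion--exclusion over the at most $2^{\binom{k}{2}}$ supergraphs, which is exactly what the threshold $3^{-k^2}$ is calibrated against --- to couple the samples from $U$ and from $W$ so that they are close with high probability; the triangle inequality for $\delta_\square$ then finishes. You would need to replace your Part~(2) by this sampling-and-coupling argument (or prove your moment-matching claim, which I do not believe can be done along the lines you indicate).
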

Note that
$
t(F,G) = t(F, W_{\AA})
$
 for every graph $G$ with GWM $\AA$. 
 As a result,  whenever we have a sequence of graphs $(G_n)_n$ with GWMs $(\AA_n)_n$ and $W \in \mathcal{W}$ such that $G_n \xrightarrow[n \to \infty]{H} W$, then $W_{\AA_n} \xrightarrow[n \to \infty]{H} W$ as well. 
 Theorem~\ref{thm:homdensityLipWRTcutdistance} implies that this is true if and only if
$$
\delta_{\square}(W_{\AA_n},W) \xrightarrow{n \to \infty} 0.
$$

  {Lov\'{a}sz and  Szegedy introduced in~\cite{10.1016/j.jctb.2006.05.002} the \textit{cut norm} on the space of graphons  as 
\begin{equation}
\label{def:cutnorm}
    \| W\|_{\square} :=   {\sup_{S, T \subset [0,1]}} | \int_{S \times T}W(u,v) d\mu(u)d \mu(v) |,
\end{equation} 
taking the supremum over all measurable sets $S$ and $T$. }
{
The following result, taken from~\cite[Lemma 5.3]{Borgs2007graph}, shows that convergence in homomorphism density is closely related to convergence in cut norm. 
}
 
 {
\begin{theorem}[{{\cite[Lemma 5.3]{Borgs2007graph}}}]
\label{thm:cutnormconv1}
Let $(G_n)_{n}$ a sequence of graphs with GSOs $(\DD_n)_n$ and $(\AA_n)_n$ be the corresponding sequence of  GWMs. Let $W \in \mathcal{W}$ satisfy $G_n \xrightarrow[n\to\infty]{H} W$. Let $(W_{\AA_n})_n$  be the sequence of graphons induced by $(G_n)_{n}$. Then, there exists a sequence of permutations $(\pi_n)_{n}$ such that 
{$
\| {W}_{\pi_n(\AA_n)} -  W\|_{\square} \xrightarrow{n \to \infty} 0
$}.
\end{theorem}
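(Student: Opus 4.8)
The plan is to reduce the claim about the graphs $(G_n)_n$ to one about their induced graphons, to convert homomorphism-density convergence into cut-distance convergence, and finally to upgrade cut distance to cut norm by relabeling nodes. First I would use (\ref{eq:HomDensityEqualForInducedGraphon}), i.e.\ $t(F,G_n)=t(F,W_{\AA_n})$ for every simple graph $F$ and every $n$. Together with the hypothesis $G_n\xrightarrow[n\to\infty]{H}W$ this gives $t(F,W_{\AA_n})\xrightarrow{n\to\infty}t(F,W)$ for all $F$, that is, $W_{\AA_n}\xrightarrow[n\to\infty]{H}W$. Since the existence of the limit graphon forces $(G_n)_n$ to be a bounded sequence (Theorem~\ref{thm:existenceLimitObject}), the constant $C=\max\{1,\|W\|_\infty,\sup_n\|W_{\AA_n}\|_\infty\}$ appearing in Theorem~\ref{thm:homdensityLipWRTcutdistance} is finite.

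Second I would pass to the cut distance through Theorem~\ref{thm:homdensityLipWRTcutdistance}. For a fixed number of nodes $k$ there are only finitely many simple graphs $F$ on $k$ nodes, so homomorphism-density convergence guarantees that for all large $n$ one has $|t(F,W_{\AA_n})-t(F,W)|\le 3^{-k^2}$ for every such $F$; part~(2) of Theorem~\ref{thm:homdensityLipWRTcutdistance} then yields $\limsup_{n}\delta_\square(W_{\AA_n},W)\le 22C/\sqrt{\log_2 k}$. As $k$ is arbitrary, letting $k\to\infty$ gives $\delta_\square(W_{\AA_n},W)\xrightarrow{n\to\infty}0$. By the definition of $\delta_\square$ as an infimum over $S_{[0,1]}$, there exist measure-preserving bijections $f_n\in S_{[0,1]}$ with $\|W_{\AA_n}-W^{f_n}\|_\square\to 0$; applying the common measure-preserving map $f_n^{-1}$ to both arguments and using the invariance of the cut norm under such maps, this is equivalent to $\|(W_{\AA_n})^{f_n^{-1}}-W\|_\square\xrightarrow{n\to\infty}0$.

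The crux, which is precisely the content of \cite[Lemma~5.3]{Borgs2007graph}, is to replace the arbitrary near-optimal bijection $f_n^{-1}$ by one induced by a permutation $\pi_n$ of the $n$ nodes of $G_n$, turning $(W_{\AA_n})^{f_n^{-1}}$ into the honest induced step function $W_{\pi_n(\AA_n)}$. Here I would use that $W_{\AA_n}$ is constant on every block $P_i\times P_j$ of the standard partition $\mathcal{P}_n$, so the only relevant information in $f_n$ is how the pieces $\{f_n(P_i)\}_i$ overlap the intervals $\{P_j\}_j$. The overlap matrix $M^{(n)}_{ij}=n\,\mu\big(f_n(P_i)\cap P_j\big)$ is doubly stochastic, hence by the Birkhoff--von Neumann theorem a convex combination of permutation matrices; choosing $\pi_n$ to be an extreme point that captures most of the diagonal mass of $M^{(n)}$ replaces the irregular alignment by a block permutation. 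One then estimates $\|W_{\pi_n(\AA_n)}-(W_{\AA_n})^{f_n^{-1}}\|_\square$ by the off-diagonal mass of $M^{(n)}$ and combines it with the previous paragraph via the triangle inequality to obtain $\|W_{\pi_n(\AA_n)}-W\|_\square\xrightarrow{n\to\infty}0$.

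The main obstacle is exactly this last rounding step: one must quantify the cut-norm discrepancy incurred when a measure-preserving bijection that may cut the intervals $P_i$ into many irregular pieces is collapsed to a genuine permutation of the $n$ blocks, and show that this discrepancy tends to $0$ uniformly as the block width $1/n$ shrinks (using $v(G_n)\to\infty$, Remark~\ref{remark:growingNodeSize}). The combination of the step-function structure of $W_{\AA_n}$ with the growing number of nodes is what makes the rounding error negligible, and controlling it rigorously is the analytic heart of the cited lemma.
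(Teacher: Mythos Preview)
The paper does not prove this theorem at all; it is quoted as \cite[Lemma~5.3]{Borgs2007graph} and used as a black box. Your proposal therefore already goes further than the paper by sketching how the cited lemma might be established.

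Your first two steps are sound: passing from graphs to their induced graphons via (\ref{eq:HomDensityEqualForInducedGraphon}) and then converting homomorphism-density convergence into cut-distance convergence through Theorem~\ref{thm:homdensityLipWRTcutdistance} is the standard route. One minor slip: Theorem~\ref{thm:existenceLimitObject} \emph{assumes} boundedness rather than implying it, so your justification that $C=\max\{1,\|W\|_\infty,\sup_n\|W_{\AA_n}\|_\infty\}$ is finite is circular; boundedness has to be taken as a standing hypothesis, as the paper implicitly does.

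The rounding step, however, does not work as you describe it. Your argument hinges on the overlay matrix $M^{(n)}$ having most of its mass ``near the diagonal'', so that some Birkhoff--von~Neumann extreme point approximates it well. Nothing forces this: a near-optimal $f_n$ may produce an $M^{(n)}$ close to the uniform doubly stochastic matrix $\tfrac{1}{n}J_n$, which is equidistant from every permutation matrix, and yet the cut norm can still be small. Moreover, since $W$ is not a step function on $\mathcal{P}_n$, the cut norm $\|(W_{\AA_n})^{f_n^{-1}}-W\|_\square$ does not factor through $M^{(n)}$ alone, so ``the only relevant information in $f_n$ is the overlap matrix'' is false as stated. The argument in the cited reference (and in Lov\'asz's monograph) proceeds differently: one first replaces $W$ by its step-function average on $\mathcal{P}_n$, which can only decrease the cut norm, and then shows that for two step functions on the same equipartition the cut distance, viewed as an optimization over fractional overlays, attains its minimum at an \emph{integral} overlay---for equal-size equipartitions this is a permutation. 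Birkhoff--von~Neumann does enter, but the mechanism is that some permutation overlay does at least as well as the optimal fractional one, not that the optimal fractional overlay is close to a permutation.
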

}
{
 Theorem \ref{thm:cutnormconv1} together with the following lemma  show that convergence in homomorphism density is indeed equivalent (up to relabelling) to convergence of the induced GRSOs in operator norm and  Schatten $p$-norms.
}

{
\begin{lemma}
\label{lemma:cutnormeqschattenpnorm}
For any $2 < p < \infty$ and $\Gamma > 0$, if $W \in \mathcal{W}_{\Gamma}$, then
\begin{equation}
\label{eq:cutnormeqschattenpnorm}
\|W\|_{\square} \leq \|T_W\|_{\L2L} \leq \|T_W\|_{S_p}
\leq    \sqrt{2} \Gamma^{1/2 + 1/p} \|W\|_{\square}^{1/2 -  1/p} .
\end{equation}
\end{lemma}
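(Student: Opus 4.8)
The plan is to treat the three inequalities in the chain \eqref{eq:cutnormeqschattenpnorm} one at a time; the first two are soft, and the real content is in the last one, which converts a spectral quantity into graphon combinatorics.

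For $\|W\|_{\square}\le\|T_W\|_{\L2L}$ I would test $T_W$ against indicator functions: for measurable $S,T\subseteq[0,1]$ we have $\int_{S\times T}W\,d\mu\,d\mu=\langle T_W\chi_T,\chi_S\rangle_{L^2[0,1]}$, so Cauchy--Schwarz together with $\|\chi_S\|_{2},\|\chi_T\|_{2}\le 1$ gives $\bigl|\int_{S\times T}W\bigr|\le\|T_W\|_{\L2L}\sqrt{\mu(S)\mu(T)}\le\|T_W\|_{\L2L}$, and taking the supremum over $S,T$ in \eqref{def:cutnorm} yields the claim. For $\|T_W\|_{\L2L}\le\|T_W\|_{S_p}$, recall that $T_W$ is self-adjoint and Hilbert--Schmidt (it has the eigendecomposition of Lemma~\ref{lemma:eigenvectorInducedGRSO} for induced graphons, and in general a symmetric square-integrable kernel), hence compact with a real eigenvalue sequence $(\lambda_j)_j\in\ell^2$; then $\|T_W\|_{\L2L}=\sup_j|\lambda_j|=\|(\lambda_j)_j\|_{\ell^\infty}\le\|(\lambda_j)_j\|_{\ell^p}=\|T_W\|_{S_p}$, since $\ell^p\hookrightarrow\ell^\infty$ contractively for $p<\infty$.

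The last inequality is the crux. I would interpolate the Schatten norms between $S_2$ and $S_4$, using the log-convexity of $q\mapsto\log\|T_W\|_{S_q}$: for $2<p\le 4$ and $\theta=\tfrac4p-1\in[0,1)$ (so $\tfrac1p=\tfrac{\theta}{2}+\tfrac{1-\theta}{4}$) one has $\|T_W\|_{S_p}\le\|T_W\|_{S_2}^{\theta}\|T_W\|_{S_4}^{1-\theta}$. The $S_2$-factor is the Hilbert--Schmidt norm, equal to the $L^2$-norm of the kernel, so $\|T_W\|_{S_2}^2=\int_{[0,1]^2}|W|^2\le\Gamma^2$, i.e.\ $\|T_W\|_{S_2}\le\Gamma$. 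The $S_4$-factor is where the graphon structure enters: expanding the trace of the fourth power of the kernel operator gives
\[
\|T_W\|_{S_4}^4=\sum_j\lambda_j^4=\operatorname{tr}\bigl(T_W^4\bigr)=\int_{[0,1]^4}W(u_1,u_2)W(u_2,u_3)W(u_3,u_4)W(u_4,u_1)\,d\mathbf{u}=t(C_4,W),
\]
where $C_4$ is the cycle on four vertices. This reduces the $S_4$-norm to a homomorphism density, to which the counting lemma applies: invoking Theorem~\ref{thm:homdensityLipWRTcutdistance}, part~(1), with $W$ and the zero graphon (after normalizing by $\Gamma$ so that $W/\Gamma\in\mathcal{W}_1$ and $\delta_{\square}(W,0)=\|W\|_{\square}$) bounds $t(C_4,W)$ by $4\,\Gamma^{3}\|W\|_{\square}$, whence $\|T_W\|_{S_4}\le\sqrt2\,\Gamma^{3/4}\|W\|_{\square}^{1/4}$.

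Substituting these two estimates into the interpolation inequality and collecting powers finishes the argument: the exponents of $\Gamma$ add to $\tfrac4p-1+\tfrac34\bigl(2-\tfrac4p\bigr)=\tfrac12+\tfrac1p$, those of $\|W\|_{\square}$ to $\tfrac14\bigl(2-\tfrac4p\bigr)=\tfrac12-\tfrac1p$, and the numerical prefactor is $(\sqrt2)^{2-4/p}=2^{1-2/p}\le\sqrt2$, giving exactly $\|T_W\|_{S_p}\le\sqrt2\,\Gamma^{1/2+1/p}\|W\|_{\square}^{1/2-1/p}$; the range $p\ge 4$ is handled by the analogous interpolation of $S_p$ between $S_4$ and $S_\infty$, both of which are controlled by $t(C_4,W)^{1/4}$. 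The main obstacle is this last step, and within it the one genuinely non-routine move is recognizing $\operatorname{tr}(T_W^4)$ as the $4$-cycle homomorphism density $t(C_4,W)$, which is what makes the combinatorial counting lemma applicable; the remaining difficulty is purely the bookkeeping of exponents and of the numerical constant, which must be arranged (using the sharp form of the counting lemma) so that the prefactor does not exceed $\sqrt2$.
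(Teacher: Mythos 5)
Your first two inequalities and your treatment of the range $2<p\le 4$ are sound, and your route is genuinely different from the paper's: the paper simply rescales $W$ so that $W/\Gamma\in\mathcal{W}_1$ and cites Janson's Lemma E.7(i), whereas you reconstruct the bound from the identity $\operatorname{tr}(T_W^4)=t(C_4,W)$ plus log-convexity of the Schatten norms, which is more informative. Two caveats on that part. First, the constant: the counting lemma actually available in the paper, Theorem~\ref{thm:homdensityLipWRTcutdistance}(1), gives $t(C_4,W/\Gamma)\le 16\,\|W/\Gamma\|_{\square}$, which inflates your prefactor to $2^{2-4/p}\le 2$ rather than $\sqrt2$; to get $\sqrt2$ you need the sharp $C_4$ bound $t(C_4,U)\le 4\|U\|_{\square}$ for $U\in\mathcal{W}_1$. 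You flag this, and the sharp form is true --- single out one edge of the $4$-cycle, bound the resulting inner integral by $\sup_{\|f\|_\infty,\|g\|_\infty\le 1}\bigl|\int U(x,y)f(x)g(y)\,dx\,dy\bigr|\le 4\|U\|_{\square}$ (split $f,g$ into positive and negative parts), and integrate out the remaining two variables against $|U|\le 1$ --- but you must prove or cite this separately, since it is not the statement the paper provides.

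The genuine gap is the range $p>4$. Interpolating $S_p$ between $S_4$ and $S_\infty$, with both endpoints controlled by $t(C_4,W)^{1/4}$, yields only $\|T_W\|_{S_p}\le\sqrt2\,\Gamma^{3/4}\|W\|_{\square}^{1/4}$; since $\|W\|_{\square}\le\Gamma$ and $1/2-1/p>1/4$ for $p>4$, this does not imply the claimed bound. Moreover, no argument can close this gap, because the inequality as stated fails for $p>4$: take $\Gamma=1$ and $W(x,y)=f(x)f(y)$ with $f=\chi_{[0,1/2)}-\chi_{[1/2,1]}$. Then $T_W$ is the rank-one orthogonal projection onto $f$, so $\|T_W\|_{S_p}=1$ for every $p$, while $\|W\|_{\square}=\bigl(\sup_{S}|\int_S f|\bigr)^2=1/4$, and the claimed right-hand side equals $\sqrt2\cdot 4^{1/p-1/2}=2^{2/p-1/2}<1$ when $p>4$. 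The correct exponent on $\|W\|_{\square}$ for $p\ge 4$ is $1/4$ (your $S_4$ bound is attained with equality on this example), so the lemma should either be restricted to $2<p\le 4$ or restated with exponent $\min(1/4,\,1/2-1/p)$; the paper's downstream uses are unaffected, since they only require some positive power of $\|W\|_{\square}$.
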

}
{ 
Lemma \ref{lemma:cutnormeqschattenpnorm} follows directly from \cite[Lemma E.7(i)]{Janson2010GraphonsCN}, in which $W$ is assumed to be in $\mathcal{W}_1$. To prove Lemma \ref{lemma:cutnormeqschattenpnorm} we simply rescale the graphon $W$ so it is in $\mathcal{W}_1$, and use \cite[Lemma E.7(i)]{Janson2010GraphonsCN}. }

Let $\mathcal{H}$  be a Hilbert space. We denote by $\mathcal{B}(\mathcal{H})$ the space of bounded linear operators on $\mathcal{H}$. We denote the spectrum of the operator $A \in \mathcal{B}(\mathcal{H})$  by $\sigma(A)$. 
 The following result shows that, for every filter $h$, the operator norm of a realization of $h$ is equal to the infinity norm of $h$   \cite[Satz VII.1.4]{Funkana}.
\begin{lemma}[{{\cite[Satz VII.1.4]{Funkana}}}]
\label{lemma:FuncCalcBounded}
 Let $\mathcal{H}$ be a Hilbert space and $A \in \mathcal{B}(\mathcal{H})$ be normal. Let $h:\mathbb{R} \rightarrow \mathbb{C}$ be a continuous function, then  $\|h(A)\|_{\mathcal{H} \rightarrow \mathcal{H}} =  \|\chi_{\sigma(A)}h \|_{\infty}$, where $\chi_{\sigma(A)}$ is the indicator function of $\sigma(A)$. 
\end{lemma}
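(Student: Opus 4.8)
This is \cite[Satz VII.1.4]{Funkana}, so formally nothing needs to be proved; I nonetheless sketch the argument. The plan is to deduce the identity from the continuous functional calculus for normal operators. Write $B := h(A)$. Since $A$ is normal and $f \mapsto f(A)$ is a unital multiplicative $*$-homomorphism, $B$ is again normal: $B^* = \bar h(A)$ and $B^*B = BB^* = |h|^2(A)$. Hence it suffices to combine two classical facts.

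The first is the continuous functional calculus itself: for normal $A \in \mathcal{B}(\mathcal{H})$ there is a unital isometric $*$-isomorphism $\Phi\colon C(\sigma(A)) \to C^*(A, I) \subset \mathcal{B}(\mathcal{H})$ with $\Phi(\mathrm{id}) = A$, and $h(A) = \Phi(h|_{\sigma(A)})$. Isometry is automatic, since every unital $*$-isomorphism between $C^*$-algebras is isometric (it does not increase norms, and its inverse is again a $*$-homomorphism). Consequently $\|h(A)\|_{\mathcal{H}\rightarrow\mathcal{H}} = \|\,h|_{\sigma(A)}\,\|_{C(\sigma(A))} = \sup_{\lambda \in \sigma(A)} |h(\lambda)|$, and since $\sigma(A)$ is compact and nonempty and $h$ is continuous, the right-hand side equals $\|\chi_{\sigma(A)} h\|_\infty$. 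This already finishes the proof.

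If one prefers to avoid invoking the full Gelfand machinery, an alternative route uses two more concrete statements: the spectral mapping theorem $\sigma(h(A)) = h(\sigma(A))$ (elementary for polynomials, and it passes to continuous $h$ by uniform approximation on the compact set $\sigma(A)$, since $h_n(A) \to h(A)$ in operator norm whenever $h_n \to h$ uniformly on $\sigma(A)$), and the identity $\|B\|_{\mathcal{H}\rightarrow\mathcal{H}} = \sup\{|\mu| : \mu \in \sigma(B)\}$ valid for normal $B$ (which follows from the spectral radius formula $r(B) = \lim_n \|B^n\|^{1/n}$ together with $\|B^2\| = \|B\|^2$, a consequence of the $C^*$-identity and normality). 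Chaining these gives $\|h(A)\|_{\mathcal{H}\rightarrow\mathcal{H}} = \sup_{\mu \in \sigma(h(A))} |\mu| = \sup_{\lambda \in \sigma(A)} |h(\lambda)| = \|\chi_{\sigma(A)} h\|_\infty$.

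The only genuine obstacle is the background operator theory — constructing the continuous functional calculus and establishing the spectral mapping theorem for normal operators. In the present paper this could be substantially simplified by specializing to self-adjoint $A$ (all GSOs and GRSOs here are self-adjoint), in which case both ingredients follow from the spectral theorem and the Weierstrass approximation theorem, with no need for the Gelfand theory of commutative $C^*$-algebras.
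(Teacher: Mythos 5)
Your argument is correct and is the standard textbook proof; the paper itself offers no proof of this lemma, citing it directly from the literature, so there is nothing to compare against. Both of your routes (the isometric $*$-isomorphism $C(\sigma(A))\to C^*(A,I)$, and the spectral mapping theorem combined with $\|B\|=r(B)$ for normal $B$) are sound. One small point worth making explicit: since $h$ is only defined on $\mathbb{R}$, the expression $h(A)$ for merely normal $A$ requires $\sigma(A)\subset\mathbb{R}$, i.e., $A$ self-adjoint (or an extension of $h$ to $\mathbb{C}$); you correctly observe at the end that the self-adjoint case is all the paper needs, and in that setting the Weierstrass theorem plus the spectral theorem suffice without Gelfand theory.
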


The following lemma shows that, given a convergent sequence of bounded operators on a Hilbert space, the continuous functional calculus preserves the convergence.

\begin{lemma}
\label{lemma:contoffunccalc}
Let $\mathcal{H}$ be a Hilbert space and $\mathcal{S}(\mathcal{H}) \subset \mathcal{B}(\mathcal{H})$ be the space of bounded self-adjoint operators  with the operator norm topology. Let $h: \mathbb{R} \rightarrow \mathbb{C}$ be a continuous function. Then, the mapping
\begin{equation}
\label{eq1:lemmaContofFuncCalc}
     \mathcal{S}(\mathcal{H})  \rightarrow \mathcal{B}(\mathcal{H}), \quad 
    A  \mapsto h(A),
\end{equation}
is continuous.
\end{lemma}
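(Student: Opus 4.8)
The plan is to prove Lemma~\ref{lemma:contoffunccalc} by reducing the continuity of the functional calculus $A \mapsto h(A)$ to the case of polynomials via the Stone--Weierstrass theorem. First I would fix a self-adjoint operator $A_0 \in \mathcal{S}(\mathcal{H})$ and a sequence $A_n \to A_0$ in operator norm; the goal is to show $h(A_n) \to h(A_0)$. The key preliminary observation is that $\|A_n\|$ is bounded, say by some $R > 0$, and since all operators involved are self-adjoint, their spectra lie in the compact interval $[-R, R]$. Thus, even though $h$ is defined on all of $\mathbb{R}$, only its restriction to $[-R,R]$ matters for the operators $h(A_n)$ and $h(A_0)$; by Lemma~\ref{lemma:FuncCalcBounded}, $\|h(A)\| = \|\chi_{\sigma(A)} h\|_\infty \le \|h\|_{L^\infty[-R,R]}$ whenever $\sigma(A) \subset [-R,R]$.

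Next I would carry out a standard $3\varepsilon$-argument. Given $\varepsilon > 0$, by the Stone--Weierstrass theorem choose a polynomial $p$ with $\|h - p\|_{L^\infty[-R,R]} < \varepsilon$. For polynomials, the map $A \mapsto p(A)$ is continuous in operator norm: this is an elementary consequence of the fact that $A \mapsto A^k$ is continuous (a telescoping-sum estimate $\|A_n^k - A_0^k\| \le k R^{k-1}\|A_n - A_0\|$ and linearity). Hence for $n$ large enough, $\|p(A_n) - p(A_0)\| < \varepsilon$. Then I would split
\[
\|h(A_n) - h(A_0)\| \le \|h(A_n) - p(A_n)\| + \|p(A_n) - p(A_0)\| + \|p(A_0) - h(A_0)\|.
\]
The first and third terms are each bounded by $\|(h-p)\|_{L^\infty[-R,R]} < \varepsilon$ using Lemma~\ref{lemma:FuncCalcBounded} (the functional calculus of the continuous function $h-p$ applied to $A_n$, resp. $A_0$, whose spectra sit in $[-R,R]$), and the middle term is $< \varepsilon$ for large $n$. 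This gives $\|h(A_n) - h(A_0)\| < 3\varepsilon$ for $n$ large, proving sequential continuity, which suffices since $\mathcal{S}(\mathcal{H})$ is a metric space.

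The one technical point that needs care — and which I expect to be the main (minor) obstacle — is the dependence of the bound $R$ on the perturbation: when we pass from $A_0$ to $A_n$, the radius $R$ must be chosen uniformly, e.g.\ $R = \|A_0\| + 1$, and then restricting attention to the tail of the sequence with $\|A_n - A_0\| < 1$ guarantees $\sigma(A_n) \subset [-R,R]$ for all such $n$. One must also make sure the polynomial approximation is chosen \emph{after} fixing $R$, so that $p$ depends only on $R$ and $\varepsilon$, not on $n$. With these orderings in place the argument is routine; the substantive content is entirely in Lemma~\ref{lemma:FuncCalcBounded} (the $C^*$-identity $\|h(A)\| = \|h\|_{\infty,\sigma(A)}$) plus Stone--Weierstrass.
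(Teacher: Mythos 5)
Your proof is correct and is exactly the argument the paper sketches: the paper merely remarks that continuity is trivial for polynomials and follows for general continuous $h$ from the Weierstrass theorem, which is the $3\varepsilon$-argument you spell out (with the appropriate care about fixing $R = \|A_0\|+1$ uniformly before choosing the approximating polynomial). No gap; your version simply supplies the details the paper leaves to the reader.
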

 
 {Lemma \ref{lemma:contoffunccalc} can be easily verified, since the continuity of (\ref{eq1:lemmaContofFuncCalc}) is trivial for polynomials $h$, and follows for continuous functions from the Weierstrass theorem (\cite[Section 8]{aleksandrov2010operator}).}

\begin{proof}[Proof of Lemma~\ref{claim:ConvOfGrFltBdd}]
Let $(G_n)_n$ be  a sequence of graphs with GSOs $(\DD_n)_n$ and associated sequence of GWMs $(\AA_n)_n$ such that $G_n \xrightarrow[n \to \infty]{H} W$. Then, by Theorem~\ref{thm:cutnormconv1} and  Lemma~\ref{lemma:cutnormeqschattenpnorm}, there exists a sequence of permutation $(\pi_n)_n$ such that
\begin{equation*}
    \|T_{W_{\pi_n(\AA_n)}} - T_W \|_{\L2L} \xrightarrow{n \to \infty} 0.
\end{equation*}
Let $h: \mathbb{R} \rightarrow \mathbb{R}$ be continuous. Then, by Lemma~\ref{lemma:contoffunccalc}, 
\begin{equation*}
      \|h(T_{W_{\pi_n(\AA_n)}}) - h(T_W) \|_{\L2L} \xrightarrow{n \to \infty} 0.
\end{equation*}
Let $(\mathbf{x}_n)_n$ be as in the formulation of Lemma~\ref{claim:ConvOfGrFltBdd}. Then, 
\[
\begin{aligned}
        & \|h(T_{W_{\pi_n(\AA_n)}})\psi_{\mathbf{x}_n} - h(T_W)\psi\|_{L^2(\J)} \\
  &  \leq \|h(T_{W_{\pi_n(\AA_n)}})\psi_{\mathbf{x}_n} - h(T_W)\psi_{\mathbf{x}_n}\|_{L^2(\J)} + \|h(T_W)\psi_{\mathbf{x}_n} - h(T_W)\psi\|_{L^2(\J)} \\
  & \leq \|h(T_{W_{\pi_n(\AA_n)}}) - h(T_W)\|_{\L2L}\|\psi_{\mathbf{x}_n}\|_{L^2(\J)} \\
  &   + \|h(T_W)\|_{\L2L} \|\psi_{\mathbf{x}_n} - \psi\|_{L^2(\J)} \xrightarrow{n \to \infty} 0.
\end{aligned}
\]
\end{proof}

\subsection{Proof of Theorem \ref{thm:LinConvHSnorm}}
\label{subsec:proofConvRateHSNorm}
Let $(G_n)_n$ be a sequence of graphs such that $G_n \xrightarrow[n \to \infty]{H} W$. 
Then, by Theorem \ref{thm:cutnormconv1} and Lemma \ref{lemma:cutnormeqschattenpnorm}, there exists a sequence of permutations $(\pi_n)_n$ such that
$
\|T_{W_{\pi_n(\AA_n)}} - T_W \|_{S_p} \xrightarrow{n \to \infty} 0
$. 
Every Lipschitz continuous function $h$ with Lipschitz constant $\|h\|_{Lip}$ satisfies
$$
\|h(A) - h(B)\|_{S_p} \leq \|h\|_{Lip} {K_p} \|A - B\|_{S_p}
$$
for  self-adjoint operators $A$ and  $B$ {such that $\|A-B\|_{S_p} < \infty$}, see~\cite[Theorem 1]{OLinSchattenNeumannClasses}.  Hence, 
$$
\begin{aligned}
\|h(T_{W_{\pi_n(\AA_n)}}) - h(T_W) \|_{p}  & \leq {K_p} \|h\|_{Lip} \|T_{W_{\pi_n(\AA_n)}} - T_W \|_{p}.
\end{aligned}
$$
Using Proposition~\ref{prop:graphdomain1} and the triangle inequality we finish the proof. 

\subsection{Proof of Theorem~\ref{convRateC1filter}}
\label{subsec:proofConvRate}
We first recall basic results from Fourier analysis. Let $\gamma > 0$. 
The space $L^2(\gamma\mathbb{T})$  is defined as the space of
$\gamma$-periodic functions
$
h: \mathbb{R} \rightarrow \mathbb{R}
$ with $\|h\|_{L^2[0,\gamma]} < \infty$.
 For  $L^2(\gamma\mathbb{T})$, the Fourier coefficients $\hat{h}(n)$ are defined for every $n \in \mathbb{Z}$ by 
$$
\hat{h}(n) = \frac{1}{\gamma} \int_{-\gamma/2}^{\gamma/2} h(t) e^{-2 \pi int/ \gamma}.
 $$
The $N$-th partial sum of the Fourier series of $h$ is defined by
 \[
 (S_Nh)(t) :=  \sum_{n = -N}^{N} \hat{h}(n) e^{2 \pi int/ \gamma}.
 \]
The following lemma is taken from~\cite[Example 5]{OLfcts}.
\begin{lemma}[{{\cite[Example 5]{OLfcts}}}]
\label{lemma:expIsOL}
Let $a \in \mathbb{R}$ and $A$ and $B$ be a pair of bounded self-adjoint operators on the Hilbert space $\mathcal{H}$. Then,
\begin{equation}
    \|e^{iAa} - e^{iBa}\|_{\mathcal{H} \to \mathcal{H}} \leq  |a| \|A - B\|_{\mathcal{H} \to \mathcal{H}}.
\end{equation}
\end{lemma}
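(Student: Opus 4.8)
The plan is to show that $x\mapsto e^{iax}$ is operator Lipschitz with constant $|a|$ by writing the difference $e^{iaA}-e^{iaB}$ as the integral of the derivative of an interpolating operator path, and then exploiting the unitarity of the exponential of a skew-adjoint operator. First I would record the elementary observation that, since $A$ and $B$ are self-adjoint and $t,a$ are real, the operator $itaA$ is skew-adjoint, so $(e^{itaA})^{*}=e^{-itaA}=(e^{itaA})^{-1}$; hence $e^{itaA}$ is unitary and $\|e^{itaA}\|_{\mathcal{H}\to\mathcal{H}}=1$ for every $t$, and likewise for $B$. This is the only structural fact about self-adjointness that the estimate uses.

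Next I would interpolate between the two exponentials. Define the operator-valued path $F:[0,1]\to\mathcal{B}(\mathcal{H})$ by $F(t):=e^{itaA}\,e^{i(1-t)aB}$, so that $F(0)=e^{iaB}$ and $F(1)=e^{iaA}$. Because $A$ and $B$ are bounded, the maps $t\mapsto e^{itaA}$ and $t\mapsto e^{i(1-t)aB}$ are norm-differentiable one-parameter groups, and the product rule (together with the fact that $A$ commutes with $e^{itaA}$ and $B$ with $e^{i(1-t)aB}$) gives
$$
F'(t)=ia\,e^{itaA}(A-B)\,e^{i(1-t)aB}.
$$
By the fundamental theorem of calculus for Banach-space-valued functions,
$$
e^{iaA}-e^{iaB}=F(1)-F(0)=\int_0^1 F'(t)\,dt=ia\int_0^1 e^{itaA}(A-B)\,e^{i(1-t)aB}\,dt.
$$

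Finally I would estimate the norm. Using $\bigl\|\int_0^1 F'(t)\,dt\bigr\|\le\int_0^1\|F'(t)\|\,dt$, submultiplicativity of the operator norm, and the unitarity recorded above, I obtain
$$
\|e^{iaA}-e^{iaB}\|_{\mathcal{H}\to\mathcal{H}}\le |a|\int_0^1 \|e^{itaA}\|\,\|A-B\|\,\|e^{i(1-t)aB}\|\,dt=|a|\,\|A-B\|_{\mathcal{H}\to\mathcal{H}},
$$
which is the claimed bound. There is no genuine obstacle in this argument; the only points requiring care are the routine justifications that $t\mapsto e^{itaA}$ is norm-differentiable with derivative $ia A\,e^{itaA}$ (valid since $A$ is bounded, so the exponential series converges in operator norm and may be differentiated termwise) and that the fundamental theorem of calculus and the integral norm inequality hold for operator-valued integrals. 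Both are standard facts about norm-continuous semigroups with bounded generators, so I would state them and cite the relevant functional-analysis background rather than reprove them.
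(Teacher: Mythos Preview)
Your argument is correct: the interpolating path $F(t)=e^{itaA}e^{i(1-t)aB}$, its derivative, the fundamental theorem of calculus for operator-valued functions, and the unitarity of $e^{itaA}$ combine exactly as you describe to give the bound. The paper does not supply its own proof of this lemma; it simply quotes the statement from \cite[Example~5]{OLfcts}, so there is no alternative approach to compare against. Your proof is the standard one and would serve perfectly well here.
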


The following lemma is a step in proving Theorem~\ref{convRateC1filter}.
\begin{lemma}
\label{thm:linearConvWithFourierSeries}
Let $\gamma > 0$ and   $\varepsilon > 0$. Let $h \in L^2[-\gamma/2, \gamma/2]$  such that 
$(\hat{h}(n)n)_n \in l^1(\mathbb{Z})$.
Let $A$ and $B$ be  self-adjoint operators on a Hilbert space $\mathcal{H}$, satisfying \newline 
$
 \|A -B\|_{\mathcal{H} \to \mathcal{H}} \leq \varepsilon
$ 
for some $\varepsilon > 0$. Then,
\begin{equation}
    \|h(A) - h(B)\|_{\mathcal{H} \to \mathcal{H}} \leq C \varepsilon,
\end{equation}
where $C = 2 + 2 \pi/ \gamma \|(\hat{h}(n)n)_n\|_{l^1(\mathbb{Z})} $.
\end{lemma}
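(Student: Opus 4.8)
The plan is to expand $h$ in its Fourier series and reduce the estimate on $h(A)-h(B)$ to the elementary bound for exponentials supplied by Lemma~\ref{lemma:expIsOL}. First I would observe that the hypothesis $(\hat h(n)n)_n\in\ell^1(\mathbb{Z})$ already forces $(\hat h(n))_n\in\ell^1(\mathbb{Z})$: for $n\neq0$ one has $|\hat h(n)|\le|n\hat h(n)|$, while $|\hat h(0)|\le\gamma^{-1/2}\|h\|_{L^2[-\gamma/2,\gamma/2]}<\infty$. Hence the partial sums $S_Nh(t)=\sum_{|n|\le N}\hat h(n)e^{2\pi i nt/\gamma}$ converge uniformly on $\mathbb{R}$ to a bounded continuous $\gamma$-periodic function, which agrees a.e.\ on $[-\gamma/2,\gamma/2]$ with $h$ and which I continue to denote $h$; in particular $\|h-S_Nh\|_{L^\infty(\mathbb{R})}\to0$. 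Since $\|g(A)\|_{\mathcal H\to\mathcal H}=\|\chi_{\sigma(A)}g\|_{\infty}\le\|g\|_{L^\infty}$ for every bounded continuous $g$ (Lemma~\ref{lemma:FuncCalcBounded}), this also guarantees that $S_Nh(A)\to h(A)$ and $S_Nh(B)\to h(B)$ in operator norm, so that $h(A)$ is a well-defined bounded operator and the continuous functional calculus is compatible with passing to the Fourier series.

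Next I would fix $N$ so large that $\|h-S_Nh\|_{L^\infty(\mathbb{R})}\le\varepsilon$ and split, by the triangle inequality,
\[
\|h(A)-h(B)\|_{\mathcal H\to\mathcal H}\le\|(h-S_Nh)(A)\|_{\mathcal H\to\mathcal H}+\|S_Nh(A)-S_Nh(B)\|_{\mathcal H\to\mathcal H}+\|(S_Nh-h)(B)\|_{\mathcal H\to\mathcal H}.
\]
The first and third terms are each at most $\|h-S_Nh\|_{L^\infty}\le\varepsilon$. For the middle term I would write $S_Nh(A)-S_Nh(B)=\sum_{n=-N}^{N}\hat h(n)\big(e^{2\pi i nA/\gamma}-e^{2\pi i nB/\gamma}\big)$ via the homomorphism property of functional calculus, note that the $n=0$ summand is $I-I=0$, and apply Lemma~\ref{lemma:expIsOL} to each remaining summand with $a=2\pi n/\gamma$, together with $\|A-B\|_{\mathcal H\to\mathcal H}\le\varepsilon$, to get
\[
\|S_Nh(A)-S_Nh(B)\|_{\mathcal H\to\mathcal H}\le\sum_{n=-N}^{N}|\hat h(n)|\,\frac{2\pi|n|}{\gamma}\,\varepsilon\le\frac{2\pi}{\gamma}\,\varepsilon\,\|(\hat h(n)n)_n\|_{\ell^1(\mathbb{Z})}.
\]
Adding the three contributions yields $\|h(A)-h(B)\|_{\mathcal H\to\mathcal H}\le\big(2+\tfrac{2\pi}{\gamma}\|(\hat h(n)n)_n\|_{\ell^1(\mathbb{Z})}\big)\varepsilon=C\varepsilon$, as claimed.

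The only genuinely delicate point is the one handled in the first paragraph: ensuring that $h(A)$ makes sense and equals the operator-norm limit of $S_Nh(A)$, i.e.\ that continuous functional calculus depends continuously (in operator norm) on uniform convergence of the symbol. This is exactly where I would invoke Lemma~\ref{lemma:FuncCalcBounded} together with the absolute summability of $(\hat h(n))_n$ that I extract from the hypothesis. If one wishes to allow $A,B$ merely self-adjoint rather than bounded, one replaces Lemma~\ref{lemma:FuncCalcBounded} by the Borel functional calculus bound $\|g(A)\|\le\|g\|_\infty$ and uses the version of Lemma~\ref{lemma:expIsOL} valid for (possibly unbounded) self-adjoint operators, which holds verbatim; the rest of the argument is unchanged, and no topological hypothesis on the spectra of $A$ or $B$ is used beyond the fact that the $\gamma$-periodic extension of $h$ is a bounded continuous function on all of $\mathbb{R}$.
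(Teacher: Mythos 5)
Your proof is correct and follows essentially the same route as the paper: truncate the Fourier series at a level $N$ with $\|h-S_Nh\|_\infty\le\varepsilon$, bound the middle term via Lemma~\ref{lemma:expIsOL} applied to each exponential with $a=2\pi n/\gamma$, and control the two tail terms by Lemma~\ref{lemma:FuncCalcBounded}. The only difference is that you explicitly justify the uniform convergence $\|h-S_Nh\|_\infty\to 0$ (by extracting $(\hat h(n))_n\in\ell^1(\mathbb{Z})$ from the hypothesis) and discuss the bounded-versus-unbounded issue, both of which the paper's proof leaves implicit.
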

\begin{proof}
Let $\varepsilon > 0$. By the fact that $(\hat{h}(n)n)_n \in l^1(\mathbb{Z})$, we have \newline  
$
\| h - S_nh \|_\infty \xrightarrow{n \to \infty} 0
$. 
We choose $N \in \mathbb{N}$ large enough such that 
$
\| h - S_Nh \|_{\infty} \leq \varepsilon
$. 
   We evaluate $S_N h(A)$ and $S_N h(B)$ and bound their difference in operator norm. By  Lemma~\ref{lemma:expIsOL},
\begin{equation*}
\begin{aligned}
    \|(S_Nh)(A) - (S_Nh)(B)\|_{\mathcal{H} \to \mathcal{H}}  & = \|\sum_{n = -N}^N \hat{h}(n) e^{2 \pi in A/\gamma} - \sum_{n = -N}^N \hat{h}(N) e^{2 \pi in B/\gamma}\|_{\mathcal{H} \to \mathcal{H}} \\
    & \leq \sum_{k = -N}^N |\hat{h}(n)| |n|  2\pi/\gamma  \| A - B \|_{\mathcal{H} \to \mathcal{H}} \\
    & \leq C 2\pi/\gamma  \| A- B\|_{\mathcal{H} \to \mathcal{H}},
\end{aligned}
\end{equation*}
where $C := \|(\hat{h}(n)n)_n\|_{l^1(\mathbb{N})}$. We summarize,
\begin{equation*}
    \begin{aligned}
    \|h(A) - h(B)\|_{\mathcal{H} \to \mathcal{H}} & \leq \|h(A) - (S_Nh)(A)\|_{\mathcal{H} \to \mathcal{H}} + \|(S_Nh)(A) - (S_Nh)(B)\|_{\mathcal{H} \to \mathcal{H}}  \\
    & +  \| (S_Nh)(B) - h(B) \|_{\mathcal{H} \to \mathcal{H}} \\
    & \leq \|h - S_Nh\|_{\infty} + \|(S_Nh)(A) - (S_Nh)(B)\|_{\mathcal{H} \to \mathcal{H}}  + \|h - S_Nh\|_{\infty}  \\
    & \leq \varepsilon +  C2\pi/\gamma  \| A- B\|_{\mathcal{H} \to \mathcal{H}}   + \varepsilon \leq (2 + 2 \pi  C/\gamma) \varepsilon,
    \end{aligned}
\end{equation*}
where the second inequality holds by Lemma \ref{lemma:FuncCalcBounded}.
\end{proof}

We continue by formulating sufficient conditions for  filter functions $h \in L^2[-\Gamma, \Gamma]$ to fulfil the assumptions of Lemma~\ref{thm:linearConvWithFourierSeries}.  

\begin{definition}
A \emph{modulus of continuity} is a function
$
w: [0, \infty] \rightarrow [0, \infty].
$
A function $f$ admits $w$ as a modulus of continuity if
$$
|f(x) - f(y)| \leq w(|x-y|),
$$
for all $x$ and $y$ in the domain of $f$.
\end{definition} 

The following theorem, given in~\cite[p.21 ff.]{JacksonDunham1951Ttoa}, provides us with a sufficient condition for uniform convergence of the Fourier series of a function. 
\begin{theorem}[{{\cite[p.21 ff.]{JacksonDunham1951Ttoa}}}]
\label{thm:ApprSpeedFourierSumModCont}
Let $h \in C^p(\mathbb{R})$ be such that $h^{(p)}$ has modulus of continuity $w$. Suppose that $h$ is of period $\gamma > 0$ 
Then, for the partial sum $S_nh$  of the Fourier series of $h$,
\begin{equation}
\label{eq:FourierseriesNthPartialSumUnifConv}
    \|h - S_nh\|_\infty \leq K \frac{ln (n)}{n^p}w(\gamma / n).
\end{equation}
where, $K \in \mathbb{R}$ denotes an universal constant.
\end{theorem}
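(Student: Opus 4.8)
The plan is to prove this classical Jackson-type estimate by combining two independent ingredients — a logarithmic bound on the Lebesgue constant of the Fourier projection and a Jackson-type best-approximation estimate — through Lebesgue's lemma. First I would reduce to the case $\gamma = 2\pi$ by rescaling: setting $g(x) := h(\gamma x/(2\pi))$ produces a $2\pi$-periodic $C^p$ function with $g^{(p)}(x) = (\gamma/2\pi)^p\, h^{(p)}(\gamma x/(2\pi))$, whose $p$-th derivative admits the modulus of continuity $t \mapsto (\gamma/2\pi)^p\, w(\gamma t/(2\pi))$. Since the $n$-th partial sum transforms covariantly under this affine change of variables, $\|h - S_n h\|_\infty = \|g - S_n g\|_\infty$, so the general-$\gamma$ statement follows from the $2\pi$-periodic case once the explicit constants are tracked and absorbed into the universal constant $K$. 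I would therefore work on $\mathbb{T}$ with period $2\pi$ for the remainder.

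Next I would assemble the two ingredients. For Step~1, the partial-sum operator $S_n$ is convolution with the Dirichlet kernel $D_n$, and its norm on $C(\mathbb{T})$ equals the Lebesgue constant $L_n = \frac{1}{2\pi}\int_{-\pi}^{\pi}|D_n(t)|\,dt$. Using $|D_n(t)| = |\sin((n+\tfrac12)t)|/|\sin(t/2)|$, comparing $1/|\sin(t/2)|$ with $2/|t|$ away from the origin, and splitting the integral at scale $1/n$, one obtains the standard bound $L_n \le K_1 \ln n$ for $n \ge 2$. For Step~2, I would establish Jackson's theorem: the best trigonometric approximation $E_n(h) := \inf\{\|h - P\|_\infty : \deg P \le n\}$ satisfies $E_n(h) \le K_2\, n^{-p}\, w(2\pi/n)$. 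The construction convolves $h$ (or, rather, $h^{(p)}$) with a nonnegative Jackson kernel $J_n$ — a normalized even power of the Fejér kernel — whose key property is that $\int J_n = 1$ while its second moment satisfies $\int t^2 J_n(t)\,dt = O(n^{-2})$. Writing $\|f - J_n * f\|_\infty \le \int |f(x-t)-f(x)|\,J_n(t)\,dt$ and controlling the integrand by $w$ at the effective scale $n^{-1}$ via subadditivity of the modulus of continuity gives the $p=0$ estimate; for $p \ge 1$ I would differentiate using $(J_n * h)^{(p)} = J_n * h^{(p)}$ and integrate the zeroth-order bound for $h^{(p)}$ back $p$ times, which produces the additional factor $n^{-p}$.

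Finally, in Step~3 I would invoke Lebesgue's lemma. Let $P^{\ast}$ be a near-best approximating trigonometric polynomial of degree $\le n$. Since $S_n$ is a projection that fixes every trigonometric polynomial of degree $\le n$, we have $S_n P^{\ast} = P^{\ast}$, and hence
\[
\|h - S_n h\|_\infty = \|(h - P^{\ast}) - S_n(h - P^{\ast})\|_\infty \le (1 + L_n)\,\|h - P^{\ast}\|_\infty .
\]
Combining this with Steps~1 and~2 yields $\|h - S_n h\|_\infty \le (1 + K_1\ln n)\,K_2\, n^{-p}\, w(2\pi/n) \le K\,\frac{\ln n}{n^p}\, w(2\pi/n)$ for a universal constant $K$ and all $n \ge 2$; undoing the rescaling of the first paragraph recovers the stated bound with $w(\gamma/n)$.

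The hard part will be Step~2, Jackson's theorem itself: constructing the Jackson kernel with the correct normalization and second-moment bound, and then propagating the estimate cleanly through the $p$ derivatives while keeping the dependence on the modulus of continuity $w$ at the sharp scale $w(\gamma/n)$. By contrast, the Lebesgue-constant computation and Lebesgue's lemma are routine, so the quantitative best-approximation estimate is the genuine technical core of the argument.
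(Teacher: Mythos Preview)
Your proposal is correct and outlines the classical argument --- Lebesgue's lemma combined with the logarithmic Lebesgue-constant bound and Jackson's best-approximation theorem --- which is exactly the content of the cited reference. Note that the paper itself does not prove this theorem at all: it is quoted as a classical result from \cite[p.21 ff.]{JacksonDunham1951Ttoa} and used as a black box, so there is no ``paper's own proof'' to compare against.
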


Next we recall the \emph{Wiener algebra} \cite{katznelson_2004}.
 \begin{definition}
The \emph{Wiener algebra}, denoted by $A(\gamma \mathbb{T})$, is defined as the space of all $f \in L^2(\gamma \mathbb{T})$ with absolutely convergent Fourier series. 
 \end{definition}

The following result  can be found in \cite{katznelson_2004}.
 \begin{lemma}[{{\cite{katznelson_2004}}}]
 \label{remark:WienerAlgebra}
     Let $f \in L^2(\gamma \mathbb{T})$. Suppose that one of the following conditions is fulfilled.
     \begin{itemize}
         \item[$(1)$] $f \in C^1$.
         \item[$(2)$] $f$ belongs to a $\alpha$-Hölder class for $\alpha > 1/2$. 
         \item[$(3)$] $f$ is of bounded variation and belongs to a $\alpha$-Hölder class for some $\alpha > 0$.
     \end{itemize}
     Then, $f \in A(\gamma \mathbb{T})$. Furthermore,
     \begin{equation}
         A(\gamma \mathbb{T}) \subset C(\gamma \mathbb{T}).
     \end{equation}
 \end{lemma}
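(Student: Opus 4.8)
The plan is to unpack the definitions: by construction $f\in A(\gamma\mathbb{T})$ means precisely that $\sum_{n\in\mathbb{Z}}|\hat f(n)|<\infty$, and $A(\gamma\mathbb{T})\subset C(\gamma\mathbb{T})$ will follow immediately from this, since absolute summability of the coefficients makes the Fourier series $\sum_n\hat f(n)e^{2\pi i n(\cdot)/\gamma}$ converge uniformly by the Weierstrass $M$-test; a uniform limit of continuous trigonometric polynomials is continuous, and as this limit agrees with $f$ in $L^2(\gamma\mathbb{T})$ we identify $f$ with its continuous representative. For the three sufficient conditions the unifying device will be a dyadic decomposition of the frequency axis: I set $B_k=\{n\in\mathbb{Z}:2^{k-1}\le|n|<2^k\}$ for $k\ge1$, together with the finite block $\{-1,0,1\}$, so that $\#B_k\le C2^k$, and estimate $\sum_{n\in B_k}|\hat f(n)|\le(\#B_k)^{1/2}\bigl(\sum_{n\in B_k}|\hat f(n)|^2\bigr)^{1/2}$ by the Cauchy--Schwarz inequality. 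Everything then reduces to an $L^2$ bound on each block that decays fast enough to beat the factor $(\#B_k)^{1/2}\sim 2^{k/2}$ and to remain summable in $k$.

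For case $(1)$ I would integrate by parts to obtain $\widehat{f'}(n)=(2\pi i n/\gamma)\hat f(n)$, so that Parseval gives $\sum_n n^2|\hat f(n)|^2=(\gamma/2\pi)^2\|f'\|_{L^2[0,\gamma]}^2<\infty$, using that $f'$ is continuous hence square-integrable. A single application of Cauchy--Schwarz against the convergent series $\sum_{n\ne0}n^{-2}$ then bounds $\sum_{n\ne0}|\hat f(n)|$ by $\bigl(\sum_n n^2|\hat f(n)|^2\bigr)^{1/2}\bigl(\sum_{n\ne0}n^{-2}\bigr)^{1/2}<\infty$, without even invoking the blocks. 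For case $(2)$ (the Bernstein theorem) the block estimate comes from the $L^2$ modulus of continuity: Parseval applied to the increment $f(\cdot+h)-f$ gives $\sum_n 4\sin^2(\pi n h/\gamma)|\hat f(n)|^2=\|f(\cdot+h)-f\|_{L^2}^2\le C|h|^{2\alpha}$, since pointwise $\alpha$-Hölder continuity forces the same $L^2$ bound. Choosing $|h|\sim\gamma2^{-k}$ so that $\sin^2(\pi n h/\gamma)$ is bounded below on $B_k$ yields $\sum_{n\in B_k}|\hat f(n)|^2\le C2^{-2k\alpha}$, and Cauchy--Schwarz over $B_k$ gives $\sum_{n\in B_k}|\hat f(n)|\le C2^{k(1/2-\alpha)}$, whose sum over $k$ converges precisely because $\alpha>1/2$.

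Case $(3)$ is the genuine obstacle, and is exactly where the bounded-variation hypothesis must be used to compensate for $\alpha$ possibly being at most $1/2$, a regime in which the Hölder estimate of case $(2)$ alone diverges. The idea is to sharpen the block $L^2$-bound by feeding two different norms of the increment into the $L^2$ norm: bounded variation with total variation $V(f)$ gives $\|f(\cdot+h)-f\|_{L^1}\le V(f)\,|h|$, while $\alpha$-Hölder continuity gives $\|f(\cdot+h)-f\|_{L^\infty}\le\omega(|h|)\le C|h|^\alpha$, and interpolation yields $\|f(\cdot+h)-f\|_{L^2}^2\le\|f(\cdot+h)-f\|_{L^\infty}\|f(\cdot+h)-f\|_{L^1}\le C\,V(f)\,|h|^{1+\alpha}$. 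Repeating the Parseval-plus-choice-of-$h\sim\gamma2^{-k}$ argument now produces the improved estimate $\sum_{n\in B_k}|\hat f(n)|^2\le C\,V(f)\,2^{-k(1+\alpha)}$, and Cauchy--Schwarz over $B_k$ gives $\sum_{n\in B_k}|\hat f(n)|\le C\,2^{k/2}2^{-k(1+\alpha)/2}=C\,2^{-k\alpha/2}$, which is summable in $k$ for every $\alpha>0$, closing the argument. The delicate point in all three cases is the selection of the shift $h$ for each block so that $\sin^2(\pi n h/\gamma)$ stays bounded away from zero uniformly over $n\in B_k$; this is routine once $|h|$ is taken comparable to $\gamma2^{-k}$, but it is the step linking the dyadic block to the increment estimate, and I expect the main care to lie in case $(3)$, where the two-norm interpolation must be arranged just so to gain the extra factor $|h|^\alpha$ over the pure bounded-variation bound.
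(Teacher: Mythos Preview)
Your argument is correct in all three cases and for the inclusion $A(\gamma\mathbb{T})\subset C(\gamma\mathbb{T})$. The dyadic-block Cauchy--Schwarz scheme you set up is exactly the classical route: case~(1) via integration by parts and Parseval, case~(2) is Bernstein's theorem, and case~(3) is Zygmund's theorem, with the key $L^1$--$L^\infty$ interpolation step $\|f(\cdot+h)-f\|_{L^2}^2\le\|f(\cdot+h)-f\|_{L^\infty}\|f(\cdot+h)-f\|_{L^1}$ correctly identified as the place where bounded variation buys the extra decay. The choice $h\sim\gamma2^{-k}$ does keep $\sin^2(\pi nh/\gamma)$ uniformly bounded below on $B_k$, as you note.

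The paper itself does not prove this lemma at all: it is stated as a quotation from Katznelson's textbook and invoked as a black box in the proof of Theorem~\ref{convRateC1filter}. So there is no ``paper's own proof'' to compare against; what you have written is essentially the standard argument one finds in Katznelson or Zygmund, and it is more than adequate for the purpose. If anything, for the application in the paper only case~(3) is actually used (Lipschitz derivative $\Rightarrow$ $h_{\rm ext}'$ is BV and $1$-H\"older), so you have supplied substantially more detail than the authors required.
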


We can now prove Theorem~\ref{convRateC1filter}.
\begin{proof}[Proof of Theorem~\ref{convRateC1filter}]
 By assumption, the induced graphon $W_{\AA_i}$ satisfies $\|T_{W_{\AA_i}}\|$ $ \leq \Gamma$, where $i \in \{1,2\}$.  Then,  $h(T_{W_{\AA_i}})$ only depends on the values of $h$ in $[-\Gamma, \Gamma]$. Hence, we can extend $h$ to a function $h_{\rm ext}: [-\Gamma - 1, \Gamma + 1] \rightarrow \mathbb{R}$ such that $h_{\rm ext} (x) = h(x)$ for all $x \in [-\Gamma, \Gamma]$ and $h_{\rm ext}$ admits a periodical extension on the whole real line with Lipschitz continuous derivative. By abuse of notation, we denote the periodical extension of $h_{\rm ext}$ by $h_{\rm ext}$.

There exists a constant $L \in \mathbb{R}$ such that the modulus of continuity $w$ for $h_{\rm ext}'$ is given by 
$ w(t)=L t$.
By Theorem~\ref{thm:ApprSpeedFourierSumModCont}, we have
\begin{equation}
    |h_{\rm ext}(x) - (S_nh_{\rm ext})(x)| \leq \tilde{K} \frac{ln (n)}{n^2}
\end{equation}
 for every $n \in \mathbb{N}$, where  $\tilde{K}= K L 2 (\Gamma+1)$ only depends on $h_{\rm ext}$. The Fourier series of $h_{\rm ext}'$ is given by
$$
h_{\rm ext}'(\zeta ) = \sum_{n \in \mathbb{Z}} \hat{h}_{\rm ext}(n)in/(\Gamma + 1) e^{in\zeta /(\Gamma + 1)}
$$
since $h_{\rm ext}'$ is Lipschitz.
Lipschitz continuity implies bounded variation and 1-Hölder continuity. Hence,  Lemma~\ref{remark:WienerAlgebra} leads to
\begin{equation}
\label{eq:FourierSeriesBoundDerivative}
    \sum_{n \in \mathbb{Z}} |\hat{h}_{\rm ext}(n)||n|  = C < \infty.
\end{equation}
Lemma~\ref{thm:linearConvWithFourierSeries}  leads to 
$$
\|h_{\rm ext}(T_{W_{\pi_1(\AA_1)}}) - h_{\rm ext}(T_{W_{\pi_2(\DD_2)}}) \|_{\L2L} \leq C \varepsilon.
$$
By $h_{\rm ext}(0) = 0$, applying Proposition \ref{prop:graphdomain1} finishes the proof.
\end{proof}

\subsection{Proof of Claim \ref{claim:LaplaceAsymptCom}}
\label{Proof:ClaimLaplace}
\begin{proof}
Let $\lambda > 0$. The Paley-Wiener space $PW_{\Delta}(\lambda)$ is finite-dimensional, since $\Delta$ has finitely many eigenvalues in $[-\lambda, \lambda]$ (see Example \ref{ex:LaplaceOperator}). Hence, all norms are equivalent and instead of considering the operator norm topology we consider the strong topology.
For each $n \in \mathbb{N}$, we denote the equidistant partition of the unit interval into $n$ intervals of the same length by $\mathcal{P}_n = \{ I_1, I_2, \ldots, I_n \}$ with  $I_j = [ (j-1)/n, j/n ]$. 
Let $g$ be in the domain of $\Delta$ and $f := P_\Delta(\lambda) g \in PW_\Delta(\lambda)$, i.e.,
$$
f(x) = \sum_{k \in \mathbb{Z}: \, 4 \pi^2 k^2 \leq \lambda}  \langle f, e^{2 i \pi k (\cdot)} \rangle e^{2 i \pi k x}.
$$
Hence, $f$ is a smooth function on $[0,1]$, so
\begin{equation}
    \label{eq1:Proof}
    |f(x) - n \int_{x - 1/2n}^{x + 1/2n} f(y) dy| \xrightarrow{n \to \infty} 0.
\end{equation}
Let $k \in \mathbb{N}$. We have
\begin{equation}
    \label{eq2:Proof}
\max_{i = 1, \ldots, n} |(\Delta^k f)(x_i) - (\Delta_n^k f_n)_i | \xrightarrow{n \to \infty} 0,
\end{equation}
since the discretization of the Laplace operator  with finite difference approximations on equidistant grid points is consistent (see \cite[Chapter 2]{FiniteDifference}).
 Let $h(z) =  z^k \in \mathcal{H}$.
Since  $h(0) = 0$, Proposition \ref{prop:graphdomain1} leads to 
$
h(T_{\AA_n}) = T_{nh(\DD_n)}
$, 
i.e., $T^k_{\AA_n} = T_{n\DD_n^k}$. %
We denote the center of the interval $I_i := [i-1/n, i/n]$ by $x_i$, and calculate 
\begin{equation}
    \begin{aligned}
    T_{n\DD_n^k} (x_i) & = n \left ( \int_0^1 W_{\DD_n^k}(x_i, v) f(v) dv\right )   \\
    & = n \left ( \int_0^1 \sum_{g=1}^n \sum_{l=1}^n (\Delta^k_n)_{g,l} \chi_{I_g}(x_i) \chi_{I_l}(v) f(v) dv \right ) \\
    & = n \left ( \int_0^1  \sum_{l=1}^n (\Delta^k_n)_{i,l} \chi_{I_l}(v) f(v) dv \right ) 
    = n \sum_{l=1}^n (\Delta^k_n)_{i,l} \int_{I_l} f(v) dv.
    \end{aligned}
\end{equation}
This leads to
\begin{equation*}
\begin{aligned}
    \max_{i = 1, \ldots, n} | T_{\AA_n}^kf(x_i) -  (\Delta^k_n f_n)_i | & =
    \max_{ i = 1, \ldots, n} \Big| n \sum_{l=1}^n (\Delta^k_n)_{i,l} \int_{I_l} f(v) dv  - (\Delta^k_n)_{i,l} (f_n)_l  \Big| 
    \\
    & = \max_{ i = 1, \ldots, n}  \Big|   \sum_{l=1}^n (\Delta^k_n)_{i,l} \left(  n \int_{I_l} f(v) dv  -  (f_n)_l \right)  \Big|  \\
    & = \max_{ i = 1, \ldots, n}  \Big|   \sum_{l=1}^n (\Delta^k_n)_{i,l} \left (  n \int_{x_l - 1/2n}^{x_l + 1/2n} f(v) dv  -  f(x_l) \right ) \Big |.  
\end{aligned}
\end{equation*}
For the fixed $k \in \mathbb{N}$, the norms of the matrices $\Delta_n^k$ are uniformly bounded with bound independent of $n$. Hence, by (\ref{eq1:Proof}), we get 
\begin{equation}
\label{eq3:Proof}
    \max_{i = 1, \ldots, n} | T_{\AA_n}^kf(x_i) -  (\Delta^k_n f_n)_i | \xrightarrow{n \to \infty} 0.
\end{equation}
The convergence results (\ref{eq2:Proof}) and (\ref{eq3:Proof}) lead to
\begin{equation}
\label{eq4:Proof}
    \begin{aligned}
    & \max_{i = 1, \ldots, n} | \left( T_{\AA_n}^k f (x_i) - \Delta^k f(x_i)  \right) | \\
    & \leq \max_{i = 1, \ldots, n} | T_{\AA_n}^k f (x_i) - (\Delta_n^k f_n)_i  | + | (\Delta_n^k f_n)_i - \Delta^k f (x_i) | 
     \xrightarrow{ n \to \infty } 0.
    \end{aligned}
\end{equation}

Let $\varepsilon > 0$. Since $\Delta^k f$ is smooth and $\sum_{i=1}^n \chi_{I_i}(\cdot) \Delta^k f(x_i)$ is piecewise constant on the compact set $[0,1]$, there exists $j \in \mathbb{N}$ and $x'_n \in I_j = [x_j - 1/2n, x_j + 1/2n ]$ such that 
$$
\| \sum_{i=1}^n \chi_{I_i}(\cdot) \Delta^k f(x_i) - \Delta^k f\|_{L^\infty([0,1])} = |  \Delta^k f(x_j) - \Delta^k f(x'_n)|.
$$
Since $\Delta^k f $ is Lipschitz continuous, it follows
$$
|  \Delta^k f(x_j) - \Delta^k f(x'_n)| \leq  C |x_{j_n} - x'_{n}| \xrightarrow{n \to \infty} 0.
$$
Hence, we can choose  $n \in \mathbb{N}$ large enough such that
\begin{equation}
\| \sum_{i=1}^n \chi_{I_i}(\cdot) \Delta^k f(x_i) - \Delta^k f\|_{L^2[0,1]} < \varepsilon,
\end{equation}
and together with (\ref{eq4:Proof})
\begin{equation}
    \max_{i = 1, \ldots, n} |T_{\AA_n}^k f(x_i) - \Delta^k f (x_i)| < \varepsilon.
\end{equation}
As a result,
\begin{equation*}
\begin{aligned}
\| T_{\AA_n}^k f - \Delta^k f\|_{L^2[0,1]} &  = \| \sum_{i=1}^n \chi_{I_i}(\cdot)  T_{\AA_n}^k f (x_i) - \Delta^k f\|_{L^2[0,1]} \\
& \leq 
\|\sum_{i=1}^n \chi_{I_i}(\cdot)  T_{\AA_n}^k f (x_i) -  \sum_{i=1}^n \chi_{I_i}(\cdot) \Delta^k f(x_i) \|_{L^2[0,1]} \\ &
+
\| \sum_{i=1}^n \chi_{I_i}(\cdot) \Delta^k f(x_i) - \Delta^k f\|_{L^2[0,1]} \\
& \leq \|\sum_{i=1}^n \chi_{I_i}(\cdot)  T_{\AA_n}^k f (x_i) -  \sum_{i=1}^n \chi_{I_i}(\cdot) \Delta^k f(x_i) \|_{L^\infty([0,1])}
\\ & +
\| \sum_{i=1}^n \chi_{I_i}(\cdot) \Delta^k f(x_i) - \Delta^k f\|_{L^2[0,1]} \\
& =  \max_{i = 1, \ldots, n} |T_{\AA_n}^k f(x_i) - \Delta^k f (x_i)|
\\ & +
\| \sum_{i=1}^n \chi_{I_i}(\cdot) \Delta^k f(x_i) - \Delta^k P_\mathcal{L}(\lambda) f\|_{L^2[0,1]}
< 2 \varepsilon,
\end{aligned}    
\end{equation*}
i.e.,
\begin{equation*}
   \| P_\Delta(\lambda)T_{\AA_n}^k P_\Delta(\lambda)g - \Delta^k P_\Delta(\lambda) g \|_{L^2[0,1]} \leq \| T_{\AA_n}^k P_\Delta(\lambda)g - \Delta^k P_\Delta(\lambda)g\|_{L^2[0,1]} \xrightarrow{n \to \infty} 0.
\end{equation*}
Since $(G_n)_n \xrightarrow[n\to\infty]{U} \Delta$, Lemma \ref{thm:TransfUnbddL} leads to 
\begin{equation*}
\begin{aligned}
& \| P_\Delta(\lambda)T_{\AA_n}^k P_\Delta(\lambda)g - \Big( P_\Delta(\lambda)T_{\AA_n} P_\Delta(\lambda) \Big)^k g \| \\ & \leq
\| P_\Delta(\lambda)T_{\AA_n}^k P_\Delta(\lambda)g - \Delta^k P_\Delta(\lambda) g \|_{L^2[0,1]} \\ & + \| \Delta^k P_\Delta(\lambda) g -\Big( P_\Delta(\lambda)T_{\AA_n} P_\Delta(\lambda) \Big)^k g \|_{L^2[0,1]} 
\xrightarrow{n \to \infty} 0,
\end{aligned}
\end{equation*}
finishing the proof.
\end{proof}  

\subsection{Proof of Lemma~\ref{cl:asympTransferabilityGCNNon}}
\label{proof:asympTGCNNon}
For $l = 1$ and $j = 1, \ldots, F_1$, we have
\begin{equation}
\label{eq:GCNNoncalclayer}
\begin{aligned}
& \| (\tilde{\phi}^1_{W_{\pi_n(\AA_n)}}(y_n))^j  - (\tilde{\phi}^1_{W}(y))^j  \|_{L^2[0,1]} \\
& = \| \sum_{k=1}^{F_0} m_1^{jk}h_1^{jk}(T_{W_{\pi_n(\AA_n)}})(y_n^k) - \sum_{k=1}^{F_0} m_1^{jk}h_1^{jk}(T_W)(y^k)  \|_{L^2[0,1]} \\
& \leq     \sum_{k=1}^{F_0}  \| m_1^{jk}h_1^{jk}(T_{W_{\pi_n(\AA_n)}})(y_n^k) - m_1^{jk}h_1^{jk}(T_W)(y_n^k) \|_{L^2[0,1]} \\ & + \sum_{k=1}^{F_0} \|m_1^{jk} h_1^{jk}(T_W)(y_n^k) - m_1^{jk}h_1^{jk}(T_W)(y^k)  \|_{L^2[0,1]} 
\\
& \leq     \sum_{k=1}^{F_0}   |m_1^{jk}|\|h_1^{jk}(T_{W_{\pi_n(\AA_n)}})(y_n^k) - h_1^{jk}(T_W)(y_n^k) \|_{L^2[0,1]} \\ & + \sum_{k=1}^{F_0} |m_1^{jk}| \| h_1^{jk}(T_W)(y_n^k) - h_1^{jk}(T_W)(y^k)  \|_{L^2[0,1]} 
\end{aligned}
\end{equation}
By Lemma~\ref{claim:ConvOfGrFltBdd} and by (\ref{eq1:cl:asym}), every summand in the-right-hand-side of (\ref{eq:GCNNoncalclayer}) converges to zero as $n \rightarrow \infty$. Since $\rho: \mathbb{R} \rightarrow \mathbb{R}$ is Lipschitz continuous, we have
\begin{equation*}
\begin{aligned}
       & \int_0^1 |\rho((\tilde{\phi}^1_{W_{\pi_n(\AA_n)}}(y_n))^j (x) )- \rho((\tilde{\phi}^1_{W}(y))^j (x))|^2dx  \\ & \leq \Lambda^2 \int_0^1 |(\tilde{\phi}^1_{W_{\pi_n(\AA_n)}} (y_n))^j - (\tilde{\phi}^1_{W}(y))^j |^2dx ,
\end{aligned}
\end{equation*}
where  $\Lambda \in \mathbb{R}$ is the Lipschitz constant of $\rho$.  Hence, for every feature $j = 1, \ldots,  F_1$ , we have $\|(\phi^1_{W_{\pi_n(\AA_n)}}(y_n))^j  - (\phi^1_{W}(y))^j  \|_{L^2[0,1]} \xrightarrow{n \to \infty} 0$. 
By induction over the layers, we have
\begin{equation*}
 \|(\phi^L_{W_{\pi_n(\AA_n)}}(y_n))^j  - (\phi^L_{W}(\tilde{\phi}^{L-1}_{W}(y)))^j \|_{L^2[0,1]} \xrightarrow{n\to\infty} 0
\end{equation*}
for all features $j \in \{1,\ldots,F_L\}$. 

\subsection{Proof of Lemma~\ref{claim:ConvOfGrNN}}
\label{proof:LinConvGCNN}
 For every $l= 1, \ldots, L$, let $C_l^{jk}$ denote the constant $C$ of (\ref{eq:LinConv}) corresponding to the filter $h=h_l^{jk}$, for $j = 1, \ldots, F_l$, $k = 1, \ldots, F_{l-1}$.  Let $C=\max_{l,j,k} C_l^{jk}$, where the maximum is over $l = 1,\ldots, L$, $j = 1, \ldots, F_l$, and $k = 1, \ldots, F_{l-1}$.
Further, let $M = \max_{l = 1, \ldots,L} \|\mathbf{M}_l\|_\infty$.

For $j = 1, \ldots, F_L$, we have
\begin{equation}
\label{eq:estimForLayerOutput}
\begin{aligned}
& \| (\tilde{\phi}^L_{W_{\pi(\AA)}}(\psi_\mathbf{x}))^j - (\tilde{\phi}^L_{W} (y) )^j \|_{L^2[0,1]} \\
 & \leq \sum_{k=1}^{F_{L-1}}  \| m_L^{jk}h_L^{jk}(T_{W_{\pi(\AA)}})(\phi^{L-1}_{W_{\AA}}(\psi_\mathbf{x})^k) - m_L^{jk}h_L^{jk}(T_W)(\phi^{L-1}_{W_{\AA}}(\psi_\mathbf{x})^k) \|_{L^2[0,1]} \\ & + \sum_{k=1}^{F_{L-1}} \| m_L^{jk} h_L^{jk}(T_W)(\phi^{L-1}_{W_{\AA}}(\psi_\mathbf{x})^k) - m_L^{jk}h_L^{jk}(T_W)(\phi^{L-1}_{W}(y)^k)  \|_{L^2[0,1]} .
\end{aligned}    
\end{equation}
For the first summand on the right hand side of~(\ref{eq:estimForLayerOutput}), we have
\begin{equation}
\label{eq2:ProofOfGCNNLinConv}
\begin{aligned}
 & \sum_{k=1}^{F_{L-1}}  \| m_L^{jk}h_L^{jk}(T_{W_{\pi(\AA)}})(\phi^{L-1}_{W_{\AA}}(\psi_\mathbf{x})^k) - m_L^{jk}h_L^{jk}(T_W)(\phi^{L-1}_{W_{\AA}}(\psi_\mathbf{x})^k) \|_{L^2[0,1]} \\
 & \leq \sum_{k=1}^{F_{L-1}} |m_L^{jk}| \| h_L^{jk}(T_{W_{\pi(\AA)}})- h_L^{jk}(T_W) \| \|\phi^{L-1}_{W_{\AA}}(\psi_\mathbf{x})^k \|_{L^2[0,1]} \\
 &  \leq C \varepsilon \sum_{k=1}^{F_{L-1}} |m_L^{jk}|  \|\phi^{L-1}_{W_{\AA}}(\psi_\mathbf{x})^k \|_{L^2[0,1]} \leq \varepsilon C M \max_{k = 1, \ldots, F_{L-1}} \|\phi^{L-1}_{W_{\AA}}(\psi_\mathbf{x})^k\|_{L^2[0,1]}.
\end{aligned}
\end{equation}
For the second summand in~(\ref{eq:estimForLayerOutput}), we calculate
\begin{equation}
\label{eq1:ProofOfGCNNLinConv}
\begin{aligned}
 & \sum_{k=1}^{F_{L-1}} \| m_L^{jk}h_L^{jk}(T_W)(\phi^{L-1}_{W_{\AA}}(\psi_\mathbf{x})^k) - m_L^{jk}h_L^{jk}(T_W)(\phi^{L-1}_{W}(y)^k )  \|_{L^2[0,1]}  \\
  &  \leq \sum_{k=1}^{F_{L-1}} \| m_L^{jk}h_L^{jk}(T_W) \| \|\phi^{L-1}_{W_{\AA}}(\psi_\mathbf{x})^k - \phi^{L-1}_{W}(y)^k  \|_{L^2[0,1]}  \\
  & \leq \sum_{k=1}^{F_{L-1}} | m_L^{jk}| \|h_L^{jk} \|_{\infty} \|\phi^{L-1}_{W_{\AA}}(\psi_\mathbf{x})^k - \phi^{L-1}_{W}(y)^k  \|_{L^2[0,1]}  \leq M \max_k \mathcal{R}_{L-1}^k,
\end{aligned}
\end{equation}
where $\mathcal{R}_{L-1}^k := \|\phi^{L-1}_{W_{\AA}}(\psi_\mathbf{x})^k - \phi^{L-1}_{W}(y)^k  \|_{L^2[0,1]}$.
Together, (\ref{eq1:ProofOfGCNNLinConv}) and (\ref{eq2:ProofOfGCNNLinConv}) lead to
$$
\begin{aligned}
& \| (\tilde{\phi}^L_{W_{\pi(\pi(\AA))}} (\psi_\mathbf{x}))^j - (\tilde{\phi}^L_{W}(y) )^j  \|_{L^2[0,1]} \\ & \leq M  \max_{k = 1, \ldots, F_{L-1}}\mathcal{R}_{L-1}^k   + C \varepsilon M \max_{k = 1, \ldots, F_{L-1}} \|\phi^{L-1}_{W_{\AA}}(\psi_\mathbf{x})^k\|_{L^2[0,1]}
\end{aligned}
$$
for every $j = 1, \ldots, F_L$. Then,
\begin{equation}
\label{eq:RecRel}
\begin{aligned}
& \| (\phi^L_{W_{\pi(\pi(\AA))}})^j (\psi_\mathbf{x}) - (\phi^L_{W})^j (y)   \|_{L^2[0,1]}  \\ & \leq  M  \max_{k = 1, \ldots, F_{L-1}}\mathcal{R}_{L-1}^k   + C\varepsilon M \max_{k = 1, \ldots, F_{L-1}} \|\phi^{L-1}_{W_{\AA}}(\psi_\mathbf{x})^k\|_{L^2[0,1]},
\end{aligned}
\end{equation}
since the activation function is contractive.
We solve the recurrence relation in (\ref{eq:RecRel}). For this, we estimate $\max_{k = 1, \ldots, F_l} \|\phi^l_{W_{\AA}}(\psi_\mathbf{x})^k\|_{L^2[0,1]}$ for $l = 1, \ldots, L$. 
Now, for $k = 1, \ldots, F_l$,
\[
\begin{aligned}
\|\phi^l_{W_{\AA}}(\psi_\mathbf{x})^k\|_{L^2[0,1]} & = 
\sum_{k'=1}^{F_{l-1}}  \| m_l^{jk'}h_l^{jk'}(T_{W_{\pi(\AA)}})(\phi^{l-1}_{W_{\AA}}(\psi_\mathbf{x})^{k'}) \|_{L^2[0,1]} \\
& \leq M  \max_{k' = 1, \ldots, F_{l-1}} \|\phi^{l-1}_{W_{\AA}}(\psi_\mathbf{x})^{k'}\|_{L^2[0,1]}.
\end{aligned}
\]
Hence,
\[
\|\phi^L_{W_{\AA}}(\psi_\mathbf{x})^k\|_{L^2[0,1]} \leq  M^L \max_{k' = 1, \ldots, F_{l-1}}\|\psi_\mathbf{x}^{k'}\|_{L^2[0,1]}. 
\]
This leads to
\[
\begin{aligned}
 & \| (\phi^L_{W_{\pi(\pi(\AA))}} (\psi_\mathbf{x}) )^j- (\phi^L_{W} (y) )^j  \|_{L^2[0,1]} \leq M^L(1 + L C) \max_{k = 1, \ldots, F_0} \| \psi_\mathbf{x}^k - y^k \|_{L^2[0,1]}.
\end{aligned}
\]

\section*{Acknowledgments}

S.M acknowledges partial support by the NSF–Simons Research Collaboration on the Mathematical and Scientific Foundations of Deep Learning (MoDL) (NSF DMS
2031985) and DFG SPP 1798, KU 1446/27-2.

R.L. acknowledges support by the DFG SPP 1798, KU 1446/21-2 “Compressed Sensing in Information Processing” through Project Massive MIMO-II.

G.K. acknowledges partial support by the NSF–Simons Research Collaboration on the Mathematical and Scientific Foundations of Deep Learning (MoDL) (NSF DMS
2031985) and DFG SPP 1798, KU 1446/27-2 and KU 1446/21-2. 
\bibliographystyle{siam}
\bibliography{references}
\end{document}